\documentclass[letterpaper,11pt,reqno]{amsart}

\makeatletter
\usepackage{amssymb}
\usepackage{latexsym}
\usepackage{amsbsy}
\usepackage{amsfonts}
\usepackage{comment}
\usepackage{hyperref}
\usepackage{graphicx}
\usepackage{etoc}
\usepackage{parskip}
\usepackage{scalerel}
\usepackage{enumerate}
\usepackage{enumitem}
\usepackage{mathtools}
\usepackage{color}
\usepackage{tcolorbox}
\usepackage{booktabs}
\usepackage{subcaption}
\usepackage{appendix}
\usepackage{algorithm}
\usepackage{algpseudocode}
\usepackage{caption}

\usepackage{tikz}
\usetikzlibrary{arrows.meta, positioning}

\def\marginpar#1{\ignorespaces}

\DeclareMathOperator\argmin{\arg \min}

\newtheorem{theorem}{Theorem}[section]
\newtheorem{lemma}[theorem]{Lemma}
\newtheorem{proposition}[theorem]{Proposition}

\newtheorem{corollary}[theorem]{Corollary}

\numberwithin{equation}{section}
\theoremstyle{remark}
\newtheorem{remark}{Remark}[section]
\makeatother
\begin{document}
\title[]{Improved techniques for fine-tuning flow models via adjoint matching: a deterministic control pipeline}

\author[Zhengyi Guo]{{Zhengyi} Guo}
\address{Department of Industrial Engineering and Operations Research, Columbia University. 
} \email{zg2525@columbia.edu}

\author[Jiayuan Sheng]{{Jiayuan} Sheng}
\address{Department of Industrial Engineering and Operations Research, Columbia University. 
} \email{js6646@columbia.edu}

\author[David D. Yao]{{David D.} Yao}
\address{Department of Industrial Engineering and Operations Research, Columbia University. 
} \email{ddy1@columbia.edu}

\author[Wenpin Tang]{{Wenpin} Tang}
\address{Department of Industrial Engineering and Operations Research, Columbia University. 
} \email{wt2319@columbia.edu}

\date{\today} 
\begin{abstract}
We propose a deterministic adjoint matching framework that formulates human preference alignment for flow-based generative models as an optimal control problem over velocity fields. One can directly regress the control toward a value-gradient-induced target under the current policy, leading to a simple and stable training objective. Building on this perspective, we introduce a truncated adjoint scheme that focuses computation on the terminal portion of the trajectory, where reward-relevant signals concentrate, which yields substantial computational savings while preserving alignment quality. We further generalize the framework beyond standard KL-based regularization, allowing more flexible trade-offs between alignment strength and distributional preservation. Experiments on SiT-XL/2 and FLUX.2-Klein-4B demonstrate consistent gains across multiple alignment metrics, along with substantially improved diversity and mode preservation.
\end{abstract}
\maketitle

\begin{figure}[h]
\centering
\setlength{\tabcolsep}{1pt}
\begin{tabular}{@{}ccc@{\hspace{30pt}}ccc@{}}
\multicolumn{3}{c}{\textbf{FLUX.2-Klein-4B}} &
\multicolumn{3}{c}{\textbf{2nd ODE-AM-3 (Fine-tuning)}} \\

\includegraphics[width=0.13\textwidth]{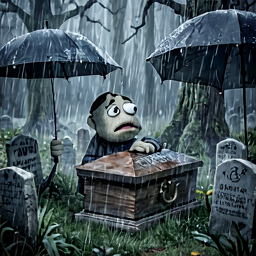} &
\includegraphics[width=0.13\textwidth]{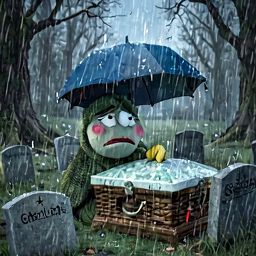} &
\includegraphics[width=0.13\textwidth]{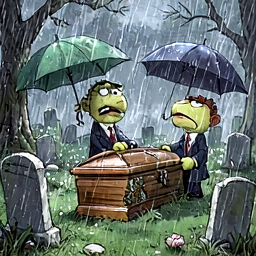} &
\includegraphics[width=0.13\textwidth]{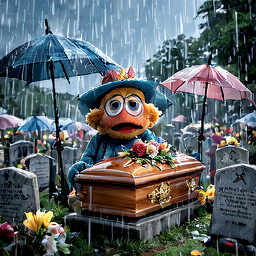} &
\includegraphics[width=0.13\textwidth]{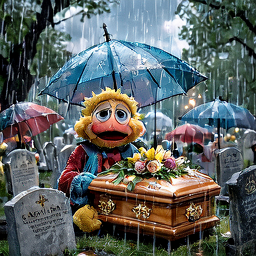} &
\includegraphics[width=0.13\textwidth]{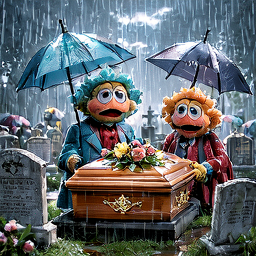} \\

\includegraphics[width=0.13\textwidth]{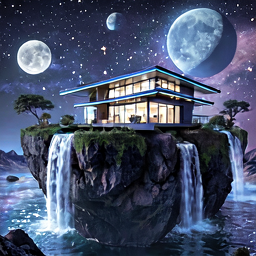} &
\includegraphics[width=0.13\textwidth]{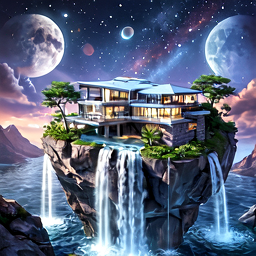} &
\includegraphics[width=0.13\textwidth]{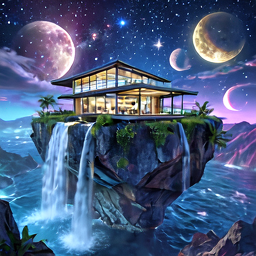} &
\includegraphics[width=0.13\textwidth]{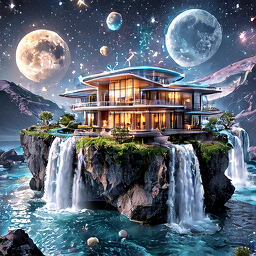} &
\includegraphics[width=0.13\textwidth]{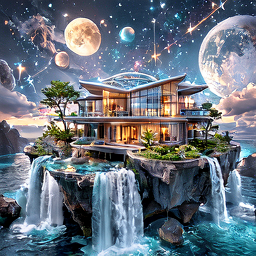} &
\includegraphics[width=0.13\textwidth]{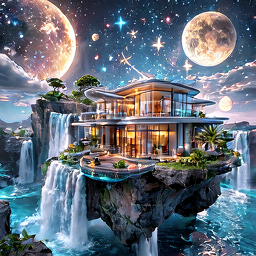} \\

\includegraphics[width=0.13\textwidth]{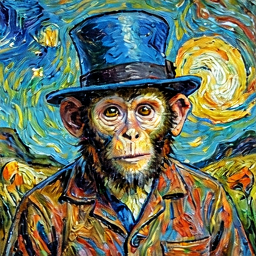} &
\includegraphics[width=0.13\textwidth]{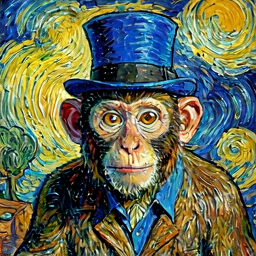} &
\includegraphics[width=0.13\textwidth]{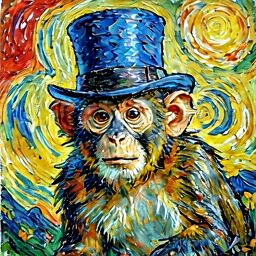} &
\includegraphics[width=0.13\textwidth]{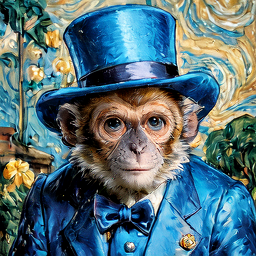} &
\includegraphics[width=0.13\textwidth]{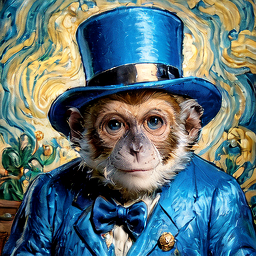} &
\includegraphics[width=0.13\textwidth]{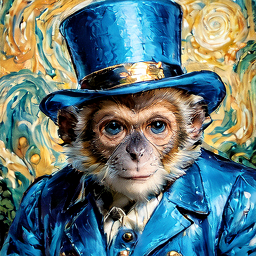} \\
\end{tabular}

\captionsetup{width=1\textwidth}
\caption{ 
Prompts are 
\textit{``a sad muppet funeral in a rainy graveyard''};
\textit{``a futuristic house on a floating island with waterfalls and moons''}; 
\textit{``a monkey in a blue top hat painted in oil by Vincent van Gogh''}.
}
\end{figure}

\section{Introduction}

Flow matching models \cite{albergo2025stochastic, lipman2023, liu2022}
are a class of generative models that train neural networks to predict a velocity field.
This velocity field, described by an ordinary differential equation (ODE), enables the transformation of simple probability distributions into complex data distributions along straighter, continuous-time trajectories.
Due to their training and sampling efficiency,
flow matching models have seen immense success in large-scale,
high-fidelity image/video generation,
powering state-of-the-art models such as Stable Diffusion v3 \cite{esser2024scaling}, Flux.2 \cite{flux-2-2025} 
and WAN \cite{wan2025wanopen}.

It is known that pretrained base models
often fail to produce samples that align with human preferences for quality and prompt adherence \cite{WZ25}.
This is observed even for state-of-the-art flow-based models
such as Flux.2; see Figure \ref{fig:planeint}.
To align a base model with desired outputs without compromising its foundational strengths, 
reward-based post-training is necessary.
\begin{figure}[h]
\centering
\setlength{\tabcolsep}{2pt}

\begin{tabular}{@{}c@{\hspace{20pt}}cc@{\hspace{20pt}}cc@{}}

\includegraphics[width=0.15\textwidth]{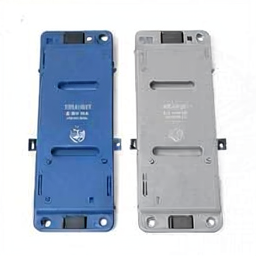} &
\includegraphics[width=0.15\textwidth]{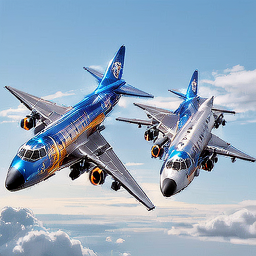} &
\includegraphics[width=0.15\textwidth]{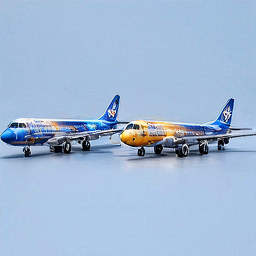} &
\includegraphics[width=0.15\textwidth]{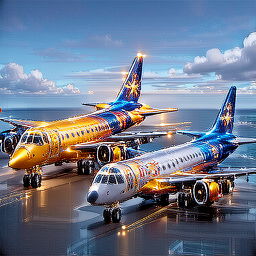} &
\includegraphics[width=0.15\textwidth]{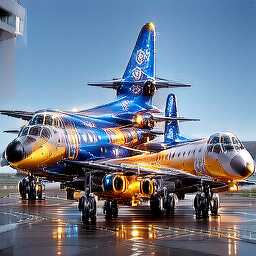} \\

\includegraphics[width=0.15\textwidth]{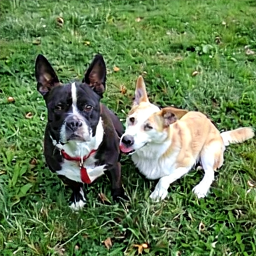} &
\includegraphics[width=0.15\textwidth]{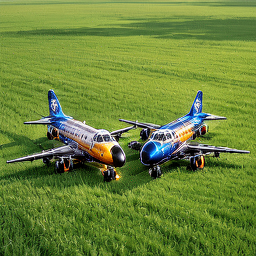} &
\includegraphics[width=0.15\textwidth]{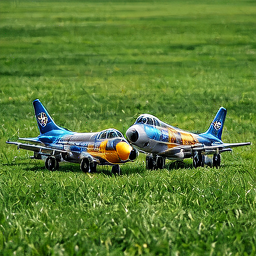} &
\includegraphics[width=0.15\textwidth]{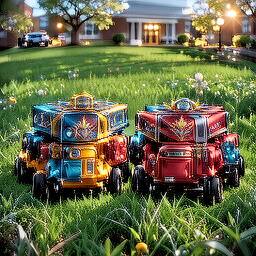} &
\includegraphics[width=0.15\textwidth]{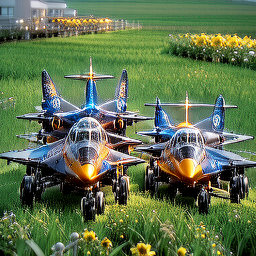} \\
\end{tabular}

\captionsetup{width=1\textwidth}
\caption{Prompts are \textit{``Two planes are placed next to each other.''} and \textit{``Two planes sit together on the grass.''}
From left to right: (1) Base, (2) 4th ODE-AM-1, (3) 2nd ODE-AM-3, (4) DRaFT-1, and (5) ReFL-5 on FLUX.2-Klein-4B.}
\label{fig:planeint}
\end{figure}

Reinforcement learning from human feedback (RLHF), a reward-based post-training technique, was first applied to fine-tune LLMs \cite{ouyang2022rlhf}
and has since been extended to diffusion model post-training \cite{black2024ddpo, Fan23}.
It encompasses RL with zeroth-order reward such as DDPO \cite{black2024ddpo}, DPOK \cite{fan2023dpok} and MixGRPO \cite{li2026mixgrpo};
direct reward-feedback methods such as ReFL \cite{xu2023imagereward} and DRaFT \cite{clark2024draft};
and diffusion-DPO \cite{wallace2023diffusiondpo}. 
In another direction,
stochastic control approaches were developed to fine-tune diffusion models 
\cite{domingoenrich2025, Tang24, UZ24}.
Typically, the objective is to maximize
\begin{equation}
\label{eq:objsc}
\mbox{reward(post-output)} - \beta D_{KL}(\mbox{post-output}, \mbox{pre-output}),
\end{equation}
where post/pre-output is the post/pre-trained generation.
One advantage of \eqref{eq:objsc} is that it has a closed-form solution,
which is an exponential tilting of the pre-output.
The goal is to sample this exponential-tilted distribution by solving a stochastic control problem on the pretrained base model; see \cite{Ueh24} for a review. 

Recently, a series of works \cite{domingoenrich2025, havens2025adjoint, liu2025adjoint} proposed to sample a target distribution via
{\em adjoint matching} in the context of stochastic control.
The idea is to align the control field with a target reward by solving an adjoint ODE backward in time, 
which has its root in Pontryagin's maximum principle \cite{Pontryagin1962, YZ99}.
However, there are several drawbacks:
(1) the objective of sampling the exponential-tilted distribution seems to be restrictive for diffusion post-training;
(2) solving the adjoint ODE can be computationally expensive,
especially for large foundation models 
where accurate adjoint integration requires finer time discretization.

The purpose of this paper is to improve upon adjoint matching techniques for flow models, enhancing both training flexibility and efficiency.
The contributions of this paper are as follows:

$\bullet$ {\bf Deterministic control pipeline}: 
As mentioned, most leading-edge base models have shifted to flow-based architectures.
It is more natural to work within a deterministic control framework that aligns with the base model's inherently efficient sampling scheme, 
though flow matching models do support stochastic samplers \cite{Gao2024, sheng2025understanding, zhang2023gddim}.
Here, we formulate post-training as a deterministic control problem
that learns a velocity perturbation on top of a pretrained base model.
We point out that 
adjoint matching under this deterministic control setting does not generate exactly the exponential-tilted distribution (i.e., the solution to \eqref{eq:objsc}),
which requires a stochastic sampler with memoryless noise schedule \cite{domingoenrich2025}.
Nevertheless, it bypasses extremely large noise in the initial sampling phase with memoryless noise schedule, 
leading to much faster convergence.

$\bullet$ {\bf Beyond KL regularization}: 
KL regularization constrains the fine-tuned model to remain close to the base model, mitigating reward hacking.
In the stochastic control setting, 
it translates into a running cost equal to the squared norm of the control field.
While most prior works \cite{domingoenrich2025, fan2023dpok, liu2026vgg, ZZT24b}
adopted KL regularization in diffusion model alignment,
recent studies \cite{Laid25, Tang24, WJ24} suggest that alternative regularizers may offer superior performance. 
Here, we propose a regularization term defined as an increasing function of the control norm.
Particular examples include polynomial norms of the control field,
which offer greater flexibility in fine-tuning flow-based models. 

$\bullet$ {\bf Adjoint matching with practical acceleration}: 
We derive simple adjoint matching objectives under the deterministic control setting.
However, solving the adjoint ODE is computationally expensive. 
Here, we introduce a truncated adjoint scheme that allocates computation to the most influential terminal steps, 
following the observations in \cite{domingoenrich2025, lee2026rsm, xu2023imagereward}.
This truncated variant enhances the scalability of adjoint matching for large models:
it yields a substantial speedup from 345s to 32s per update 
on FLUX.2-Klein,
and meanwhile, the image quality is significantly improved with respect to the base model generation.
Theoretically, we illustrate with a few examples where the control norm becomes large toward the terminal steps,
thus justifying the truncated scheme.

$\bullet$ {\bf Empirical performance}: 
We show that our approach achieves a stronger balance 
between various reward improvements and distributional preservation on two pretrained backbones SiT-XL/2 and FLUX.2-Klein.
On top of improving reward-related metrics such as Aesthetic Score, HPSv2, ImageReward, and PickScore, 
it also maintains diversity and reduces mode collapse as reflected by LPIPS, $\mathrm{MS\text{-}SSIM}$, Coverage, and Recall.

Closely related to our paper is the work \cite{liu2026vgg},
where deterministic control was applied to fine-tune flow matching models.
Different from our approach,
they learn the value gradient by directly solving the corresponding partial differential equation.

{\bf Related Works}: 
Flow Matching~\cite{lipman2023} trains a neural network to match a continuous-time velocity field, and Rectified Flow~\cite{liu2022} learns straight transport paths between noise and data, enabling efficient ODE-based sampling. Early large-scale image generation systems mostly relied on U-Net \cite{ronneberger2015unet} backbones, particularly in latent diffusion models(LDMs) and the Stable Diffusion \cite{Rombach_2022_CVPR}. More recently, transformer-based architectures have replaced U-Nets in high-capacity generative models. Diffusion Transformers~\cite{peebles2023DiT} introduced ViT \cite{dosovitskiy2021ViT}-style backbones for diffusion models, while SiT~\cite{ma2024sit} studied scalable interpolant transformers under both diffusion and flow-based formulations. Modern text-to-image systems such as Stable Diffusion 3~\cite{esser2024scaling} and FLUX \cite{flux-2-2025} further combine rectified-flow-style objectives with transformer architectures, showing the scalability of flow-based modeling.

RLHF has recently been extended from LLMs to diffusion-based generative models. 
DDPO~\cite{black2024ddpo} and DPOK~\cite{fan2023dpok} cast the generation process as a stochastic policy and optimizes it via policy gradient methods.  
Continuous RL approach was developed in \cite{ZZT24, ZZT24b}
to fine-tune diffusion models by treating score matching as action.
More recently, GRPO~\cite{li2026mixgrpo, liu2025flowgrpo, xue2025dancegrpo}-style methods extend group-relative preference optimization to diffusion models. Despite their flexibility, RL-based approaches often suffer from high variance in gradient estimation and require careful tuning of reward scaling and KL constraints. These challenges become more pronounced in large-scale text-to-image models, where long sampling trajectories amplify optimization instability.

Optimal control provides a principled framework for steering dynamical systems toward desired objectives under trajectory-level costs. 
More recently, it has been used in generative modeling and sampling \cite{berner2024oc, domingoenrich2025, HRX25, liu2026vgg, Tang24, uehara2024, zhang2024improvinggflownets}.
These approaches provide a more principled alternative to reinforcement learning and direct reward optimization, but often suffer from high computational cost due to the need for solving auxiliary equations.

{\bf Organization of the paper}: The rest of the paper is organized as follows. Section~\ref{sec:prelim} reviews flow models and formulates our fine-tuning problem. Section~\ref{sec:doc} derives our adaptive deterministic-control pipeline. Section~\ref{sec4} provides theoretical examples that justify its efficiency. Section~\ref{sec:experiments} demonstrates the robust empirical performance of our method. Section~\ref{sec:conclude} concludes.

\section{Preliminaries of flow matching models}\label{sec:prelim}

{\bf Flow-based generative models}. 
We present flow matching models, following \cite{lipman2023,liu2022}.
Let $X_0\sim p_0=\mathcal N(0,I)$ and
$X_1\sim p_{\mathrm{data}}$. 
Define the reference flow
\begin{equation}
\label{ref_flow}
    \bar X_t=\beta_t X_0+\alpha_t X_1,\qquad t\in[0,1],
\end{equation}
where $\alpha_0=\beta_1=0$, $\alpha_1=\beta_0=1$, and $\alpha_t', \beta_t'$ denote the time derivatives of $\alpha_t, \beta_t$. 
Let
$p_t$ be the marginal probability distribution of $\bar X_t$. The
velocity field is
$ v_t^\star(x):=
    \mathbb E\!\left[
        \beta_t' X_0+\alpha_t' X_1
        \,\middle|\,
        \bar X_t=x
    \right]$,
which is learned by solving
$v^{\mathrm{base}}:=\argmin_v
    \mathbb E_{t,X_0,X_1}
    \left[
        \left\|
        v(\bar X_t,t)-(\beta_t' X_0+\alpha_t' X_1)
        \right\|^2
    \right]$.
Then the flow-based sampler is the probability-flow ODE:
\begin{equation}
\label{eq:fm_ode}
    dX_t=v^{\mathrm{base}}(X_t,t)\,dt.
\end{equation}
When $v^{\mathrm{base}}=v^\star$, the solution to
\eqref{eq:fm_ode} has marginals $X_t\sim p_t$ for all $t\in[0,1]$, and hence
$X_1\sim p_{\mathrm{data}}$. 
The same marginal curve can also be generated by a family of stochastic dynamics:
\begin{equation}
\label{eq:fm_general_sde_score}
    dX_t
    =
    \left(
        v_t^\star(X_t)
        +
        \frac{\sigma^2(t)}{2}\nabla\log p_t(X_t)
    \right)dt
    +
    \sigma(t)dB_t,
\end{equation}
for any state-independent diffusion schedule $\sigma(t)$
(see e.g., \cite[Section 5]{TZ24tut}).

{\bf Unified view of different parameterizations}.
Flow matching and diffusion models can be understood from a common marginal-preserving
perspective. 
Denote
\[
    s_t(x):=\nabla \log p_t(x),\qquad
    \kappa_t:=\frac{\alpha_t'}{\alpha_t},\qquad
    \eta_t:=\beta_t\left(\frac{\alpha_t'}{\alpha_t}\beta_t-\beta_t'\right).
\]
By \cite[Proposition 6.3.1]{lai2025principles}, for the affine flow \eqref{ref_flow}
with Gaussian initial noise, the velocity and score are related by
$
    v_t^\star(x)
    =
    \kappa_t x+\eta_t s_t(x).
$
Equivalently, when $\eta_t\neq 0$,
$s_t(x)=\frac{v_t^\star(x)-\kappa_t x}{\eta_t}$.
So different model parameterizations can be converted into
the same velocity representation, see Table \ref{tab:param_conversion} for details.
As a result, the stochastic and deterministic flow fine-tuning can be unified
by holding the same learned velocity $v_\theta$:
\begin{equation*}
\label{eq:flow_ode_sde_pair}
\begin{aligned}
    dX_t^{\mathrm{ODE}}
    &=
    v_\theta(X_t^{\mathrm{ODE}},t)\,dt, \\
    dX_t^{\mathrm{SDE}}
    &=
    \left[
        v_\theta(X_t^{\mathrm{SDE}},t)
        +
        \frac{\sigma^2(t)}{2\eta_t}
        \left(
            v_\theta(X_t^{\mathrm{SDE}},t)
            -
            \kappa_t X_t^{\mathrm{SDE}}
        \right)
    \right]dt
    +
    \sigma(t)dB_t .
\end{aligned}
\end{equation*}

\begin{table}[ht]
\centering
\small
\renewcommand{\arraystretch}{1.35}
\begin{tabular}{p{0.20\linewidth}p{0.34\linewidth}p{0.36\linewidth}}
\hline
\textbf{Parameterization} & \textbf{Predicted quantity} & \textbf{Conversion to velocity} \\
\hline
Velocity field
&
$v_\theta(x,t)$
&
$v_\theta(x,t)$
\\

Score
&
$\mathbf{s}_\theta(x,t)\approx \nabla\log p_t(x)$
&
$
v_\theta(x,t)
=
\kappa_t x+\gamma_t\mathbf{s}_\theta(x,t)
$
\\

Noise
&
$\boldsymbol\epsilon_\theta(x,t)\approx \mathbb E[X_0\mid X_t=x]$
&
$
v_\theta(x,t)
=
\kappa_t x
-
\left(\kappa_t\beta_t-\beta_t'\right)
\boldsymbol\epsilon_\theta(x,t)
$
\\

Clean data
&
$\mathbf{x}_\theta(x,t)\approx \mathbb E[X_1\mid X_t=x]$
&
$
v_\theta(x,t)
=
\frac{\beta_t'}{\beta_t}x
-
(
\frac{\beta_t'}{\beta_t}\alpha_t-\alpha_t'
)
\mathbf{x}_\theta(x,t)
$
\\
\hline
\end{tabular}
\caption{Equivalent parameterizations under
\eqref{ref_flow}.}
\label{tab:param_conversion}
\end{table}

\section{Fine-tune flow models with adjoint matching}
\label{sec:doc}

\subsection{Deterministic Optimal Control}\label{DOC}
Consider the control problem \cite{Bellman1957,Pontryagin1962,tang2025}:
\begin{equation}
\label{eq:det_oc_problem}
\min_{u} \mathcal L(u;\mathbf{X}^u) = 
\int_0^1 f(\|u(X_t,t)\|) \,dt + g(X_1),
\qquad X_0 = x,
\end{equation}
subject to the deterministic control-affine dynamics
\begin{equation}
\label{eq:det_dynamics}
\dot X_t = v^{base}(X_t,t) + u(X_t,t), \qquad t\in[0,1].
\end{equation}
Here $X_t \in \mathbb R^d$ denotes the state of the system,
$u:\mathbb R^d\times[0,1]\to\mathbb R^d$ is the control,
$v^{base}$ is the pre-trained velocity field,
$f:\mathbb R_+\to\mathbb R$ is a regularization on the control magnitude,
and $g:\mathbb R^d\to\mathbb R$ is the terminal cost.

Define the cost-to-go functional starting from state $x$ at time $t$
under control $u$:
\begin{equation*}
J(u;x,t)
:= 
\int_t^1 
f(\|u(X_s,s)\|)
\,ds
+ g(X_1),
\qquad \text{where } X_t = x.
\end{equation*}
The corresponding value function is:
\begin{equation}
\label{eq:det_value_function}
V(x,t) := \min_{u} J(u;x,t) = J(u^{\star};x,t),
\end{equation}
where $u^{\star}(x,t)$ denotes the pointwise optimal control.
Under regularity assumptions, the equality of $V(x,t)$
and the optimal control at $u=u^{\star}$ is given by (See Appendix \ref{apdx:doc_hjb}\&\ref{apdx:doc_pmp}):
\begin{equation}
\label{eq:control_valuefunc}
    \nabla_uf(\|u\|) \Big|_{u=u^{\star}}+ \nabla_x V(x,t) = 0.
\end{equation}

{\bf Motivation for general regularization}:
As mentioned in the introduction,
the common choice is
$f(r)=\frac{1}{2}r^2$, which corresponds to the KL-regularized
control cost. However, this quadratic choice is not indispensable: 
\textit{more general} control penalties \textcolor{red}{$f$}
can be used in the the adjoint-matching target.

\subsection{Adjoint Matching(AM)}\label{am}
To compute the gradient of $\mathcal L(u;\mathbf{X}^u)$ in \eqref{eq:det_oc_problem} with respect to control parameters, we rely on the adjoint method.
As shown in \cite[Appendix E.2]{domingoenrich2025}, this continuous-adjoint gradient can be learned by a \textit{matching} objective. 
Define
\begin{equation*}
a(t;\mathbf{X}^u,u) 
:= 
\nabla_{X_t} 
\left(
\int_t^1 
f(\|u(X_s,s)\|) \,ds
+ g(X_1)
\right),
\end{equation*}
representing the sensitivity of the cost-to-go under current control $u$. 
We have
$a(t;\mathbf{X}^u,u)\|_{X_t = x} = \nabla_x J(u;x,t)$.
It can be shown that the adjoint state satisfies the backward ODE:
\begin{equation}
\label{eq:det_adjoint_ode}
\frac{d}{dt} a(t;\mathbf{X}^u,u)
=
-
\Big[
a^\top \nabla_{X_t}\Big(v^{base}(X_t,t)+u(X_t,t) \Big) + \nabla_{X_t} f(\|u(X_t,t)\|) 
\Big],
\end{equation}
with the terminal condition
$a(1;\mathbf{X}^u,u) = \nabla_{X_1} g(X_1).$ Note that at the optimum $u(x,t) = u^{\star}(x,t)$, $a = \nabla_x V(x,t)$. 
Multiplying \eqref{eq:control_valuefunc} by $\nabla_{X_t}u(X_t,t)$, we have:
\[
a^\top\nabla_{X_t}u(X_t,t) + \nabla_{X_t}f(\|u(X_t,t)\|) = 0.
\]
Substituting it into \eqref{eq:det_adjoint_ode} yields a simplified \textit{``lean"} adjoint:
\begin{equation}
    \label{simp_adjoint_ode}
    \frac{d}{dt} \Tilde{a}(t;\mathbf{X}^u,u)
    =
    -
    \left[
    \Tilde{a}(t;\mathbf{X}^u,u)^\top\nabla_{X_t} v^{base}(X_t,t)
    \right], \quad \tilde{a}(1;\mathbf{X}^u,u) = \nabla_{X_1} g(X_1).
\end{equation}

{\bf Motivation for truncated time-steps}:
Theoretical and empirical analysis (Section~\ref{sec4} and Appendix~\ref{appendix-1dexp}) on the backward ODE~\eqref{simp_adjoint_ode} shows that the last few denoising steps are the most important. 
The parameter $\textcolor{red}{\tau}\in[0,1)$ allows us to start control from an intermediate
denoising time. 
Since the dynamics are deterministic, the distribution of $X_1$ is
induced solely by the distribution $X_{\textcolor{red}{\tau}}$ and the chosen control $u$.

Observe that Equation~\eqref{eq:control_valuefunc} is a fixed-point problem. 
As long as $f$ is differentiable and invertible, we can solve for $u^{\star}(x,t)$ as a function of the (optimal) lean adjoint $\Tilde{a}(x,t)$ from this equation. This provides a surrogate target to train our control iteratively. 

\begin{theorem}\label{thm:invex_f}
For the deterministic controlled problem \eqref{eq:det_oc_problem} on the time horizon
$[\textcolor{red}{\tau},1]$ subject to \eqref{eq:det_dynamics}, if
$f:\mathbb{R}_+ \to \mathbb{R}$ is continuously differentible, strictly convex and increasing, then
there is an optimal control $\textcolor{red}{u^\star}$ given by
\begin{equation}
\textcolor{red}{u_t^\star}=-\frac{(f')^{-1}(\|a_t\|)}{\|a_t\|}\, a_t,
\qquad t\in[\textcolor{red}{\tau},1].
\end{equation}
\end{theorem}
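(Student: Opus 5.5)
The plan is to derive the formula from the first-order optimality condition \eqref{eq:control_valuefunc}, either through the HJB equation (Appendix \ref{apdx:doc_hjb}) or through Pontryagin's maximum principle (Appendix \ref{apdx:doc_pmp}), on the shortened horizon $[\tau,1]$. Since the dynamics \eqref{eq:det_dynamics} are deterministic and $X_\tau$ is given, the problem on $[\tau,1]$ is the same problem \eqref{eq:det_oc_problem} with initial time $\tau$. Along an optimal trajectory the costate is $a_t=\nabla_x V(X_t,t)$, which is the adjoint $a$ of \eqref{eq:det_adjoint_ode} evaluated at $u=u^\star$. The pointwise minimization in the Hamiltonian is
\[
\min_{u\in\mathbb R^d}\; \Big\{ f(\|u\|) + a_t^\top\big(v^{base}(X_t,t)+u\big)\Big\},
\]
so the whole argument reduces to solving this finite-dimensional problem for each $t$ and checking that it is well posed.

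\textbf{Direction.} Write $u=-r\,e$ with $r\ge 0$ and $\|e\|=1$. The objective becomes $f(r)-r\,a_t^\top e$ plus a term that does not depend on $u$. For fixed $r$, Cauchy--Schwarz shows this is minimized by $e=a_t/\|a_t\|$ when $a_t\neq 0$. The problem then reduces to the scalar problem $\min_{r\ge0}\{f(r)-r\|a_t\|\}$.

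\textbf{Magnitude.} Because $f$ is $C^1$ and strictly convex, $f'$ is continuous and strictly increasing, so $(f')^{-1}$ is well defined on the range of $f'$. The scalar objective $\phi(r)=f(r)-r\|a_t\|$ is strictly convex. Its derivative $\phi'(r)=f'(r)-\|a_t\|$ vanishes exactly at $r^\star=(f')^{-1}(\|a_t\|)$, which is therefore the unique global minimizer. Combining direction and magnitude gives
\[
u_t^\star=-\frac{(f')^{-1}(\|a_t\|)}{\|a_t\|}\,a_t .
\]
This is consistent with \eqref{eq:control_valuefunc}: for $u\neq0$ we have $\nabla_u f(\|u\|)=f'(\|u\|)\,u/\|u\|$, and setting this equal to $-a_t$ forces $u$ to be parallel to $-a_t$ with $f'(\|u\|)=\|a_t\|$. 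Finally, by the verification theorem (a $C^1$ solution of HJB together with a feedback control attaining the pointwise minimum), or by the sufficiency part of PMP, this feedback is optimal on $[\tau,1]$.

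\textbf{Main obstacle.} The delicate step is well-posedness of the inversion. We need $\|a_t\|$ to lie in the range of $f'$ on $\mathbb R_+$. If $f'(0)>0$ and $\|a_t\|\le f'(0)$, then $\phi'\ge0$ on $\mathbb R_+$, the optimum is $r^\star=0$, and the formula must be read with the convention $(f')^{-1}(s)=0$ for $s\le f'(0)$. The degenerate case $a_t=0$ must likewise be read as $u_t^\star=0$. For $f(r)=\tfrac12 r^2$ and the polynomial penalties $f(r)=r^p/p$ with $p>1$, we have $f'(0)=0$ and $f'$ is a bijection of $\mathbb R_+$, so the formula holds verbatim. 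In this case it gives $u^\star=-a$ for the quadratic penalty and $u^\star=-\|a\|^{1/(p-1)-1}a$ in general.

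I would state these conventions explicitly and assume enough regularity that $V$ is $C^1$, or argue along the optimal trajectory via PMP to avoid global regularity of $V$. Existence of some optimal control is taken as part of the stated regularity assumptions, as the paper does before \eqref{eq:control_valuefunc}.
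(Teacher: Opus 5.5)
Your proposal is correct and is essentially the paper's argument. The paper applies PMP on $[\tau,1]$, sets $\nabla_u H = f'(\|u\|)\,u/\|u\| + a = 0$, deduces that $u$ is collinear with $-a$ with $f'(\|u\|)=\|a\|$, and inverts $f'$; this is the same pointwise Hamiltonian minimization you carry out. Your Cauchy--Schwarz/strict-convexity version is simply more careful: it certifies a unique global minimizer rather than just a critical point, avoids dividing by $\|u\|$ at $u=0$, and makes explicit the range condition on $(f')^{-1}$ that the paper leaves implicit, which genuinely fails for bounded $f'$ such as $f(r)=\sqrt{1+r^2}$ with $\|a_t\|\ge 1$. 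Of your two sufficiency arguments, only the verification theorem is available here, since PMP sufficiency would require convexity in $x$ that $a^\top v^{base}$ and $g$ need not have.
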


{\bf Example}:
Typically, we can choose \textbf{polynomial}  regularization $f(x)=\frac{1}{p\lambda}x^p$. Then the optimal control is
$
\textcolor{red}{u_t^\star}=
-\lambda^{\frac{1}{p-1}}\|a_t\|^{\frac{2-p}{p-1}}\, a_t$.

Now we propose our Deterministic Adjoint Matching (AM) objective:
\begin{equation*}
\boxed{\mathcal L_{\mathrm{Adj\text{-}Match}}(u;X)
:=
\frac12 \int_{\textcolor{red}{\tau}}^1
\big\| u(X_t,t) - \textcolor{red}{u^{\star}}(\tilde{a}(t,X_t)) \big\|^2\,dt,}
\end{equation*}
where $\tilde{a}_t$ satisfies ODE \eqref{simp_adjoint_ode} and the stochastic process $X \sim p^{\bar u}$ with $\bar u = \operatorname{stopgrad}(u)$. See Algorithm~\ref{alg:det-am-pmp-p} for the empirical implementation pipelines of our adaptive methods, where our adaptions are marked in \textcolor{red}{red} as well.

Stochastic adjoint matching algorithm can also be leveraged in our finetuning tasks, see Appendix~\ref{apdx:stoch_extension} for detailed explanations on stochastic AM and corresponding algorithm.


\section{Theory and Algorithms of deterministic adjoint matching}\label{sec4}
Section~\ref{sec:doc} introduced two additional components of deterministic adjoint matching: a truncated time horizon \textcolor{red}{$\tau$} and a general control regularizer \textcolor{red}{$f(\cdot)$}. 
Empirically, control intensity is highly non-uniform: \cite{lee2026rsm} showed that successful first-order methods assign more weight to high-SNR steps. Two toy models with \textit{Variance Exploding} and \textit{Variance Preserving} dynamics also show the same trend (Figure \ref{fig:ve_vp_1d_examples-a} in Appendix~\ref{appendix-1dexp}). Moreover, integrating the full-trajectory adjoint ODE is expensive for large models and prone to error accumulation. 

Quadratic regularization overemphasizes steps with maximal adjoints. Higher-order regularization mitigates this issue (see Appendix \ref{appendix-control}). 
Here we give some simple calculations motivating why combining time truncation with higher-order regularization can significantly reduce computational cost while improving performance. Proofs are deferred to Appendix~\ref{appendix-1dexp}.

\subsection{Time Horizon}\label{sec:generalized-time}
\begin{figure}[t]
    \centering
    \begin{subfigure}[t]{0.48\linewidth}
        \centering
        \includegraphics[width=\linewidth]{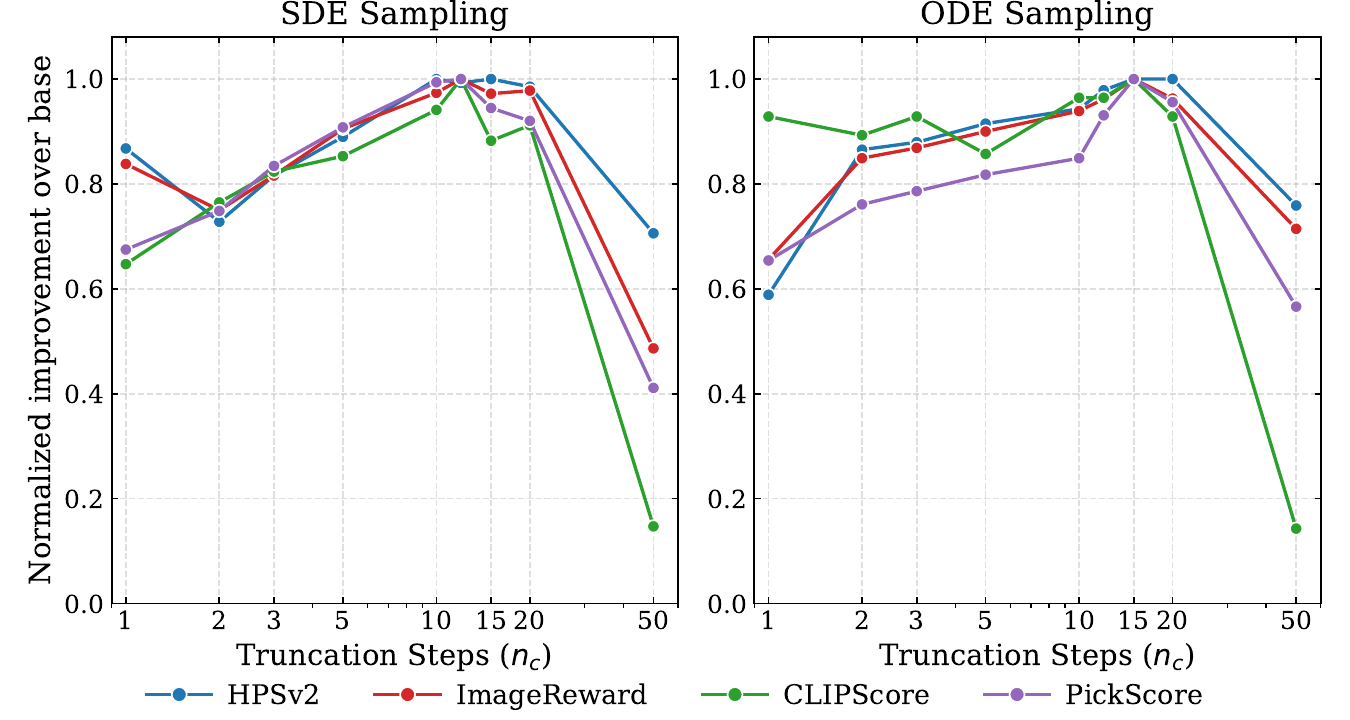}
        \caption{Normalized reward improvement
        for \textbf{Truncated AM}. $n_{\text{truncate}}=10,12,15$ achieves best performances.}
        \label{fig:improvement}
    \end{subfigure}
    \hfill
    \begin{subfigure}[t]{0.48\linewidth}
        \centering
        \includegraphics[width=\linewidth]{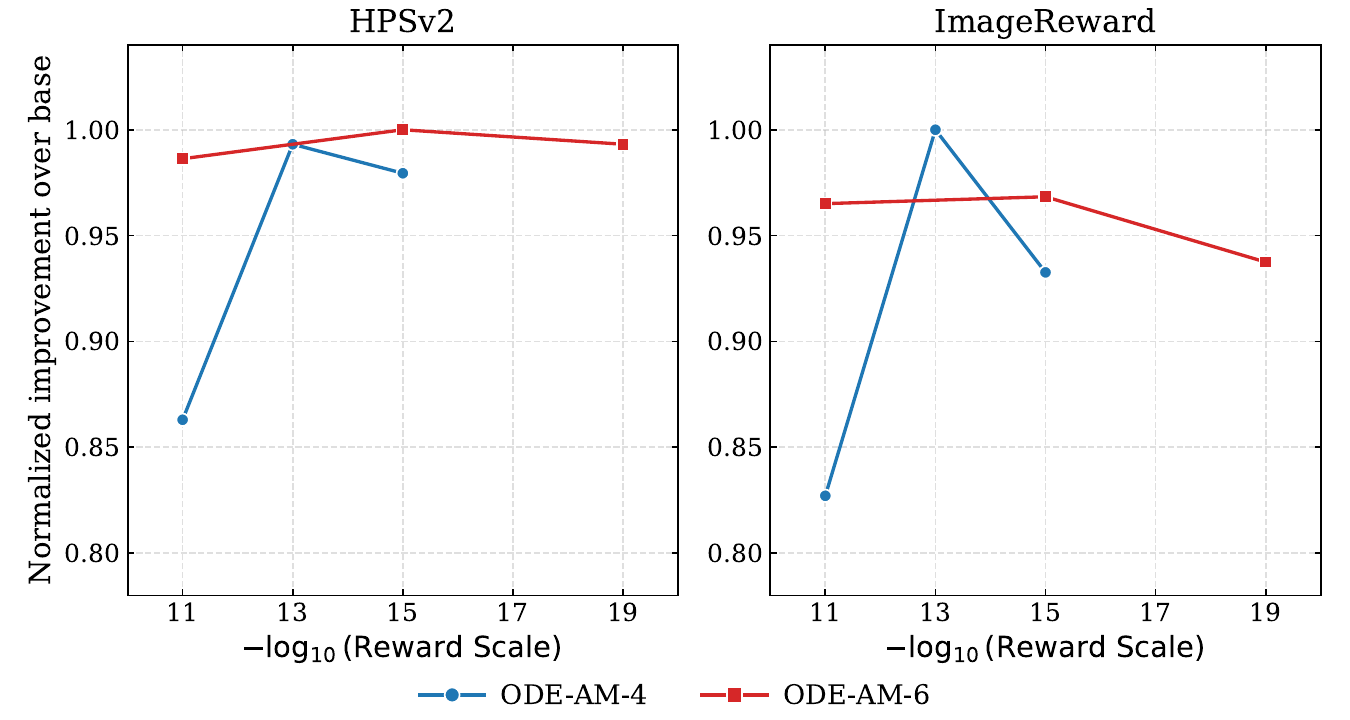}
        \caption{Normalized reward improvement for HPSv2
        (left) and ImageReward (right) under \textbf{ODE-AM-4} and
        \textbf{-6} at $n_{\text{truncate}}=10$.}
        \label{fig:high_order_improvement}
    \end{subfigure}
    \caption{Ablations on \textbf{SiT-XL/2}: (a) truncation horizon and
    (b) reward scale across higher-order ODE adjoint matching variants. See Table~\ref{tab:nc_sweep} and Table~\ref{tab:lr_sweep} in Appendix~\ref{appendix-exp} for full results}
    \label{fig:sit_ablations}
\end{figure}

Consider the following elementary setting:
$X_0\perp X_1$ and the rectified flow
$
X_t=(1-t)X_0+tX_1,\, t\in[0,1].
$
Let
$
v_t(x):=\mathbb E[X_1-X_0\mid X_t=x]
$
be the corresponding velocity. 
We analyze the controlled dynamics
$
\dot X_t^u=v_t(X_t^u)+u_t
$
with terminal adjoint $a(1)\neq 0$ and optimal control $u_t^\star$ given by Theorem~\ref{thm:invex_f}. Proposition~\ref{4.5-a} shows that, when $\sigma>1$, the control maximizer lies in the later stage of the denoising trajectory.

\begin{proposition}[Unique later-stage control maximum]\label{4.5-a}
Suppose $X_0\sim\mathcal N(\mu, \sigma^2)$ and $X_1\sim \mathcal N(0, 1)$ are independent, the control intensity takes a unique maximum at
$t^\star = \frac{\sigma^2}{1+\sigma^2}$.
\end{proposition}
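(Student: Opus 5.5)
The plan is to compute the lean adjoint $\tilde a$ from \eqref{simp_adjoint_ode} in closed form for this Gaussian example, and then transfer its maximizer to the control through Theorem~\ref{thm:invex_f}. The key observation is that the lean adjoint ODE involves only $\nabla_x v^{base}$, not the control $u$. In this Gaussian setting $\nabla_x v_t$ will turn out to be a deterministic function of $t$ alone. So $\tilde a_t$ does not depend on the controlled trajectory, and the optimal control intensity becomes an explicit function of time.

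\textbf{Step 1 (the velocity field).} Since $X_0\perp X_1$ are Gaussian, $X_t=(1-t)X_0+tX_1$ is Gaussian with mean $m_t=(1-t)\mu$ and variance
\[
s_t^2=(1-t)^2\sigma^2+t^2 .
\]
For a one-dimensional Gaussian path the conditional expectation $\mathbb E[X_1-X_0\mid X_t=x]$ is affine in $x$. I will compute it directly by Gaussian conditioning, or equivalently use the fact that the monotone transport map $x\mapsto m_t+s_t z$ generates the marginals. Either way I expect
\[
v_t(x)=m_t'+\frac{s_t'}{s_t}(x-m_t),
\qquad\text{so}\qquad
\partial_x v_t=\frac{s_t'}{s_t}.
\]
A sanity check is that the covariance $\mathrm{Cov}(X_1-X_0,X_t)=t-(1-t)\sigma^2=s_ts_t'$, divided by $\mathrm{Var}(X_t)=s_t^2$, gives the slope $s_t'/s_t$.

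\textbf{Step 2 (solving the adjoint).} Plugging $\partial_x v_t$ into \eqref{simp_adjoint_ode} gives the scalar linear ODE
\[
\dot{\tilde a}_t=-\frac{s_t'}{s_t}\,\tilde a_t .
\]
Using the terminal value $s_1=1$, this solves to
\[
\tilde a_t=\frac{\tilde a_1}{s_t}=\frac{a(1)}{\sqrt{(1-t)^2\sigma^2+t^2}} .
\]
Since $a(1)\neq0$, the quantity $|\tilde a_t|$ is a strictly decreasing function of $s_t^2$.

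\textbf{Step 3 (from adjoint to control).} By Theorem~\ref{thm:invex_f}, $\|u_t^\star\|=(f')^{-1}(\|a_t\|)$. Strict convexity of $f$ makes $f'$ strictly increasing, hence $(f')^{-1}$ is strictly increasing on its domain. Therefore the control intensity is a strictly increasing function of $|\tilde a_t|$, and so a strictly decreasing function of $s_t^2$. Maximizing the intensity is thus equivalent to minimizing the quadratic $s_t^2$ over $[0,1]$. This quadratic is strictly convex with leading coefficient $\sigma^2+1>0$. Setting
\[
\frac{d}{dt}s_t^2=-2(1-t)\sigma^2+2t=0
\]
gives $t^\star=\sigma^2/(1+\sigma^2)\in(0,1)$ as the unique minimizer, which proves the claim.

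\textbf{Main obstacle.} The only delicate point is justifying that the lean adjoint along the optimal trajectory coincides with the $u$-independent expression above. This holds because $\partial_x v_t$ does not depend on $x$, so the trajectory plays no role. A secondary point is checking that $(f')^{-1}$ is defined at $|\tilde a_t|$. If it is not, I would interpret it on the range of $f'$, and monotonicity still gives the same argmax.
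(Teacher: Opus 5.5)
Your proposal is correct and follows essentially the paper's route: Gaussian conditioning gives an affine velocity with slope $D'(t)/(2D(t))$, where $D(t)=(1-t)^2\sigma^2+t^2$, the lean adjoint integrates to $a(1)/\sqrt{D(t)}$, and since the control norm is increasing in $\|a_t\|$ its unique maximizer is the minimizer $t^\star=\sigma^2/(1+\sigma^2)$ of $D$ (the paper does this last step only for polynomial $f$, in Proposition~\ref{prop:relative-control-strength}, while you cover any strictly convex increasing $f$ via Theorem~\ref{thm:invex_f}). Your sign handling in the adjoint integration is also cleaner than the paper's, whose intermediate expression $a(1)\exp(-\int_t^1 A(s)\,ds)$ has a sign slip that is silently cancelled in the following line.
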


Relative intensity is independent of the specific fine-tuned target. Also, the reward scale $\lambda$ changes only the absolute, rather than relative, control intensity.

\subsection{Regularization}
Proposition~\ref{prop:relative-control} below shows that higher-order polynomial regularization preserves the same peak location but smooths the normalized control curve (see Figure~\ref{fig:relative-control-strength} in Appendix~\ref{appendix-1dexp}). Therefore, a broader later-stage region remains close to the maximal control strength, which helps truncated adjoint matching retain neighboring reward-relevant signals. This toy example aligns with our empirical observations in Appendix~\ref{appendix-control}.

\begin{proposition}[Relative Control strength]
\label{prop:relative-control}
Under the same setup as Proposition~\ref{4.5-a}, for $p$-th order polynomial regularization, the normalized relative control strength is given by
\[
R_p(t)
:=
\frac{\|u_t^\star\|}{\max_{s\in[0,1]}\|u_s^\star\|}
=
\left(
\frac{\sigma^2}
{((1-t)^2\sigma^2+t^2)(1+\sigma^2)}
\right)^{1/{(2p-2)}}.
\]
\end{proposition}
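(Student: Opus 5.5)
The plan is to compute $\|u_t^\star\|$ in closed form by solving the lean adjoint ODE \eqref{simp_adjoint_ode} explicitly in this Gaussian toy model, and then to insert the result into the polynomial-regularization formula that follows Theorem~\ref{thm:invex_f}. The point that makes this work is that the base velocity is affine in $x$. Hence $\nabla_x v_t$ is a deterministic function of $t$, and the lean adjoint does not depend on the controlled trajectory. Only the time profile $t\mapsto \|a_t\|$ matters for $R_p$.

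\textbf{Step 1: the base velocity.} Since $X_0\sim\mathcal N(\mu,\sigma^2)$ and $X_1\sim\mathcal N(0,1)$ are independent, the pair $(X_1-X_0,X_t)$ is jointly Gaussian. Therefore
\[
v_t(x)=c_t+k_t x, \qquad k_t=\frac{\mathrm{Cov}(X_1-X_0,X_t)}{\mathrm{Var}(X_t)}=\frac{t-(1-t)\sigma^2}{(1-t)^2\sigma^2+t^2}.
\]
Write $V_t:=(1-t)^2\sigma^2+t^2$. A direct computation gives $V_t'=2\bigl(t-(1-t)\sigma^2\bigr)$, so $k_t=V_t'/(2V_t)$. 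This identity is the key simplification.

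\textbf{Step 2: solve the lean adjoint.} In one dimension, \eqref{simp_adjoint_ode} reads $\dot{\tilde a}_t=-k_t\tilde a_t$ with $\tilde a_1=a(1)\neq 0$. Its solution is
\[
\tilde a_t=a(1)\exp\Bigl(\int_t^1 k_s\,ds\Bigr)=a(1)\Bigl(\frac{V_1}{V_t}\Bigr)^{1/2}=a(1)\,V_t^{-1/2},
\]
using $V_1=1$. This holds for every $x$ and every control $u$, because $k_t$ is state-independent.

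\textbf{Step 3: plug into the optimal control.} For $f(r)=r^p/(p\lambda)$ we have $f'(r)=r^{p-1}/\lambda$, so $(f')^{-1}(s)=(\lambda s)^{1/(p-1)}$. Theorem~\ref{thm:invex_f} then gives
\[
\|u_t^\star\|=(\lambda\|a_t\|)^{1/(p-1)}=(\lambda|a(1)|)^{1/(p-1)}\,V_t^{-1/(2p-2)}.
\]
This is consistent with the Example after Theorem~\ref{thm:invex_f}.

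\textbf{Step 4: normalize.} The constant $(\lambda|a(1)|)^{1/(p-1)}$ cancels in the ratio, so $\|u_t^\star\|$ is maximized exactly where the quadratic $V_t$ is minimized. Setting $V_t'=0$ gives $t^\star=\sigma^2/(1+\sigma^2)$, in agreement with Proposition~\ref{4.5-a}. At this point $V_{t^\star}=\sigma^2/(1+\sigma^2)$. Hence
\[
R_p(t)=\bigl(V_{t^\star}/V_t\bigr)^{1/(2p-2)}=\Bigl(\frac{\sigma^2}{((1-t)^2\sigma^2+t^2)(1+\sigma^2)}\Bigr)^{1/(2p-2)}.
\]

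The only step needing care is Step 1: recognizing that $k_t$ is exactly the logarithmic half-derivative of $V_t$, so that the adjoint integral collapses. The remaining steps are routine algebra. I would also note that $p>1$ is required so that $f$ is strictly convex, as assumed in Theorem~\ref{thm:invex_f}.
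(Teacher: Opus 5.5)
Your proposal is correct and follows essentially the same route as the paper. You use Gaussian conditioning to get the affine velocity with slope $\frac{t-(1-t)\sigma^2}{D(t)}=\frac{D'(t)}{2D(t)}$, solve the lean adjoint explicitly to get $\|a_t\|=|a(1)|\,D(t)^{-1/2}$, substitute into the polynomial form of Theorem~\ref{thm:invex_f}, and normalize at the minimizer $t^\star=\sigma^2/(1+\sigma^2)$ of $D$. Your sign bookkeeping, $\tilde a_t=a(1)\exp\bigl(\int_t^1 k_s\,ds\bigr)$, is actually cleaner than the paper's: it writes $\exp\bigl(-\int_t^1 A(s)\,ds\bigr)$ and then makes a compensating sign slip to reach the same $D(t)^{-1/2}$.
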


\begin{algorithm}[!htp]
\caption{Deterministic Adjoint Matching with polynomial regularization}
\label{alg:det-am-pmp-p}
\begin{algorithmic}[1]
\Require Pretrained velocity field $v^{base}$, parametrized velocity field $v_\theta$,
reward model $r$, exponent $p>1$, cost weight $\lambda>0$, step size $h$, number of control steps $T$, number of finetune iterations $N$, batch size $m$.
\State Initialize $v_\theta\leftarrow v^{\rm base}$.
\For{$n = 0, \ldots, N-1$}
    \State Generate $m$ trajectories using current model at $t=0,\ldots,1-h$:
    \[X_{t+h}=X_t+hv_\theta(X_t,t),\quad X_0\sim \mathcal N(0,I).\]
    \State Integrate the lean adjoint backwards in \textcolor{red}{$t = 1, 1-h, \ldots, 1-(T-1)h$}:
    \[\tilde{a}_{t-h}=\tilde{a}_{t}+h\tilde{a}_t^\top\nabla_{X_t} v^{\rm base}(X_t,t)  \qquad \tilde{a}_1=-\nabla_{X_1} r(X_1).\]
    Note that both $X_t$ and $\tilde{a}_t$ should be computed without gradients, \textit{i.e.} Stop gradients: $X_t = \mathbf{stopgrad}(X_t)$, $\tilde a_t = \mathbf{stopgrad}(\tilde a_t)$.
    \State Compute target-matching loss for each trajectory:
    \begin{align*}
        \mathrm L(\theta)
        =\frac{1}{mT}\sum_{t\in\{1,\ldots,1-Th\}}\Bigg\|
        \Big(v_\theta(X_t,t)-v^{\rm base}(x_t,t)\Big)-\textcolor{red}{\Big(-\lambda^{\frac{1}{p-1}}
        \|\tilde{a}_t\|^{(2-p)/(p-1)}\tilde{a}_t
        \Big)}\Bigg\|^2 .
    \end{align*}
\State Compute the gradient $\nabla_\theta \mathrm L$ and update $\theta$.
\EndFor
\end{algorithmic}
\label{algo:main}
\end{algorithm}

\section{Numerical Results}
\label{sec:experiments}
In this section, we test our algorithms against two flow matching baselines:
\textbf{SiT-XL/2} (\textbf{675M} Params., \cite{ma2024sit}) and \textbf{FLUX.2-Klein} (\textbf{4B} Params., \cite{flux-2-2025}). 
For both models, we fine-tune with HPSv2 reward ~\cite{wu2023humanpreferencescorev2}. 
For evaluation, we also use other preference and alignment metrics: ImageReward~\cite{xu2023imagereward}, Aesthetic Score \cite{schuhmann2022laion5b}, CLIPScore~\cite{radford2021clip}, and PickScore~\cite{kirstain2023pickapic}. 

We present our method by \textit{``Regularization order'' $+$ ``ODE/SDE'' $+$ ``AM'' $+$ ''Truncate steps''}. For example, \textbf{4th ODE-AM-3} means deterministic (ODE) adjoint matching with 4th order polynomial regularization and truncate adjoint ODE to the last 3 steps.


\begin{table}[ht]
\centering
\scriptsize
\setlength{\tabcolsep}{4pt}
\begin{tabular}{lccccccc}
\toprule
Method 
& $n_\text{truncate}$
& Regularization
& HPSv2 $\uparrow$
& ImageReward $\uparrow$
& CLIPScore $\uparrow$
& PickScore $\uparrow$
& Iter. Time (s) $\downarrow$ \\
\midrule
Base Model 
& --
& --
& $0.183 {\scriptstyle \pm 0.038}$
& $-0.658 {\scriptstyle \pm 0.915}$
& $0.239 {\scriptstyle \pm 0.049}$
& $0.190 {\scriptstyle \pm 0.011}$
& $-$ \\
DRaFT-1 
& --
& --
& $0.296 {\scriptstyle \pm 0.041}$
& $0.266 {\scriptstyle \pm 0.909}$
& $0.260 {\scriptstyle \pm 0.041}$
& $0.200 {\scriptstyle \pm 0.011}$
& $\underline{26.7}$ \\
ReFL-10 
& --
& --
& $0.297 {\scriptstyle \pm 0.034}$
& $0.282 {\scriptstyle \pm 0.885}$
& $0.265 {\scriptstyle \pm 0.038}$
& $0.200 {\scriptstyle \pm 0.011}$
& $\mathbf{26.4}$ \\
\midrule
SDE-AM 
& $10$
& 2nd
& $0.319 {\scriptstyle \pm 0.038}$
& $0.491 {\scriptstyle \pm 0.876}$
& $\underline{0.271} {\scriptstyle \pm 0.038}$
& $0.206 {\scriptstyle \pm 0.011}$
& $47.6$ \\
SDE-AM 
& $12$
& 2nd
& $0.318 {\scriptstyle \pm 0.038}$
& $0.522 {\scriptstyle \pm 0.867}$
& $\mathbf{0.273} {\scriptstyle \pm 0.038}$
& $\underline{0.206} {\scriptstyle \pm 0.011}$
& $49.6$ \\
SDE-AM 
& $50$(Full)
& 2nd
& $0.279 {\scriptstyle \pm 0.038}$
& $-0.084 {\scriptstyle \pm 0.989}$
& $0.244 {\scriptstyle \pm 0.052}$
& $0.197 {\scriptstyle \pm 0.012}$
& $83.8$ \\
\midrule
ODE-AM 
& $10$
& 2nd
& $0.316 {\scriptstyle \pm 0.040}$
& $0.450 {\scriptstyle \pm 0.903}$
& $0.266 {\scriptstyle \pm 0.040}$
& $0.204 {\scriptstyle \pm 0.011}$
& $47.3$ \\
ODE-AM
& $10$
& 4th
& $\underline{0.328} {\scriptstyle \pm 0.038}$
& $\mathbf{0.574} {\scriptstyle \pm 0.834}$
& $0.269 {\scriptstyle \pm 0.040}$
& $0.206 {\scriptstyle \pm 0.011}$
& $47.3$ \\
ODE-AM
& $10$
& 6th
& $\mathbf{0.329} {\scriptstyle \pm 0.039}$
& $\underline{0.535} {\scriptstyle \pm 0.862}$
& $0.269 {\scriptstyle \pm 0.039}$
& $\mathbf{0.207} {\scriptstyle \pm 0.011}$
& $47.7$ \\
ODE-AM
& $12$
& 2nd
& $0.321 {\scriptstyle \pm 0.039}$
& $0.477 {\scriptstyle \pm 0.860}$
& $0.266 {\scriptstyle \pm 0.039}$
& $0.205 {\scriptstyle \pm 0.011}$
& $49.3$ \\
ODE-AM 
& $15$
& 2nd
& $0.324 {\scriptstyle \pm 0.039}$
& $0.522 {\scriptstyle \pm 0.877}$
& $0.267 {\scriptstyle \pm 0.040}$
& $0.206 {\scriptstyle \pm 0.011}$
& $52.0$ \\
ODE-AM 
& $50$(Full)
& 2nd
& $0.290 {\scriptstyle \pm 0.037}$
& $0.185 {\scriptstyle \pm 0.967}$
& $0.243 {\scriptstyle \pm 0.049}$
& $0.199 {\scriptstyle \pm 0.012}$
& $83.8$ \\
\bottomrule
\end{tabular}
\caption{Comparison on reward metrics with \textbf{SiT-XL/2}. Best values are shown in \textbf{bold}, second-best are \underline{underlined}.}
\label{tab:reward_comparison}
\end{table}

\subsection{SiT-XL/2 on ImageNet}
SiT achieves strong ImageNet generation quality with small parameter sizes, ideal for detailed ablation studies. 
To evaluate this class-conditional model, we convert each of the $1000$ ImageNet classes into a descriptive class prompt and generate images conditioned on the corresponding class labels. We investigate the effects of trajectory stochasticity, truncate steps and regularization mode with four reward/alignment metrics, see Table \ref{tab:reward_comparison}. The results show our pipelines improve over the base SiT model across all four fidelity metrics, while 10-step truncation with 6th order regularization achieves the overall best performance. See Figure~\ref{fig:sit_ablations} for ablations and Figure~\ref{fig:SiT_ODE_AM_6} for generated images.

\textbf{SDE-AM-Full} and \textbf{ODE-AM-Full} already improve over the base SiT model, but they are less efficient, as discussed in Section \ref{sec:generalized-time}. 
In particular, truncated AM consistently improves the four reward metrics while reducing the per-iteration wall-clock time from $83.8$s for \textbf{ODE-AM-Full} to $39\sim57$s. Detailed ablation studies are provided in Appendix~\ref{appdx:sit_ablation}.

\begin{figure}[t]
\centering
\setlength{\tabcolsep}{2pt}
\begin{tabular}{@{}ccc@{\hspace{25pt}}ccc@{}}

\includegraphics[width=0.13\textwidth]{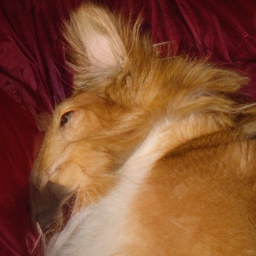} &
\includegraphics[width=0.13\textwidth]{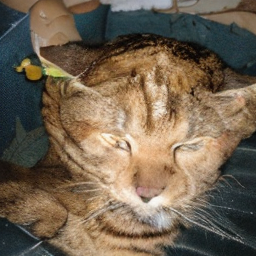} &
\includegraphics[width=0.13\textwidth]{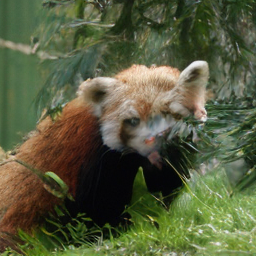} &
\includegraphics[width=0.13\textwidth]{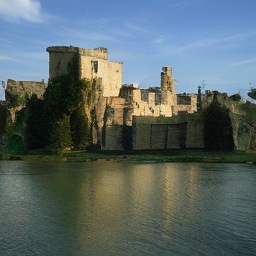} &
\includegraphics[width=0.13\textwidth]{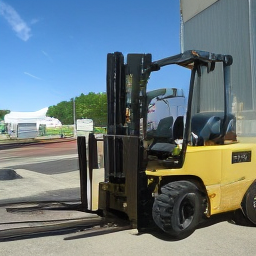} &
\includegraphics[width=0.13\textwidth]{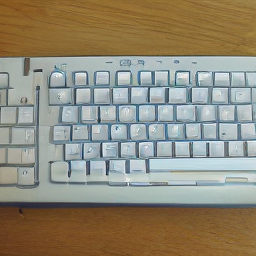}
\\

\includegraphics[width=0.13\textwidth]{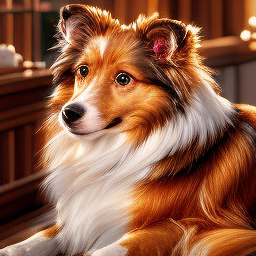} &
\includegraphics[width=0.13\textwidth]{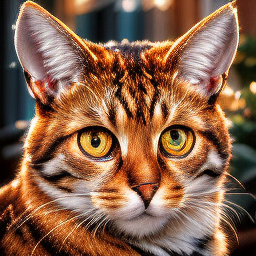} &
\includegraphics[width=0.13\textwidth]{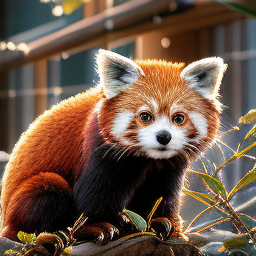} &
\includegraphics[width=0.13\textwidth]{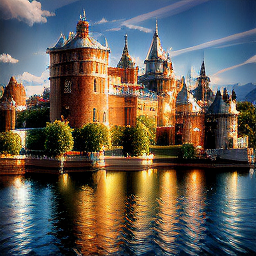} &
\includegraphics[width=0.13\textwidth]{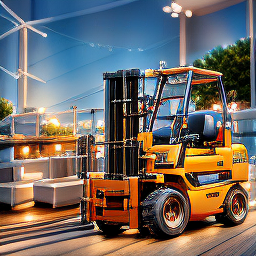} &
\includegraphics[width=0.13\textwidth]{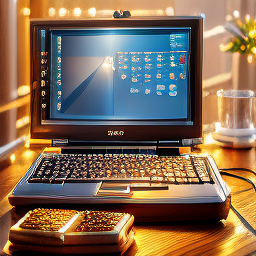}
\\

\end{tabular}

\captionsetup{width=1\textwidth}
\caption{
Above: \textbf{SiT-XL/2} base model. Below: \textbf{SiT-XL/2} with \textbf{6th ODE-AM-10} finetune for 150 steps. ImageNet classes, from left to right \textit{Shetland sheepdog, tiger cat, esser panda (red panda), castle, forklift} and \textit{computer keyboard}.
}
\label{fig:SiT_ODE_AM_6}
\end{figure}

\subsection{FLUX.2-Klein-4B Text to Image}

We fine-tune FLUX.2-Klein-4B with 20 sampling steps and $500$ iterations, using prompt set from the dataset HPDv2 in HPSv2. 
For evaluation, other than the fidelity scores, we also report within-prompt diversity metrics LPIPS~\cite{zhang2018perceptual}, $1$-$\mathrm{MS\text{-}SSIM}$~\cite{wang2003multiscale}, and distribution-preservation metrics Recall and Recovery. Detailed evaluation protocols are deferred to Appendix~\ref{flux-appendix}.

From Table~\ref{tab:flux2_reward_metrics}, our truncated AM variants (even with 1 step) improve the HPSv2 reward dramatically compared to the base model. 
While the baselines DRaFT and ReFL achieve slightly stronger gains on some fidelity metrics, 
our methods preserve much better within-prompt diversity and the base-model distribution, as shown in Table~\ref{tab:flux2_diversity_metrics}. In addition, with qualitative examples provided in Figure~\ref{fig:flux2_flashy}, we see clearly that truncated AM is less prone to reward hacking and mode collapse.

\begin{figure}[ht]
\centering
\setlength{\tabcolsep}{2pt}
\begin{tabular}{@{}c@{\hspace{20pt}}cc@{\hspace{20pt}}cc@{}}

\includegraphics[width=0.13\textwidth]{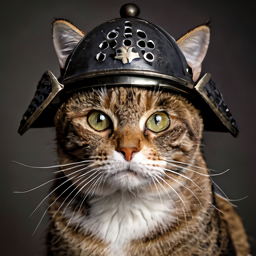} &
\includegraphics[width=0.13\textwidth]{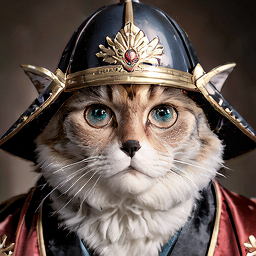} &
\includegraphics[width=0.13\textwidth]{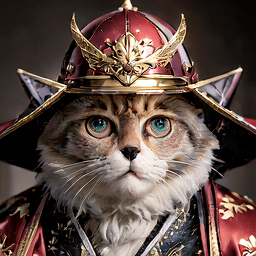} &
\includegraphics[width=0.13\textwidth]{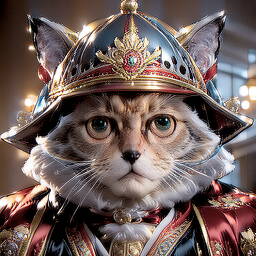} &
\includegraphics[width=0.13\textwidth]{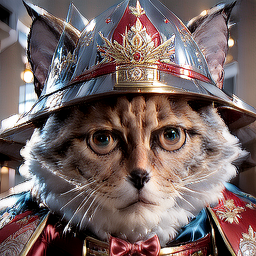} \\

\includegraphics[width=0.13\textwidth]{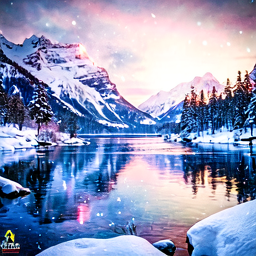} &
\includegraphics[width=0.13\textwidth]{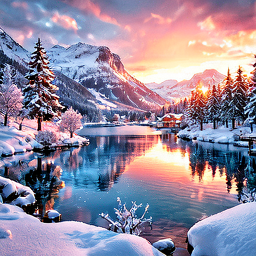} &
\includegraphics[width=0.13\textwidth]{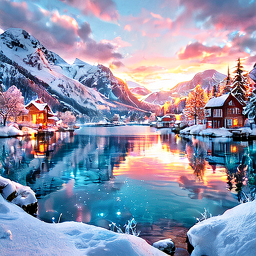} &
\includegraphics[width=0.13\textwidth]{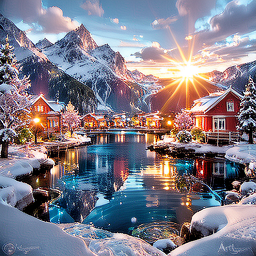} &
\includegraphics[width=0.13\textwidth]{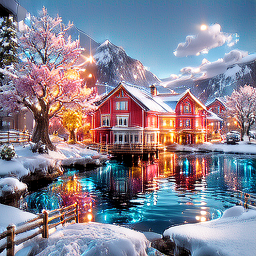} \\

\end{tabular}

\captionsetup{width=1\textwidth}
\caption{
Prompts are \textit{``A portrait of a cat wearing a samurai helmet.''} and \textit{``A snowy lake in Sweden captured in a vibrant, cinematic style with intense detail and raytracing technology showcased on Artstation.''
Left to right: (1) Base, (2) 2nd ODE-AM-1, (3) 2nd ODE-AM-3, (4) DRaFT-1, and (5) ReFL-5}. 
}
\label{fig:flux2_flashy}
\end{figure}

To put it more clearly, some methods such as \textbf{6th ODE-AM-1} and \textbf{ReFL-5}, appear to score higher on training reward HPSv2. 
However, when it comes to image diversity, methods such as \textbf{2nd ODE-AM-3}, \textbf{2nd ODE-AM-1} and \textbf{6th ODE-AM-3} outperforms, as they have high LPIPS and 1-MS-SSIM. Overall, our approach is better at preserving diversity, which is consistent with the observation in \cite{liu2026vgg} that methods based on direct differentiation tend not to align with the target distribution, but only favor some modes of it. 
To examine whether fine-tuning leads to mode collapse, we also compute Recall and Coverage to measure how much the fine-tuned distribution deviates from the original base distribution. 
We observe that \textbf{2nd ODE-AM-1} and \textbf{6th ODE-AM-3} preserve the distribution significantly better than others. 
One reason is that high-reward training methods such as \textbf{DRaFT} and \textbf{ReFL} tend to produce overly bright or visually flashy images, often at the cost of semantic alignment. 
Through various level of truncated steps and order of regularization,
our methods managed to balance the tradeoff
between achieving high image fidelity and preventing reward hacking.

\begin{table}[ht]
\centering
\scriptsize
\setlength{\tabcolsep}{4pt}
\begin{tabular}{lccccccc}
\toprule
Method 
& $n_{\text{truncate}}$
& Regularization
& Aesthetic $\uparrow$ 
& HPSv2 $\uparrow$ 
& ImageReward $\uparrow$ 
& PickScore $\uparrow$ 
& Iter. Time (s) $\downarrow$ \\
\midrule
Base Model 
& --
& --
& $5.549 {\scriptstyle \pm 0.665}$ 
& $0.290 {\scriptstyle \pm 0.039}$ 
& $1.074 {\scriptstyle \pm 0.645}$ 
& $0.221 {\scriptstyle \pm 0.014}$ 
& -- \\
DRaFT-1 
& --
& --
& $\mathbf{6.440} {\scriptstyle \pm 0.385}$ 
& $0.458 {\scriptstyle \pm 0.045}$ 
& $\underline{1.556} {\scriptstyle \pm 0.455}$ 
& $\underline{0.235} {\scriptstyle \pm 0.017}$ 
& $\underline{24.7}$ \\
ReFL-5 
& --
& --
& $6.336 {\scriptstyle \pm 0.356}$ 
& $\underline{0.466} {\scriptstyle \pm 0.046}$ 
& $1.473 {\scriptstyle \pm 0.521}$ 
& $0.234 {\scriptstyle \pm 0.018}$ 
& $\textbf{24.1}$ \\
\midrule
ODE-AM
& $1$
& 2nd
& $6.297 {\scriptstyle \pm 0.544}$ 
& $0.413 {\scriptstyle \pm 0.047}$ 
& $1.470 {\scriptstyle \pm 0.503}$ 
& $\underline{0.235} {\scriptstyle \pm 0.017}$ 
& $32.2$ \\
ODE-AM
& $1$
& 4th 
& $6.307 {\scriptstyle \pm 0.427}$ 
& $0.449 {\scriptstyle \pm 0.043}$ 
& $1.522 {\scriptstyle \pm 0.503}$ 
& $\underline{0.235} {\scriptstyle \pm 0.017}$ 
& $32.2$ \\
ODE-AM
& $1$
& 6th
& $6.204 {\scriptstyle \pm 0.400}$ 
& $\mathbf{0.470} {\scriptstyle \pm 0.046}$ 
& $1.531 {\scriptstyle \pm 0.514}$ 
& $\underline{0.235} {\scriptstyle \pm 0.018}$ 
& $32.2$ \\
ODE-AM
& $3$
& 2nd
& $6.297 {\scriptstyle \pm 0.550}$ 
& $0.422 {\scriptstyle \pm 0.047}$ 
& $1.505 {\scriptstyle \pm 0.490}$ 
& $\mathbf{0.236} {\scriptstyle \pm 0.017}$ 
& $65.3$ \\
ODE-AM
& $3$
& 4th 
& $6.313 {\scriptstyle \pm 0.440}$ 
& $0.449 {\scriptstyle \pm 0.043}$ 
& $\mathbf{1.557} {\scriptstyle \pm 0.475}$ 
& $\underline{0.235} {\scriptstyle \pm 0.017}$ 
& $65.3$ \\
ODE-AM
& $3$
& 6th
& $6.309 {\scriptstyle \pm 0.530}$ 
& $0.422 {\scriptstyle \pm 0.043}$ 
& $1.502 {\scriptstyle \pm 0.524}$ 
& $0.234 {\scriptstyle \pm 0.017}$ 
& $65.3$ \\
ODE-AM
& $20$(Full)
& 2nd
& $\underline{6.437} {\scriptstyle \pm 0.520}$ 
& $0.382 {\scriptstyle \pm 0.044}$ 
& $1.160 {\scriptstyle \pm 0.679}$ 
& $0.227 {\scriptstyle \pm 0.017}$ 
& $345.5$ \\
\bottomrule
\end{tabular}
\caption{Comparison on fidelity/reward/time metrics with \textbf{FLUX.2-Klein-4B}. Best values are in \textbf{bold}, second-best are \underline{underlined} .}
\label{tab:flux2_reward_metrics}
\end{table}

\begin{table}[ht]
\centering
\scriptsize
\setlength{\tabcolsep}{4pt}
\begin{tabular*}{0.93\textwidth}{@{\extracolsep{\fill}}lcccccc}
\toprule
Method 
& $n_{\text{truncate}}$
& Regularization
& LPIPS $\uparrow$ 
& 1-MS-SSIM $\uparrow$ 
& Coverage $\uparrow$ 
& Recall $\uparrow$ \\
\midrule
Base Model 
& --
& --
& $0.505 {\scriptstyle \pm 0.073}$ 
& $0.857 {\scriptstyle \pm 0.113}$ 
& -- 
& -- \\
DRaFT-1 
& --
& --
& $0.401 {\scriptstyle \pm 0.069}$ 
& $0.757 {\scriptstyle \pm 0.102}$ 
& $0.080$ 
& $0.010$ \\
ReFL-5 
& --
& --
& $0.431 {\scriptstyle \pm 0.070}$ 
& $0.779 {\scriptstyle \pm 0.103}$ 
& $0.047$ 
& $0.002$ \\
\midrule
ODE-AM
& $1$
& 2nd
& $0.448 {\scriptstyle \pm 0.078}$ 
& $\underline{0.832} {\scriptstyle \pm 0.111}$ 
& $\underline{0.376}$ 
& $\mathbf{0.129}$ \\
ODE-AM
& $1$
& 4th 
& $0.420 {\scriptstyle \pm 0.074}$ 
& $0.801 {\scriptstyle \pm 0.109}$ 
& $0.204$ 
& $0.037$ \\
ODE-AM
& $1$
& 6th 
& $0.424 {\scriptstyle \pm 0.073}$ 
& $0.804 {\scriptstyle \pm 0.102}$ 
& $0.139$ 
& $0.012$ \\
ODE-AM
& $3$
& 2nd
& $\mathbf{0.454} {\scriptstyle \pm 0.074}$ 
& $\mathbf{0.843} {\scriptstyle \pm 0.104}$ 
& $0.306$ 
& $0.057$ \\
ODE-AM
& $3$
& 4th 
& $0.432 {\scriptstyle \pm 0.075}$ 
& $0.810 {\scriptstyle \pm 0.111}$ 
& $0.224$ 
& $0.042$ \\
ODE-AM
& $3$
& 6th 
& $\underline{0.450} {\scriptstyle \pm 0.073}$ 
& $0.813 {\scriptstyle \pm 0.115}$ 
& $\mathbf{0.396}$ 
& $0.083$ \\
ODE-AM
& $20$(Full)
& 2nd
& $0.449 {\scriptstyle \pm 0.065}$ 
& $0.817 {\scriptstyle \pm 0.098}$ 
& $0.262$ 
& $\underline{0.106}$ \\
\bottomrule
\end{tabular*}
\caption{Comparison on diversity and distribution metrics with \textbf{FLUX.2-Klein-4B}. Best values are in \textbf{bold}, second-best are \underline{underlined} (excluding the base model).}
\label{tab:flux2_diversity_metrics}
\end{table}

Finally, we observe significant acceleration with truncated Adjoint Matching compared to full adjoint matching during training: wall-clock time decreased from 345s/iter. in ODE-AM-Full to 32s/iter. in ODE-AM-1. 
For detailed analysis, see Appendix~\ref{speedup_analysis}.

\section{Conclusion and Future Directions}\label{sec:conclude}
We introduced a deterministic AM pipeline for fine-tuning flow models, combining general control regularization with truncated adjoint computation. The method improves reward alignment while preserving diversity and substantially reducing training cost. 

In the future, we will extend deterministic adjoint matching to video generation or discrete diffusion models. In addition, our adaptive regularization is currently developed only for the deterministic dynamics; extending it under the stochastic maximum principle would provide a more complete theory. Finally, a principled recipe for optimal truncation, potentially through online estimation of control intensity, is left for future work.

\medskip
\textbf{Acknowledgment:} Wenpin Tang is supported by NSF CAREER Award DMS-2538791 and the Tang Family Assistant Professorship. Jiayuan Sheng and David D. Yao are part of a Columbia-CityU/HK collaborative project that is supported by InnoHK Initiative, The Government of the HKSAR and the AIFT Lab.

\bibliographystyle{abbrv}
\bibliography{ref}

\begin{thebibliography}{10}

\bibitem{albergo2025stochastic}
M.~S. Albergo, N.~M. Boffi, and E.~Vanden-Eijnden.
\newblock Stochastic interpolants: a unifying framework for flows and diffusions.
\newblock {\em J. Mach. Learn. Res.}, 26:Paper No. [209], 80, 2025.

\bibitem{Bellman1957}
R.~Bellman.
\newblock {\em Dynamic Programming}.
\newblock Princeton Landmarks in Mathematics. Princeton University Press, Princeton, NJ, 1957.
\newblock Reprinted in 2010 by Princeton University Press.

\bibitem{berner2024oc}
J.~Berner, L.~Richter, and K.~Ullrich.
\newblock An optimal control perspective on diffusion-based generative modeling.
\newblock {\em Transactions on Machine Learning Research}, 2024.

\bibitem{black2024ddpo}
K.~Black, M.~Janner, Y.~Du, I.~Kostrikov, and S.~Levine.
\newblock Training diffusion models with reinforcement learning.
\newblock In {\em ICLR}, 2024.

\bibitem{flux-2-2025}
{Black Forest Labs}.
\newblock {FLUX.2: Frontier Visual Intelligence}.
\newblock \url{https://bfl.ai/blog/flux-2}, 2025.

\bibitem{clark2024draft}
K.~Clark, P.~Vicol, K.~Swersky, and D.~J. Fleet.
\newblock Directly fine-tuning diffusion models on differentiable rewards.
\newblock 2023.
\newblock arXiv:2309.17400.

\bibitem{domingoenrich2025}
C.~Domingo-Enrich, M.~Drozdzal, B.~Karrer, and R.~T. Chen.
\newblock Adjoint matching: Fine-tuning flow and diffusion generative models with memoryless stochastic optimal control.
\newblock In {\em ICLR}, 2025.

\bibitem{dosovitskiy2021ViT}
A.~Dosovitskiy, L.~Beyer, A.~Kolesnikov, D.~Weissenborn, X.~Zhai, T.~Unterthiner, M.~Dehghani, M.~Minderer, G.~Heigold, S.~Gelly, J.~Uszkoreit, and N.~Houlsby.
\newblock An image is worth 16x16 words: Transformers for image recognition at scale.
\newblock In {\em ICLR}, 2021.

\bibitem{esser2024scaling}
P.~Esser, S.~Kulal, A.~Blattmann, R.~Entezari, J.~M{\"u}ller, H.~Saini, Y.~Levi, D.~Lorenz, A.~Sauer, and F.~Boesel.
\newblock Scaling rectified flow transformers for high-resolution image synthesis.
\newblock In {\em ICML}, pages 12606--12633, 2024.

\bibitem{Fan23}
Y.~Fan and K.~Lee.
\newblock Optimizing {D}{D}{P}{M} sampling with shortcut fine-tuning.
\newblock 2023.
\newblock arXiv:2301.13362.

\bibitem{fan2023dpok}
Y.~Fan, O.~Watkins, Y.~Du, H.~Liu, M.~Ryu, C.~Boutilier, P.~Abbeel, M.~Ghavamzadeh, K.~Lee, and K.~Lee.
\newblock D{P}{O}{K}: Reinforcement learning for fine-tuning text-to-image diffusion models.
\newblock In {\em Neurips}, 2023.

\bibitem{Gao2024}
R.~Gao, E.~Hoogeboom, J.~Heek, V.~De~Bortoli, K.~P. Murphy, and T.~Salimans.
\newblock Diffusion meets flow matching: Two sides of the same coin.
\newblock 2024.
\newblock \url{https://diffusionflow. github. io}.

\bibitem{HRX25}
Y.~Han, M.~Razaviyayn, and R.~Xu.
\newblock Stochastic control for fine-tuning diffusion models: Optimality, regularity, and convergence.
\newblock In {\em ICML}, pages 21844 -- 21870, 2025.

\bibitem{havens2025adjoint}
A.~Havens, B.~K. Miller, B.~Yan, C.~Domingo-Enrich, A.~Sriram, B.~Wood, D.~Levine, B.~Hu, B.~Amos, B.~Karrer, X.~Fu, G.-H. Liu, and R.~T.~Q. Chen.
\newblock Adjoint sampling: Highly scalable diffusion samplers via adjoint matching.
\newblock 2025.
\newblock arXiv:2504.11713.

\bibitem{ho2022cfg}
J.~Ho and T.~Salimans.
\newblock Classifier-free diffusion guidance.
\newblock In {\em NeurIPS 2021 Workshop on Deep Generative Models and Downstream Applications}, 2021.

\bibitem{kirstain2023pickapic}
Y.~Kirstain, A.~Polyak, U.~Singer, S.~Matiana, J.~Penna, and O.~Levy.
\newblock Pick-a-{P}ic: An open dataset of user preferences for text-to-image generation.
\newblock 2023.
\newblock arXiv:2305.01569.

\bibitem{lai2025principles}
C.-H. Lai, Y.~Song, D.~Kim, Y.~Mitsufuji, and S.~Ermon.
\newblock The principles of diffusion models, 2025.
\newblock arXiv:2510.21890.

\bibitem{Laid25}
C.~Laidlaw, S.~Singhal, and A.~Dragan.
\newblock Correlated proxies: A new definition and improved mitigation for reward hacking.
\newblock In {\em ICLR}, 2025.

\bibitem{lee2026rsm}
J.~Lee, J.~Chang, J.~Kim, and J.~C. Ye.
\newblock Reward score matching: Unifying reward-based fine-tuning for flow and diffusion models.
\newblock 2026.
\newblock arXiv:2604.17415.

\bibitem{li2026mixgrpo}
J.~Li, Y.~Cui, T.~Huang, Y.~Ma, C.~Fan, Y.~Cheng, M.~Yang, Z.~Zhong, and L.~Bo.
\newblock Mix{G}{R}{P}{O}: Unlocking flow-based {G}{R}{P}{O} efficiency with mixed {O}{D}{E}-{S}{D}{E}.
\newblock 2025.
\newblock arXiv:2507.21802.

\bibitem{lipman2023}
Y.~Lipman, R.~T. Chen, H.~Ben-Hamu, M.~Nickel, and M.~Le.
\newblock Flow matching for generative modeling.
\newblock In {\em ICLR}, 2023.

\bibitem{liu2025adjoint}
G.-H. Liu, J.~Choi, Y.~Chen, B.~K. Miller, and R.~T. Chen.
\newblock Adjoint {S}chr{\"o}dinger bridge sampler.
\newblock In {\em Neurips}, 2025.

\bibitem{liu2025flowgrpo}
J.~Liu, G.~Liu, J.~Liang, Y.~Li, J.~Liu, X.~Wang, P.~Wan, D.~Zhang, and W.~Ouyang.
\newblock Flow-{G}{R}{P}{O}: Training flow matching models via online {R}{L}.
\newblock 2025.
\newblock arXiv:2505.05470.

\bibitem{liu2022}
X.~Liu and C.~Gong.
\newblock Flow straight and fast: Learning to generate and transfer data with rectified flow.
\newblock In {\em ICLR}, 2023.

\bibitem{liu2026vgg}
Z.~Liu, T.~Z. Xiao, C.~Domingo-Enrich, W.~Liu, and D.~Zhang.
\newblock Value gradient guidance for flow matching alignment.
\newblock In {\em Neurips}, 2025.

\bibitem{ma2024sit}
N.~Ma, M.~Goldstein, M.~S. Albergo, N.~M. Boffi, E.~Vanden-Eijnden, and S.~Xie.
\newblock Si{T}: Exploring flow and diffusion-based generative models with scalable interpolant transformers.
\newblock In {\em ECCV}. Springer, 2024.

\bibitem{ouyang2022rlhf}
L.~Ouyang, J.~Wu, X.~Jiang, D.~Almeida, C.~Wainwright, P.~Mishkin, C.~Zhang, S.~Agarwal, K.~Slama, and A.~Ray.
\newblock Training language models to follow instructions with human feedback.
\newblock In {\em Neurips}, volume~35, pages 27730--27744, 2022.

\bibitem{peebles2023DiT}
W.~Peebles and S.~Xie.
\newblock Scalable diffusion models with transformers.
\newblock In {\em ICCV}, 2023.

\bibitem{Pontryagin1962}
L.~S. Pontryagin, V.~G. Boltyanskii, R.~V. Gamkrelidze, and E.~F. Mishchenko.
\newblock {\em The Mathematical Theory of Optimal Processes}.
\newblock Interscience Publishers John Wiley \& Sons, Inc., New York-London, 1962.

\bibitem{radford2021clip}
A.~Radford, J.~W. Kim, C.~Hallacy, A.~Ramesh, G.~Goh, S.~Agarwal, G.~Sastry, A.~Askell, P.~Mishkin, J.~Clark, G.~Krueger, and I.~Sutskever.
\newblock Learning transferable visual models from natural language supervision.
\newblock In {\em ICML}, pages 8748--8763, 2021.

\bibitem{Rombach_2022_CVPR}
R.~Rombach, A.~Blattmann, D.~Lorenz, P.~Esser, and B.~Ommer.
\newblock High-resolution image synthesis with latent diffusion models.
\newblock In {\em CVPR}, pages 10684--10695, June.

\bibitem{ronneberger2015unet}
O.~Ronneberger, P.~Fischer, and T.~Brox.
\newblock U-net: Convolutional networks for biomedical image segmentation.
\newblock In {\em MICCAI}, volume 9351 of {\em Lecture Notes in Computer Science}, pages 234--241, 2015.

\bibitem{schuhmann2022laion5b}
C.~Schuhmann, R.~Beaumont, R.~Vencu, C.~Gordon, R.~Wightman, M.~Cherti, T.~Coombes, A.~Katta, C.~Mullis, M.~Wortsman, P.~Schramowski, S.~Kundurthy, K.~Crowson, L.~Schmidt, R.~Kaczmarczyk, and J.~Jitsev.
\newblock Laion-5b: An open large-scale dataset for training next generation image-text models.
\newblock 2022.
\newblock arXiv:2210.08402.

\bibitem{sheng2025understanding}
J.~Sheng, H.~Zhao, H.~Chen, D.~D. Yao, and W.~Tang.
\newblock Understanding sampler stochasticity in training diffusion models for {R}{L}{H}{F}.
\newblock 2025.
\newblock arXiv:2510.10767.

\bibitem{tang2025}
W.~Tang, H.~V. Tran, and Y.~P. Zhang.
\newblock Policy iteration for the deterministic control problems: a viscosity approach.
\newblock {\em SIAM J. Control. Optim.}, 63, 2025.

\bibitem{TZ24tut}
W.~Tang and H.~Zhao.
\newblock Score-based diffusion models via stochastic differential equations.
\newblock {\em Stat. Surv.}, 19:28--64, 2025.

\bibitem{Tang24}
W.~Tang and F.~Zhou.
\newblock Fine-tuning of diffusion models via stochastic control: entropy regularization and beyond.
\newblock 2026.
\newblock To appear in ACC.

\bibitem{wan2025wanopen}
{Team WAN}.
\newblock W{A}{N}: Open and advanced large-scale video generative models.
\newblock 2025.
\newblock arXiv:2503.20314.

\bibitem{Ueh24}
M.~Uehara, Y.~Zhao, T.~Biancalani, and S.~Levine.
\newblock Understanding reinforcement learning-based fine-tuning of diffusion models: A tutorial and review.
\newblock 2024.
\newblock arXiv:2407.13734.

\bibitem{UZ24}
M.~Uehara, Y.~Zhao, K.~Black, E.~Hajiramezanali, G.~Scalia, N.~L. Diamant, A.~M. Tseng, T.~Biancalani, and S.~Levine.
\newblock Fine-tuning of continuous-time diffusion models as entropy-regularized control.
\newblock 2024.
\newblock arXiv:2402.15194.

\bibitem{uehara2024}
M.~Uehara, Y.~Zhao, K.~Black, E.~Hajiramezanali, G.~Scalia, N.~L. Diamant, A.~M. Tseng, T.~Biancalani, and S.~Levine.
\newblock Fine-tuning of continuous-time diffusion models as entropy-regularized control.
\newblock 2024.
\newblock arXiv:2402.15194.

\bibitem{wallace2023diffusiondpo}
B.~Wallace, M.~Dang, R.~Rafailov, L.~Zhou, A.~Lou, S.~Purushwalkam, S.~Ermon, C.~Xiong, S.~Joty, and N.~Naik.
\newblock Diffusion model alignment using direct preference optimization.
\newblock In {\em CVPR}, pages 8228--8238, 2024.

\bibitem{WJ24}
C.~Wang, Y.~Jiang, C.~Yang, H.~Liu, and Y.~Chen.
\newblock Beyond reverse {K}{L}: Generalizing direct preference optimization with diverse divergence constraints.
\newblock In {\em ICLR}, 2024.

\bibitem{wang2003multiscale}
Z.~Wang, E.~P. Simoncelli, and A.~C. Bovik.
\newblock Multi-scale structural similarity for image quality assessment.
\newblock In {\em Conf. Rec. Asilomar Conf. Signals Syst. Comput.}, 2003.

\bibitem{WZ25}
G.~I. Winata, H.~Zhao, A.~Das, W.~Tang, D.~D. Yao, S.-X. Zhang, and S.~Sahu.
\newblock Preference tuning with human feedback on language, speech, and vision tasks: a survey.
\newblock {\em J. Artificial Intelligence Res.}, 82:2595--2661, 2025.

\bibitem{wu2023humanpreferencescorev2}
X.~Wu, Y.~Hao, K.~Sun, Y.~Chen, F.~Zhu, R.~Zhao, and H.~Li.
\newblock Human preference score v2: A solid benchmark for evaluating human preferences of text-to-image synthesis.
\newblock In {\em ICCV}, 2023.

\bibitem{xu2023imagereward}
J.~Xu, X.~Liu, Y.~Wu, Y.~Tong, Q.~Li, M.~Ding, J.~Tang, and Y.~Dong.
\newblock Image{R}eward: Learning and evaluating human preferences for text-to-image generation.
\newblock In {\em Neurips}, volume~36, pages 15903--15935, 2023.

\bibitem{xue2025dancegrpo}
Z.~Xue, J.~Wu, Y.~Gao, F.~Kong, L.~Zhu, M.~Chen, Z.~Liu, W.~Liu, Q.~Guo, W.~Huang, and P.~Luo.
\newblock Dance{G}{R}{P}{O}: Unleashing {G}{R}{P}{O} on visual generation.
\newblock 2025.
\newblock arXiv:2505.07818.

\bibitem{YZ99}
J.~Yong and X.~Y. Zhou.
\newblock {\em Stochastic controls: Hamiltonian systems and HJB equations}, volume~43 of {\em Applications of Mathematics (New York)}.
\newblock Springer-Verlag, New York, 1999.

\bibitem{zhang2024improvinggflownets}
D.~Zhang, Y.~Zhang, J.~Gu, R.~Zhang, J.~Susskind, N.~Jaitly, and S.~Zhai.
\newblock Improving {G}{F}low{N}ets for text-to-image diffusion alignment.
\newblock 2024.
\newblock arXiv:2406.00633.

\bibitem{zhang2023gddim}
Q.~Zhang, M.~Tao, and Y.~Chen.
\newblock g{D}{D}{I}{M}: generalized denoising diffusion implicit models.
\newblock In {\em ICLR}, 2023.

\bibitem{zhang2018perceptual}
R.~Zhang, P.~Isola, A.~A. Efros, E.~Shechtman, and O.~Wang.
\newblock The unreasonable effectiveness of deep features as a perceptual metric.
\newblock In {\em CVPR}, 2018.

\bibitem{ZZT24}
H.~Zhao, H.~Chen, J.~Zhang, D.~D. Yao, and W.~Tang.
\newblock Scores as {A}ctions: a framework of fine-tuning diffusion models by continuous-time reinforcement learning.
\newblock 2024.
\newblock arXiv:2409.08400.

\bibitem{ZZT24b}
H.~Zhao, H.~Chen, J.~Zhang, D.~D. Yao, and W.~Tang.
\newblock Scores as {A}ctions: fine tuning diffusion generative models by continuous-time reinforcement learning.
\newblock In {\em ICML}, pages 77371 -- 77389, 2025.

\end{thebibliography}

\clearpage
\appendix
\addcontentsline{toc}{chapter}{Appendices}  

\begingroup
\etocsetnexttocdepth{subsection}  
\etocstandardlines
\etocsettocstyle{\section*{Appendix Contents}}{}
\localtableofcontents
\endgroup

\newpage

\section{Proof of Deterministic Optimal Control}\label{appendix-doc}
In this section, we derive all conclusions using a more general version of formulation \eqref{eq:det_oc_problem}
\begin{equation}
\label{eq:det_oc_problem_general}
\min_{u} \mathcal L(u;\mathbf{X}^u) = 
\int_0^1 \Big(f(\|u(X_t,t)\|) + h(X_t,t) \Big) \,dt + g(X_1),
\qquad X_0 = x.
\end{equation}
In this main part of this paper, we set $h=0$ by default.
\subsection{HJB equation of Deterministic Optimal Control}
\label{apdx:doc_hjb}
Consider the formulation \eqref{eq:det_oc_problem_general}, deterministic dynamic \eqref{eq:det_dynamics} and value function \eqref{eq:det_value_function}, we apply the dynamic programming principle on $[t,t+\Delta t]$, then obtain
\[
V(x,t)
=
\min_u 
\left\{
\left(
f(\|u(x,t)\|) + h(x,t)
\right)\Delta t
+ 
V\big(x + (v^{base}(x,t)+u(x,t))\Delta t,\, t+\Delta t\big)
\right\}.
\]
Expanding $V$ to first order in $\Delta t$ yields
\[
0 = \min_u 
\left[
f(\|u\|) + h(x,t)
+ \partial_t V(x,t)
+ \nabla_x V(x,t)^\top (v^{base}(x,t)+u)
\right].
\]
The minimizing control satisfies
\[
\nabla_uf(\|u\|) + \nabla_x V(x,t) = 0.
\]
Therefore, when $f$ is polynomial (or otherwise explicitly differentiable and invertible in a useful way), one can solve for $u^{\star}(x,t)$ pointwisely from this equation.
This establishes the deterministic Hamilton--Jacobi--Bellman equation
\[
f(\|u^{\star}(x,t)\|) + h(x,t)
+ \partial_t V(x,t)
+ \nabla_x V(x,t)^\top (v^{base}(x,t)+u^{\star}(x,t))
=0, \quad V(x,1) = g(x).
\]

\subsection{Pontryagin Maximal Principle(PMP) on Deterministic Control}
\label{apdx:doc_pmp}
The deterministic Hamitonian of Problem \eqref{eq:det_oc_problem_general} under dynamic \eqref{eq:det_dynamics} is
\[
H(t,x,u,a) = a^\top\Big(v^{base}(x,t) + u(x,t)\Big) + f(\|u\|) + h(x,t).
\]
The Pontryagin maximum principle gives the forward-backward system
\begin{equation}\label{pmp}
\left\{
\begin{aligned}
    \dot X_t &= \nabla_a H(t,x,u,a) = v^{base}(x,t) + u(x,t), \\
    \dot a_t &= -\nabla_x H(t,x,u,a) = -\Big[ a^\top \nabla_x v^{base}(x,t) + \nabla_x h(x,t) \Big], \\
    a_1 &= \nabla_{X_1}g(X_1).
\end{aligned}
\right.
\end{equation}
For $u^{\star}$ to be the optimal control, the necessary first order condition is that
\[
0 = \nabla_u H(t,x,u,a) = a + \nabla_u f(\|u\|).
\]
$a_t$ is the costate and can be defined as gradient of value function, \textit{i.e.} $\nabla_xV(x,t)$, which perfectly aligns with Appendix \ref{apdx:doc_hjb}. That is under deterministic optimal control perspective, HJB and PMP give the same relationship between optimal control and value function. Therefore, lean adjoint $\tilde{a}_t$ acts as a reasonable surrogate in Algorithm \ref{algo:main}.

\subsection{Proof of Theorem \ref{thm:invex_f}}
\label{pf:invex_f}
The result follows from PMP in a straightforward way. In our setup given by \eqref{pmp}, since the optimal control minimizes $H$ pointwise, we have
\[\nabla_u H = f'(\|u\|)\frac{u}{\|u\|} + a = 0.\]
Thus $u$ is colinear with $-a$. Let $r=\|u\|$. Taking norms gives
\[f'(r)=\|a\|.\]
By invertibility of $f'$, we obtain
\[r = (f')^{-1}(\|a\|).\]
Substituting back yields
\[u^\star = -\frac{(f')^{-1}(\|a\|)}{\|a\|}a.\]
Typically when $f(x)=\frac{1}{p\lambda}x^p$, then $f'(x)= x^{p-1}/\lambda$, we have
$(f')^{-1}(y)=\lambda^{\frac{1}{p-1}}y^{\frac{1}{p-1}}.$
Substituting $y=\|a_t\|$ into Theorem \ref{thm:invex_f} yields
\[
u_t^\star
=
-\frac{(f')^{-1}(\|a_t\|)}{\|a_t\|}a_t
=
-\lambda^{\frac{1}{p-1}}
\|a_t\|^{\frac{2-p}{p-1}}a_t .
\]

\subsection{Justification to Generalized Time Horizon}
By the deterministic PMP theorem, we apply \eqref{pmp} on the truncated interval $[t,1]$, and the adjoint variable satisfies a same formulation
\[
\dot a_s
=
-\nabla_x H(X_s,u_s^\star,a_s,s),
\qquad
a_1=-\nabla g(X_1).
\]
For all $t\leq s\leq 1$, we have
\[
\nabla_u H(X_s,u_s^\star,a_s,s)=0.
\]
With a similar argument to Appendix \ref{pf:invex_f}, we yields
\[
u_s^\star
=
-\frac{(f')^{-1}(\|a_s\|)}{\|a_s\|}a_s.
\]

\section{Extension to Stochastic cases}
\label{apdx:stoch_extension}
In this section we consider the stochastic control problem
\[
dX_t = \bigl(b(X_t,t) + \sigma(t)u(X_t,t)\bigr)dt + \sigma(t)dB_t,
\]
with objective
\[
J(u)
=
\mathbb{E}
\left[
\int_0^1
\left(
\frac{1}{2}\|u(X_t,t)\|^2 + h(X_t,t)
\right)dt
+
g(X_1)
\right], \quad X_0 \text{ given}.
\]
Here \(B_t\) is a Brownian motion and control \(u\) is adapted to the filtration generated by the noise and observations. For clear illustration, in stochastic case we only apply quadratic penalty $\frac{1}{2}\|u_t\|^2$, general form can also be derived similarly as in deterministic optimal control.

\subsection{Stochastic Maximum Principle (SMP)}

For the controlled SDE, the stochastic Hamiltonian is
\[
H(t,x,u,p,q)
=
p^\top \bigl(b(x,t)+\sigma(t)u\bigr)
+
\operatorname{Tr}\bigl(q^\top \sigma(t)\bigr)
+
\frac{1}{2}\|u\|^2
+
h(x,t).
\]

The classical stochastic maximum principle introduces a pair of adjoint processes \((p_t,q_t)\) solving the BSDE
\[
dp_t
=
-\nabla_{X_t}H(t,X_t,u_t,p_t,q_t)dt
+
q_t dB_t,
\qquad
p_1=\nabla_{X_1} g(X_1).
\]

Since \(\sigma=\sigma(t)\) is independent of \(x\), we have
\[
dp_t
=
-\Big[
p_t^\top \nabla_x b(x,t)
+
\nabla_xh(x,t)
\Big]dt
+
q_t dB_t,
\qquad
p_1=\nabla_{X_1} g(X_1).
\]

The optimality condition is
\[
0
=
\nabla_uH(t,X_t,u_t,p_t,q_t)
=
\sigma(t)^\top p_t + u_t,
\]
so that
\[
u_t^\star
=
-\sigma(t)^\top p_t.
\]
On a historical note, adjoint variables (or costates) are Lagrangian multipliers in control theory, representing the sensitivity of the optimal cost to changes in state.
In the stochastic control setting,
the adjoint variables are a pair: 
first-order value gradient $p_t$
and second-order correction $q_t$ from noise,
which are specified by a system of backward stochastic differential equations derived from the maximum principle \cite{YZ99}. 
The ``adjoint" defined in \cite{domingoenrich2025} is an estimate of 
the first-order value gradient by averaging out the noise,
and can thus be viewed as a degenerate form in standard control theory.
On the other hand,
such-defined adjoint in deterministic control
is exactly the one derived from the maximum principle. Next we will explain the intuition for both $p_t$ and $q_t$.

\textbf{Intuition for \(p_t\).}
In the stochastic maximum principle, the adjoint process \(p_t\) can be interpreted as the sensitivity of the future \textbf{minimal} cost with respect to the current state \(X_t\). 
The value function $V(t,x)$ in SOC setting is defined as the minimum \textbf{expected} future cost starting from state \(x\) at time \(t\):
\[
V(t,x)
:=
\min_{u}
\mathbb{E}
\left[
\int_t^1
\left(
\frac{1}{2}\|u_s\|^2 + h(X_s,s)
\right)ds
+
g(X_1)
\,\middle|\,
X_t=x
\right],
\]
More precisely, if the value function \(V(x,t)\) is sufficiently smooth, then along the optimal trajectory one has
\[
p_t = \nabla_x V(x,t).
\]
This is exactly the same as adjoint $a_t$ in deterministic optimal control PMP setting. If \(p_t\) is large in some direction, then moving the current state in that direction will significantly increase the future cost. In the control-affine case
\[
dX_t = \bigl(b(X_t,t)+\sigma(t)u_t\bigr)dt+\sigma(t)dB_t,
\]
with quadratic control cost, the optimality condition gives
\[
u_t^\star = -\sigma(t)^\top p_t.
\]
Thus the optimal control pushes the state in the direction that decreases the future cost.

\textbf{Intuition for \(q_t\).}
The process \(p_t\) is itself random, because it is evaluated along the stochastic trajectory \(X_t\) driven by Brownian Motion $B_t$. To understand the role of \(q_t\), suppose again that the value function is smooth and
\[
p_t = \nabla_x V(x,t).
\]
Applying Itô's formula to $p_t$, gradient of value function, we obtain
\[
dp_t
=
\left[
\partial_t \nabla_x V(x,t)
+
\nabla_x^2 V(x,t)
\bigl(b(x,t)+\sigma(t)u_t\bigr)
+
\frac{1}{2}
\operatorname{Tr}
\left(
\sigma(t)\sigma(t)^\top
\nabla_x^3 V(x,t)
\right)
\right]dt
+
\nabla_x^2 V(x,t)\sigma(t)dB_t.
\]
On the other hand, the stochastic maximum principle writes the adjoint equation as
\[
dp_t
=
-H_x(t,X_t,u_t,p_t,q_t)dt
+
q_t dB_t.
\]
Comparing the martingale parts of the two expressions, we get
\[
q_t
=
\nabla_x^2 V(x,t)\sigma(t).
\]
Hence \(q_t\) is the Brownian noise coefficient of the adjoint process \(p_t\). Since \(p_t\) represents the gradient of the future cost, \(q_t\) describes how this gradient changes along the stochastic noise directions.

\subsection{Stochastic adjoint matching}
Adjoint Matching reformulates the stochastic optimal control problem as a regression-style fixed-point problem. 
For the controlled process
\[
dX_t
=
\Big(b(X_t,t)+\sigma(t)u(X_t,t)\Big)dt
+
\sigma(t)dB_t,
\]
define the cost-to-go under a fixed control \(u\) by
\[
J(u;x,t)
:=
\mathbb{E}_{X\sim p^u}
\left[
\int_t^1
\left(
\frac{1}{2}\|u(X_s,s)\|^2+h(X_s,s)
\right)ds
+
g(X_1)
\,\middle|\,
X_t=x
\right].
\]
The value function is then
\[
V(x,t)=\min_u J(u;x,t)=J(u^\star;x,t),
\]
and the optimal control satisfies
\[
u^{\star}(x,t)
=
-\sigma(t)^\top \nabla_x V(x,t)
=
-\sigma(t)^\top \nabla_x J(u^*;x,t).
\]
Instead of constructing an importance-weighted estimator of \(u^\star\), Adjoint Matching directly regresses the current control toward
\[
-\sigma(t)^\top \nabla_x J(u;x,t),
\]
so that the optimal control is characterized as the fixed point
\[
u(x,t)
=
-\sigma(t)^\top \nabla_x J(u;x,t).
\]

\textbf{Pathwise adjoint versus the SMP adjoint.}
The adjoint used in Adjoint Matching should not be confused with the classical stochastic maximum principle adjoint \((p_t,q_t)\). 
For a sampled trajectory \(X\sim p^u\), this work\cite{domingoenrich2025} defines the pathwise adjoint
\[
a(t;\mathbf{X}^u,u)
:=
\nabla_{X_t}
\left[
\int_t^1
\left(
\frac{1}{2}\|u(X_s,s)\|^2+h(X_s,s)
\right)ds
+
g(X_1)
\right].
\]
This quantity is the sensitivity of the realized remaining cost along one trajectory with respect to the intermediate state \(X_t\). 
It is therefore a pathwise sensitivity, not the BSDE adjoint in the stochastic maximum principle. 
Its conditional expectation recovers the gradient of the fixed-control cost-to-go:
\[
\mathbb{E}_{X\sim p^u}
\left[
a(t;\mathbf{X}^u,u)\mid X_t=x
\right]
=
\nabla_x J(u;x,t).
\]
Consequently, only at the optimal control \(u=u^\star\) do we obtain
\[
\mathbb{E}_{X\sim p^{u^\star}}
\left[
a(t;\mathbf{X}^u,u^\star)\mid X_t=x
\right]
=
\nabla_x J(u^\star;x,t)
=
\nabla_x V(x,t).
\]
Thus the pathwise adjoint used by Adjoint Matching becomes an estimator of the value-gradient adjoint only at the optimal fixed point.

\textbf{Basic Adjoint Matching objective.}
Using the pathwise adjoint as a stochastic target, the basic Adjoint Matching loss is
\[
\mathcal{L}_{\mathrm{Basic\text{-}AM}}(u;X)
=
\frac{1}{2}
\int_0^1
\left\|
u(X_t,t)
+
\sigma(t)^\top a(t;X,\bar u)
\right\|^2dt,
\qquad
X\sim p^{\bar u},
\]
where \(\bar u=\operatorname{stopgrad}(u)\). 
This objective can be viewed as a consistency loss for the fixed-point equation
\[
u(x,t)
=
-\sigma(t)^\top \nabla_x J(u;x,t).
\]
Adjoint ODE in Stochastic AM can be derived as 
\begin{align*}
    \frac{d}{dt}a(t;\mathbf{X}^u,u) =& 
    -\left[
    a^\top \nabla_x v^{base}(x,t)
    +
    a^\top \sigma(t)\nabla_x u(x,t)
    +
    \nabla_x h(x,t)
    +
    u(x,t)^\top \nabla_x u(x,t)
    \right] \\
    a(1;X) =& \nabla_{X_1}g(X_1)
\end{align*}

At the optimal control, Adjoint Matching uses the fixed-point relation
\[
u^\star(x,t)
=
\mathbb{E}
\left[
-\sigma(t)^\top a(t;\mathbf{X}^u,u^\star)
\,\middle|\,
X_t=x
\right].
\]
Multiplying by \(\nabla_x u^\star(x,t)\) gives
\[
\mathbb{E}
\left[
u^\star(x,t)^\top\nabla_x u^\star(x,t)
+
a(t;\mathbf{X}^u,u^\star)^\top\sigma(t)\nabla_x u^\star(x,t)
\,\middle|\,
X_t=x
\right]
=0.
\]
These are exactly the two \(u\)-dependent Jacobian terms in the full pathwise adjoint ODE. 
Thus, at the optimum, their conditional expectation vanishes. 
The lean adjoint drops these terms and uses
\[
\frac{d}{dt}\tilde a(t;\mathbf{X})
=
-
\left[
\tilde a(t;\mathbf{X})^\top \nabla_x v^{base}(X_t,t)
+
\nabla_x h(X_t,t)
\right],
\qquad
\tilde a(1;X)=\nabla_{X_1} g(X_1).
\]
Hence \(\tilde a\) is not generally the true pathwise gradient; it is a simplified surrogate adjoint that preserves the optimal fixed point while avoiding the expensive \(\nabla_x u\) terms.

\subsection{General Stochastic Adjoint Matching}
The memoryless schedule $\sigma(t)^2=2\eta_t$ (see Theorem 1, \cite{domingoenrich2025}) is a sufficient theoretical condition for robust fine-tuning results, as it cancels the initial-value bias and yields the tilted terminal law
$p^\star(X_1)\propto p_{\mathrm{base}}(X_1)\exp(r(X_1))$.

However, this condition is not necessarily the best  choice to implement in finite-step training: since the memoryless schedule injects very large noise near early denoising times, it may over-explore highly noisy states and weaken useful reward propagation in practice. To implement this memoryless noise schedule, we have to clip near the initial noise and decrease step size for higher accuracy. 

\begin{theorem}\label{t4.1}
Denote $\kappa_t := \frac{\alpha'_t}{\alpha_t}$ and $\eta_t := \beta_t(\frac{\alpha'_t}{\alpha_t}\beta_t - \beta'_t)$, where $\alpha_t$ and $\beta_t$ are schedules define in refence flow \eqref{ref_flow}. Suppose we already have a flow model $v^{base}$ or score estimator $s^{base}$, equivalent backward sampling SDEs with noise schedule $\sigma(t)$ are
\begin{align*}
dX_t&=\left[\kappa_t x+\left(\frac{\sigma^2(t)}{2}+\eta_t\right)s^{\mathrm{base}}(X_t, t)\right]+\sigma(t)dB_t\\
&= \underbrace{\left[\left(1+\frac{\sigma^2(t)}{2\eta_t}\right)v^{\mathrm{base}}(X_t, t) -\frac{\sigma^2(t)\kappa_t}{2\eta_t}X_t\right]}_{b(X_t,t)}dt+ \sigma(t)dB_t
\end{align*}
where the base model satisfies
$v^{\mathrm{base}}(x,t)=\kappa_t x+\eta_t s^{\mathrm{base}}(x,t).$
Assume $\sigma(t)>0$, $\eta_t>0$, and the controlled model $u_\theta$ is obtained by the difference between finetuned model and base model. Then the induced stochastic optimal control for
\begin{equation}
dX_t=\bigl[
b(X_t,t)+\sigma(t)u_\theta(X_t,t)
\bigr]dt+\sigma(t)dB_t,
\end{equation}
is
\begin{align}
u_\theta(x,t)&=\frac{\sigma(t)^2+2\eta_t}{2\sigma(t)}
\bigl(s_\theta(x,t)-s_{\mathrm{base}}(x,t)\bigr)\\
&=\frac{\sigma(t)^2+2\eta_t}{2\sigma(t)\eta_t}
\bigl(v_\theta(x,t)-v_{\mathrm{base}}(x,t)\bigr).
\end{align}
\end{theorem}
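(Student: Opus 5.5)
We prove the statement by direct algebra, matching drifts. There are two parts: first the equivalence of the two forms of the backward sampling SDE, then the identification of the control $u_\theta$.

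\textbf{Step 1: the two forms of the base SDE agree.} We start from the marginal-preserving family \eqref{eq:fm_general_sde_score}. Its drift is $v_t^\star(x)+\frac{\sigma^2(t)}{2}s_t(x)$. We substitute the velocity–score relation $v^{\mathrm{base}}=\kappa_t x+\eta_t s^{\mathrm{base}}$, which is \cite[Proposition 6.3.1]{lai2025principles} applied to the base model. This gives the drift $\kappa_t x+\left(\frac{\sigma^2(t)}{2}+\eta_t\right)s^{\mathrm{base}}$, which is the first expression.

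For the second expression, $\eta_t>0$ lets us invert the relation as $s^{\mathrm{base}}=(v^{\mathrm{base}}-\kappa_t x)/\eta_t$. Substituting, the drift becomes
\[
\kappa_t x+\Bigl(1+\tfrac{\sigma^2}{2\eta_t}\Bigr)(v^{\mathrm{base}}-\kappa_t x).
\]
Collecting the $x$ terms, $\kappa_t x-\kappa_t x-\frac{\sigma^2\kappa_t}{2\eta_t}x=-\frac{\sigma^2\kappa_t}{2\eta_t}x$. This yields $b(x,t)$.

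\textbf{Step 2: identify the control.} The fine-tuned model $v_\theta$, equivalently $s_\theta=(v_\theta-\kappa_t x)/\eta_t$, is sampled with the same noise schedule $\sigma(t)$. Its drift $b_\theta$ is therefore given by the same formula with base replaced by $\theta$. We write the controlled SDE as $dX_t=[b+\sigma u_\theta]dt+\sigma dB_t$ and require it to coincide with the fine-tuned sampler. This forces
\[
\sigma(t)\,u_\theta=b_\theta-b.
\]
In the score form the $\kappa_t x$ terms cancel, so
\[
\sigma u_\theta=\Bigl(\tfrac{\sigma^2}{2}+\eta_t\Bigr)(s_\theta-s^{\mathrm{base}}).
\]
Dividing by $\sigma(t)>0$ gives $u_\theta=\frac{\sigma^2+2\eta_t}{2\sigma}(s_\theta-s^{\mathrm{base}})$.

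In the velocity form the linear terms $-\frac{\sigma^2\kappa_t}{2\eta_t}x$ cancel, leaving
\[
\sigma u_\theta=\frac{2\eta_t+\sigma^2}{2\eta_t}(v_\theta-v^{\mathrm{base}}),
\]
which is the second formula. The two formulas are consistent because $s_\theta-s^{\mathrm{base}}=(v_\theta-v^{\mathrm{base}})/\eta_t$.

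\textbf{Main obstacle.} The computation itself is routine. The delicate point is interpretive: we must justify that "the controlled model $u_\theta$ is obtained by the difference between finetuned model and base model" means the fine-tuned network is run through the same $\sigma(t)$-SDE, so that the control is identified by drift matching. I would state this explicitly as the defining convention.

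Positivity of $\sigma$ and $\eta_t$ is used only for the divisions. It also guarantees that the prefactor $\frac{\sigma^2+2\eta_t}{2\sigma}$ is finite and nonzero, so the map between $u_\theta$ and $v_\theta$ is a bijection.
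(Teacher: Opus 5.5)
Your proof is correct and follows the paper's argument: both identify the control by drift matching, $\sigma(t)u_\theta = b_\theta - b^{\mathrm{base}} = \bigl(\tfrac{\sigma(t)^2}{2}+\eta_t\bigr)(s_\theta - s^{\mathrm{base}})$, and then convert to the velocity form using $s_\theta - s^{\mathrm{base}} = (v_\theta - v^{\mathrm{base}})/\eta_t$. The differences are minor: you derive the two forms of the base SDE directly from \eqref{eq:fm_general_sde_score} where the paper cites \cite{domingoenrich2025}, and you state explicitly the convention, left implicit in the paper, that the fine-tuned network is sampled with the same $\sigma(t)$.
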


\begin{proof}[Proof sketch]
According to Equation (10), (11) in \cite{domingoenrich2025}, the unified base SDE can be written as
\[
b^{\mathrm{base}}(x,t)
=
\kappa_t x+
\left(
\frac{\sigma(t)^2}{2}+\eta_t
\right)s^{\mathrm{base}}(x,t).
\]
After fine-tuning, the drift becomes
\[
b_\theta(x,t)
=
\kappa_t x+
\left(
\frac{\sigma(t)^2}{2}+\eta_t
\right)s_\theta(x,t).
\]
Thus the drift difference is
\[
b_\theta-b^{\mathrm{base}}
=
\left(
\frac{\sigma(t)^2}{2}+\eta_t
\right)
(s_\theta-s^{\mathrm{base}}).
\]
In the control-affine SOC form, this difference must equal $\sigma(t)u_\theta$:
\[
\sigma(t)u_\theta
=
\left(
\frac{\sigma(t)^2}{2}+\eta_t
\right)
(s_\theta-s^{\mathrm{base}}).
\]
Hence
\[
u_\theta
=
\frac{\sigma(t)^2+2\eta_t}{2\sigma(t)}
(s_\theta-s^{\mathrm{base}})
=
\frac{\sigma(t)^2+2\eta_t}{2\sigma(t)\eta_t}
(v_\theta-v^{\mathrm{base}}).
\]
Adjoint Matching regresses the current control onto the lean adjoint target
$-\sigma(t)^\top\tilde a_t$, so substituting this expression for $u_\theta$ gives the stated loss.
\end{proof}

\begin{corollary}
the Adjoint Matching loss
\[
\mathcal L_{\mathrm{AM}}(\theta)
=
\mathbb E\int_0^1
\left\|
\frac{\sigma(t)^2+2\eta_t}{2\sigma(t)\eta_t}
\bigl(v_\theta(X_t,t)-v^{\mathrm{base}}(X_t,t)\bigr)
+
\sigma(t)^\top\tilde a_t
\right\|^2dt
\]
is the least-squares matching objective for the control $u_\theta$ under an arbitrary
non-memoryless noise schedule.
\end{corollary}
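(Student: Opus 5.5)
The plan is to obtain the corollary by substituting the parametrization of Theorem~\ref{t4.1} into the basic Adjoint Matching regression, and then checking that nothing in the least-squares argument uses the memoryless schedule. I would proceed in three steps.

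\textbf{Step 1 (the regression target).} Fix an arbitrary schedule $\sigma(t)>0$ with $\eta_t>0$. Consider the controlled SDE $dX_t=[b(X_t,t)+\sigma(t)u(X_t,t)]dt+\sigma(t)dB_t$, where $b$ is the base drift from Theorem~\ref{t4.1}. From the stochastic extension above we have two facts. First, the optimal control satisfies the fixed-point relation
\[
u^\star(x,t)=-\sigma(t)^\top\nabla_x J(u^\star;x,t)=\mathbb E\big[-\sigma(t)^\top a(t;\mathbf X^{u^\star},u^\star)\,\big|\,X_t=x\big].
\]
Second, the lean adjoint $\tilde a$ differs from the pathwise adjoint $a$ only by the two $u$-dependent Jacobian terms. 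At the optimum, the conditional expectation of these terms vanishes, so $\mathbb E[\tilde a_t\mid X_t=x]$ equals $\mathbb E[a_t\mid X_t=x]=\nabla_xV(x,t)$ there. Neither fact involves the choice of $\sigma$: the memoryless condition is used only to identify the terminal law with the tilted distribution $p_{\mathrm{base}}\exp(r)$, not to establish the SMP or the adjoint identities.

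\textbf{Step 2 (the least-squares property).} For $\bar u=\operatorname{stopgrad}(u)$ and $X\sim p^{\bar u}$, the functional $u\mapsto\mathbb E\int_0^1\|u(X_t,t)+\sigma(t)^\top\tilde a_t\|^2dt$ is a pointwise quadratic in $u(x,t)$. By the $L^2$ projection property of conditional expectation, its minimizer over measurable $u$ is $u(x,t)=-\sigma(t)^\top\mathbb E[\tilde a_t\mid X_t=x]$. Consequently:
\begin{itemize}
\item Combined with Step 1, $u^\star$ is a fixed point of the regression, i.e. a stationary point at which the gradient in $\theta$ vanishes when $u_\theta=u^\star$.
\item This is exactly what \emph{least-squares matching objective for the control} means: the loss is a regression of $u$ onto a target whose conditional mean agrees with $u^\star$ at the optimum.
\end{itemize}

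\textbf{Step 3 (parametrization).} Theorem~\ref{t4.1} identifies the drift perturbation $\sigma(t)u_\theta$ with the difference of the fine-tuned and base drifts. This gives
\[
u_\theta=\frac{\sigma(t)^2+2\eta_t}{2\sigma(t)\eta_t}\bigl(v_\theta-v^{\mathrm{base}}\bigr).
\]
Substituting this into the loss of Step 2 yields exactly $\mathcal L_{\mathrm{AM}}(\theta)$ as displayed in the corollary. The prefactor is finite and nonzero because $\sigma(t)>0$ and $\eta_t>0$, so the map $v_\theta\mapsto u_\theta$ is a bijection at each $t$. Hence minimizing over $v_\theta$ is equivalent to minimizing over $u_\theta$.

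\textbf{Main obstacle.} The delicate point is Step 1: justifying that the lean-adjoint target retains the correct conditional mean at the optimum for a general schedule. I would verify this by repeating the argument given for the lean adjoint above, with $b$ in place of $v^{\mathrm{base}}$ (so the adjoint ODE uses $\nabla_x b$). I would also state explicitly that the resulting optimum is the SOC optimum for base drift $b$. It is not necessarily the exponentially tilted distribution, since that identification requires the memoryless schedule. This is precisely why the corollary claims only a least-squares characterization of the control.
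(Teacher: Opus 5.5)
Your proposal is correct and follows essentially the paper's route. The paper obtains $u_\theta=\frac{\sigma(t)^2+2\eta_t}{2\sigma(t)\eta_t}(v_\theta-v^{\mathrm{base}})$ by equating the drift difference with $\sigma(t)u_\theta$, then substitutes it into the lean-adjoint regression $\|u+\sigma(t)^\top\tilde a_t\|^2$; your Steps 1--2 make explicit the fixed-point/$L^2$-projection justification (and the use of $\nabla_x b$ in the adjoint, as in Algorithm~\ref{alg:adjoint_matching_general_sde}) that the paper leaves to its preceding discussion, though in Step 1 the dropped terms enter $a_t-\tilde a_t$ through the lean propagator on $[t,s]$, so $\mathbb E[a_t-\tilde a_t\mid X_t]=0$ needs the tower property over $\mathcal F_s$ plus the Markov property, not just the per-time conditional identity.
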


Algorithm \ref{alg:adjoint_matching_general_sde} is the designed for KL-regularization stochastic adjoint matching with general noise schedule.
\begin{algorithm}
\caption{Stochastic Adjoint Matching with General Noise Schedule}
\label{alg:adjoint_matching_general_sde}
\begin{algorithmic}[1]
\Require Pre-trained velocity field $v^{\mathrm{base}}(x,t)$, noise schedule $\sigma(t)$, step size $h$, number of iterations $N$, batch size $m$.
\State Initialize $v_{\theta} \leftarrow v^{\mathrm{base}}$
\For{$n = 0, \dots, N-1$}

    \State Sample $m$ trajectories $\{X_t\}_{t \in \{0,h,\dots,1\}}$ using current model  via:
    \[
        X_{t+h}
        =
        X_t
        +
        h\left(
        v_\theta(X_t,t)
        +
        \frac{\sigma(t)^2}{2\eta_t}
        \big(v_\theta(X_t,t) - \kappa_t X_t\big)
        \right)
        +
        \sqrt{h}\,\sigma(t)\,\epsilon_t,\quad \epsilon_t \sim \mathcal{N}(0,I)
    \]

    \State Solve lean adjoint ODE backward:
    \[
    \tilde a_{t-h}
    =
    \tilde a_t
    +
    h\,\tilde a_t^\top
    \nabla_{X_t}
    \left(
        v^{base}(X_t,t)
        +
        \frac{\sigma(t)^2}{2\eta_t}
        \big(v^{base}(X_t,t) - \kappa_t X_t\big)
        \right),
    \quad
    \tilde a_1 = -\nabla_{X_1} r(X_1)
    \]

Note that both $X_t$ and $\tilde{a}_t$ should be computed without gradients, \textit{i.e.} Stop gradients: $X_t = \mathbf{stopgrad}(X_t)$, $\tilde a_t = \mathbf{stopgrad}(\tilde a_t)$.

    \State Compute loss:
    \[
    \mathcal{L}(\theta)
    =
    \frac{h}{m}\sum_{t}
    \left\|
    \textcolor{red}{\frac{\sigma(t)^2 + 2\eta_t}{2\sigma(t)\eta_t}}
    \big(
    v_\theta(X_t,t) - v_{\mathrm{base}}(X_t,t)
    \big)
    +
    \sigma(t)\tilde a_t
    \right\|^2
    \]

    \State Compute the gradient $\nabla_\theta \mathcal L$ and update $\theta$

\EndFor

\end{algorithmic}
\end{algorithm}
\clearpage

\section{Control intensity throughout denoising timesteps}\label{appendix-1dexp}

In this section, we are motivated to train stochastic and deterministic controls with adaptive importance weights on denoising time steps. To explain the efficiency of our adaption, we analyze the tractable inconsistent monotonicity trend of control norms. For simplicity, we illustrate with 1D Gaussian priors, in which the noise schedules and the log-likelihoods of process distributions (the ``score" functions) jointly contribute to unique control intensity maxima. Then we move on to the ODE analysis, in which the multi-modal mix-Gaussian priors become the principal factor that steers control intensity.

\begin{proposition}\label{p4-1}
Consider a one-dimensional backward process with Brownian noise schedule $\beta(\cdot)$:
$dX_t = b(X_t,T-t)\,dt + \eta\sqrt{2\beta(T-t)}\,dB_t,$ in which $\eta\geq 0$ is a free parameter on injected noise, differentiable reward function $r(\cdot)$ is evaluated on $X_T$ at the terminal backward denoising time. 
If the associated adjoint \cite{domingoenrich2025} evolves according to
\[\frac{d}{dt}a(X_t,t)=-\nabla_X b(X_t,T-t)\,a(X_t,t),
\qquad a(X_T,T)=-\nabla r(X_T).\]
Then the optimal control admits the form
\[u^*(X_t,t)=c_\beta^*(t,T,\eta)\,\nabla r(X_T),\]
in which we define the time-aware component
\[
c_\beta^*(t,T,\eta)
=
\eta\sqrt{2\beta(T-t)}
\exp\!\left(\int_t^T \nabla_X b(X_s,T-s)\,ds\right).
\]
\end{proposition}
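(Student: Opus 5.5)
The argument has two steps: integrate the scalar adjoint equation explicitly, then substitute the result into the stochastic optimality relation from Appendix~\ref{apdx:stoch_extension}.

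\textbf{Step 1: solve the adjoint ODE.} Along a fixed sampled path $s\mapsto X_s$, the equation
\[
\frac{d}{dt}a(X_t,t)=-\nabla_X b(X_t,T-t)\,a(X_t,t)
\]
is linear with coefficient $k(t):=\nabla_X b(X_t,T-t)$. We assume this coefficient is continuous, hence integrable, in $t$. Integrating backward from the terminal condition $a(X_T,T)=-\nabla r(X_T)$ gives
\[
a(X_t,t)=\exp\!\left(\int_t^T \nabla_X b(X_s,T-s)\,ds\right)a(X_T,T)
=-\exp\!\left(\int_t^T \nabla_X b(X_s,T-s)\,ds\right)\nabla r(X_T).
\]
To verify, differentiate in $t$. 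The lower limit contributes $-k(t)$ times the exponential, which reproduces the equation. At $t=T$ the exponential equals $1$, so the terminal condition holds.

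\textbf{Step 2: apply the optimality condition.} The state equation is the controlled SDE with diffusion coefficient $\sigma(t)=\eta\sqrt{2\beta(T-t)}$, so in control-affine form the drift is $b+\sigma u$. For the quadratic penalty, Appendix~\ref{apdx:stoch_extension} gives the optimal control as $u^\star_t=-\sigma(t)^\top p_t$. In one dimension this reads
\[
u^\star=-\eta\sqrt{2\beta(T-t)}\;a(X_t,t),
\]
where the adjoint of \cite{domingoenrich2025} plays the role of $p_t$. Substituting the formula from Step~1 cancels the two minus signs and yields
\[
u^\star(X_t,t)=\eta\sqrt{2\beta(T-t)}\exp\!\left(\int_t^T\nabla_X b(X_s,T-s)\,ds\right)\nabla r(X_T)=c^*_\beta(t,T,\eta)\,\nabla r(X_T).
\]

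\textbf{Main obstacle: interpreting the adjoint.} The delicate point is what $a$ means in Step~2. Earlier in the paper the pathwise (or lean) adjoint equals the value gradient only in conditional expectation at the optimum. The proposition, however, stipulates the adjoint dynamics and uses the adjoint directly. I would therefore read the statement pathwise: $u^\star$ is the regression target $-\sigma a$ of Adjoint Matching evaluated along the given trajectory. This is exactly the fixed-point identification at $u=u^\star$. I would state this convention explicitly.

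The sign convention also needs a remark. The terminal condition $a_T=-\nabla r$ corresponds to the cost $g=-r$. Under that convention the formula produces the positive coefficient $c^*_\beta$ multiplying $\nabla r(X_T)$, so the optimal control pushes in the reward-increasing direction, as expected. The scalar setting matters here: in one dimension the exponential is just a number, so no time-ordered matrix exponential is needed.
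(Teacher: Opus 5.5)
Your proposal is correct and follows the paper's route. The paper gives no separate written proof, but its VE and VP computations use exactly your two steps: the integrating-factor solution $a(X_t,t)=a(X_T,T)\exp\bigl(\int_t^T\nabla_X b(X_s,T-s)\,ds\bigr)$, then substitution into $u^\star=-\sigma(t)\,a(X_t,t)$ with $\sigma(t)=\eta\sqrt{2\beta(T-t)}$. Your explicit caveat is a useful addition that the paper leaves implicit: the identification is pathwise, reading $u^\star$ as the fixed-point regression target, not as the exact SMP adjoint with its martingale term.
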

\textbf{Remark.}
Consequently, the monotonicity of $|u^*(X_t,t)|$ is determined jointly by the noise schedule $\beta$ and the accumulated drift Jacobian
$\nabla_X b$. In particular, if $c_\beta^*(t,T,\eta)$ has a unique critical point
$t^*\in(0,T)$, with
\[
\frac{d}{dt}c_\beta^*(t,T,\eta)>0 \quad \text{for } t<t^*,
\qquad
\frac{d}{dt}c_\beta^*(t,T,\eta)<0 \quad \text{for } t>t^*,
\]
then both $|u^*(X_t,t)|$ and $\mathbb E|u^*(X_t,t)|$ exhibits a unique maximum in $t^*\in[0, T]$.

\subsection{Variance Exploding (VE) Examples}
Assume we start from a simplest Gaussian model $Y_0 \sim \mathcal{N}(0,1)$ and consider its corresponding 1D Variance Exploding (VE) forward process $dY_t = \sqrt{2t}\,dB_t$. The VE trajectory $Y_t = Y_0 + \int_0^t \sqrt{2s}\,dB_s\sim \mathcal{N}(0,1+t^2)$ follows immediately from our strong assumption on $\text{Law}(Y_0)$, and its score function exhibits a closed form expression: $\nabla \log p(Y_t) = -\frac{Y_t}{1+t^2}$. If we fix a finite time horizon $T$, the backward process $X_t:=Y_{T-t}$ of our VE model is given in the form of gDDIM\cite{zhang2023gddim}: 
\[dX_t = -\frac{(1+\eta^2)(T-t)}{1+(T-t)^2}X_tdt + \eta\sqrt{2(T-t)}\,dB_t\]
where $X_0 \sim \mathcal{N}(0, 1+T^2)$.
By solving the $t$-linear ODE from Proposition \ref{p4-1}, we obtain
\begin{equation*}
a(X_t, t) = a(X_T, T)[1+(T-t)^2]^{-(1+\eta^2)/2} = -\nabla r(X_T)[1+(T-t)^2]^{-(1+\eta^2)/2}.
\end{equation*}
Therefore, the time-aware component of optimal control follows
\begin{equation*}\label{ve-max}
c^*_{VE}(X_t, t) = \eta\sqrt{2(T-t)}\left(\frac{(T-t)}{[1+(T-t)^2]^{(1+\eta^2)}}\right)^{1/2},\quad \arg\max_{t}c^*_{VE} = T - \frac{1}{\sqrt{1+2\eta^2}}.
\end{equation*}

To illustrate the inconsistent monotonicity of $u^*$, we plot the norm of $c^*$ against denoising time $t\in[0,T]$. We see from Figure ~\ref{fig:ve_vp_1d_examples-a} that the maximum value of $c^*(t,T,\eta)$ is more heavily tilted towards terminal time-steps for larger $T$, an observation consistent with Equation \eqref{ve-max}.

\begin{figure}[t]
    \centering

    \begin{subfigure}[b]{0.3\textwidth}
        \centering
        \includegraphics[width=\textwidth]{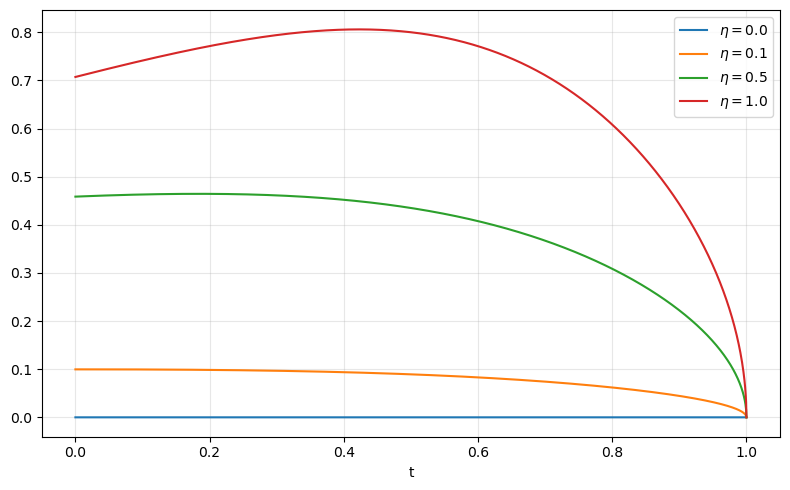}
        \caption{VE, $T=1$}
    \end{subfigure}
    \hfill
    \begin{subfigure}[b]{0.3\textwidth}
        \centering
        \includegraphics[width=\textwidth]{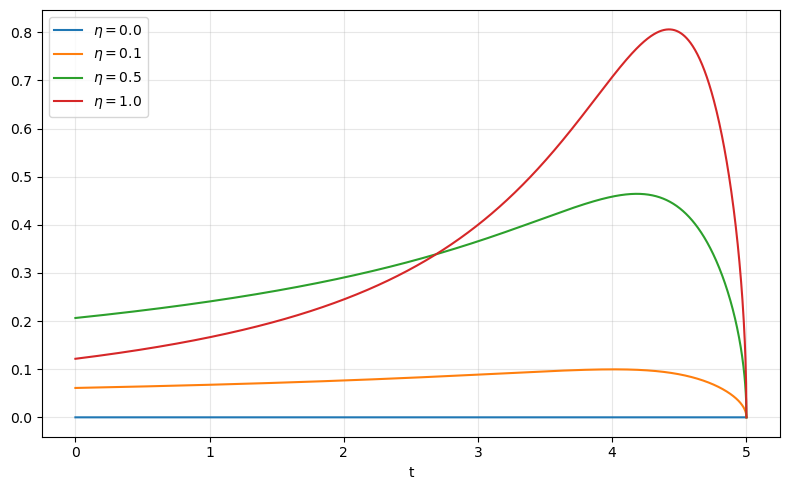}
        \caption{VE, $T=5$}
    \end{subfigure}
    \hfill
    \begin{subfigure}[b]{0.3\textwidth}
        \centering
        \includegraphics[width=\textwidth]{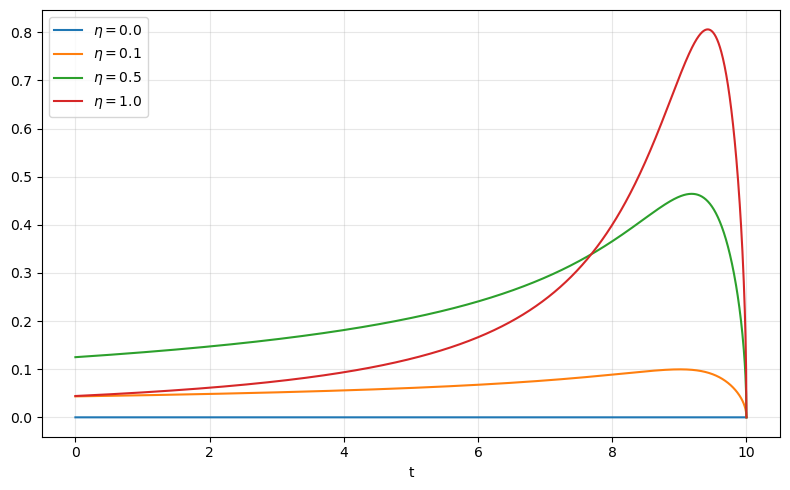}
        \caption{VE, $T=10$}
    \end{subfigure}

    \vspace{0.5em}

    \begin{subfigure}[b]{0.3\textwidth}
        \centering
        \includegraphics[width=\textwidth]{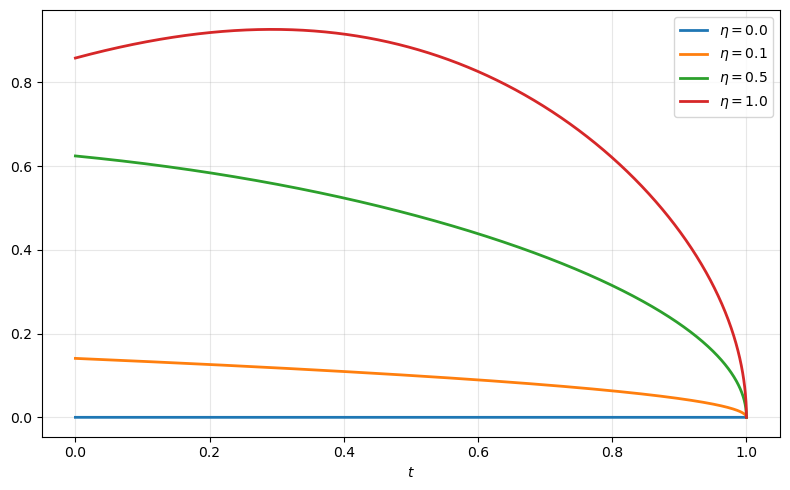}
        \caption{VP, $T=1$}
    \end{subfigure}
    \hfill
    \begin{subfigure}[b]{0.3\textwidth}
        \centering
        \includegraphics[width=\textwidth]{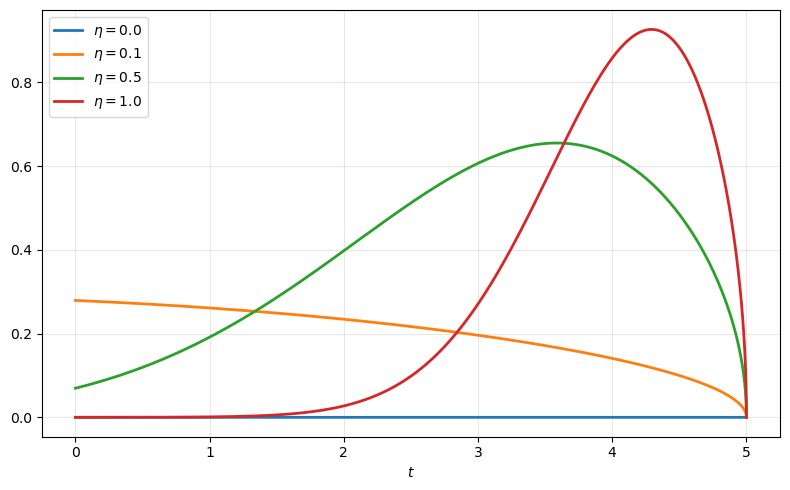}
        \caption{VP, $T=5$}
    \end{subfigure}
    \hfill
    \begin{subfigure}[b]{0.3\textwidth}
        \centering
        \includegraphics[width=\textwidth]{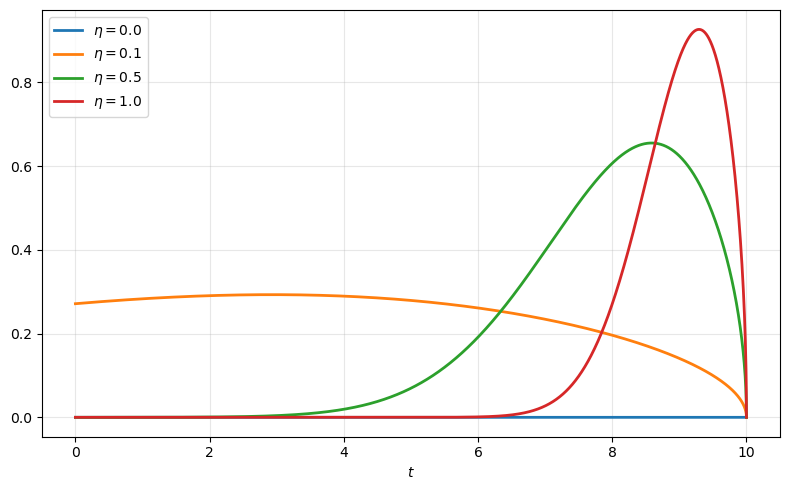}
        \caption{VP, $T=10$}
    \end{subfigure}

    \caption{Curves of $c^*(t,T,\eta)$ versus denoising time $t$ under different terminal horizons $T$ and stochasticity levels $\eta$ in the 1D VE (top row) and VP (bottom row) cases. Data variance is set to 1.}
    \label{fig:ve_vp_1d_examples-a}
\end{figure}

\subsection{Variance Preserving (VP) Example}
Similarly, for VP model $dY_t = -tY_tdt + \sqrt{2t}dB_t,\space Y_0\sim\mathcal{N}(0,1)$, the marginal distribution is standard Gaussian $\mathcal{N}(0,1)$. Therefore, its score function $\nabla \log p(Y_t) = -Y_t$, and the backward process of VP model is given in the form of gDDIM\cite{zhang2023gddim}: 
\[dX_t = -\eta^2(T-t)X_tdt + \eta\sqrt{2(T-t)}\, dB_t.\]
With Proposition \ref{p4-1},
\begin{equation*}
a(X_t, t) = a(X_T, T)e^{-\frac{\eta^2}{2}(T-t)^2} = -\nabla r(X_T)e^{-\frac{\eta^2}{2}(T-t)^2},
\end{equation*}
and the time-aware component of optimal control follows
\begin{equation}\label{vp-max}
c^*_{VP}(X_t,t) = \eta\sqrt{2(T-t)}\cdot e^{-\frac{\eta^2}{2}(T-t)^2},\quad \arg\max_{t}c^*_{VP} = T - \frac{1}{\sqrt{2}\eta}
\end{equation}

Similar to 1D VE case, we plot the curve of $c^*$ and $t$ in Figure ~\ref{fig:ve_vp_1d_examples-a}. A similar argument can be applied to a bi-modal data prior as well:

\begin{proposition}\label{a4.3}
Let $(X_t)_{0\le t\le T}$ follow the ODE flow $dX_t = b(X_t,t)\,dt,$
in which $p_{noise} = X_0 \sim \mathcal N(0, T^2)$ and $p_{data} = X_1 \sim \mathcal N(-\mu,1) + \mathcal N(\mu,1).$ ($1<<\mu<<T$). Then the control norm has a unique extreme value at $t\in[0, T]$.
\end{proposition}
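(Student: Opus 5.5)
The plan is to mimic the one-dimensional Gaussian computations above, now for the deterministic lean adjoint \eqref{simp_adjoint_ode} with $\eta=0$. In one dimension the adjoint solves a scalar linear ODE, so
\[
a(X_t,t)=a(X_T,T)\exp\!\Big(\int_t^T \partial_x b(X_s,s)\,ds\Big).
\]
By Theorem~\ref{thm:invex_f}, $|u_t^\star|$ is a strictly increasing function of $|a_t|$. For polynomial regularization it is $\lambda^{1/(p-1)}|a_t|^{1/(p-1)}$. Hence it suffices to show that $t\mapsto \log|a_t| = \log|a_T| + \int_t^T \partial_x b(X_s,s)\,ds$ has a unique extremum on $[0,T]$. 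Its derivative is $-\partial_x b(X_t,t)$. So the task reduces to showing that $t\mapsto \partial_x b(X_t,t)$ changes sign exactly once along the trajectory.

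First I compute $b$ explicitly for the interpolation $X_t=\beta_t X_0+\alpha_t X_1$ with Gaussian noise of variance $T^2$ and the bimodal mixture (equal weights). Conditioned on each mode, the marginal $p_t$ is a two-component Gaussian mixture with means $\pm\alpha_t\mu$ and variance $s_t^2=\beta_t^2T^2+\alpha_t^2$. The velocity is $b=\kappa_t x+\eta_t\nabla\log p_t$, using the relation from Section~\ref{sec:prelim}, or directly $\mathbb E[\dot X_t\mid X_t=x]$. Writing $m_t=\alpha_t\mu$, we have
\[
\nabla\log p_t(x)=-\frac{x}{s_t^2}+\frac{m_t}{s_t^2}\tanh\!\Big(\frac{m_t x}{s_t^2}\Big),
\]
so $\partial_x b=\kappa_t-\frac{\eta_t}{s_t^2}+\frac{\eta_t m_t^2}{s_t^4}\,\mathrm{sech}^2(m_t x/s_t^2)$. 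The unimodal part $\kappa_t-\eta_t/s_t^2$ plays the role of the Gaussian computations in Proposition~\ref{4.5-a}. There it has a single sign change: the flow is expansive early, while the variance $T^2$ noise is contracted toward scale $1$, and contractive or expansive late depending on the scales. The mixture term is the new contribution.

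Next I use the scale separation $1\ll\mu\ll T$ to split $[0,T]$ into regimes. Early on, $s_t\approx \beta_t T$ dominates and $m_t^2/s_t^2\ll 1$, so the $\mathrm{sech}^2$ correction is negligible and the sign of $\partial_x b$ is that of the Gaussian part. Late, once $m_t\gg s_t$, a typical trajectory sits near one mode with $|m_t x/s_t^2|\gg1$, so $\mathrm{sech}^2$ is exponentially small and again the Gaussian sign rule applies. The only delicate window is the intermediate time where $m_t\asymp s_t$. There the mode-splitting term pushes $\partial_x b$ positive, since trajectories separate. I would show that in this window $\partial_x b$ has a definite sign matching one side of the Gaussian crossover, so the total number of sign changes stays one.

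The main obstacle is this intermediate regime. There the mixture term can be comparable to the Gaussian part, and its size along the path depends on $X_t$ itself. I would first try to bound $\mathrm{sech}^2$ uniformly by $1$ and show that the extra term only shifts the crossing time by $O(\mu/T)$ without creating new zeros. This works by checking monotonicity of $\partial_x b$ in $t$ within the window, using the asymptotics in $\mu$ and $T$. If uniform bounds fail, I would fall back on stating the result for typical trajectories, those with $|X_t|$ near $m_t$, or for $\mathbb E|u_t^\star|$, as in the remark after Proposition~\ref{p4-1}.
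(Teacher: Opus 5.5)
Reducing to the number of sign changes of $\partial_x b(X_t,t)$ along the path is the right reduction. It is more careful than the paper's argument. The paper switches to a VE-type mixture $\tfrac12\mathcal N(\pm\mu,1+t^2)$, writes the $\tanh$ score, and only compares the sign of the \emph{score itself} at the two ends ($s(0,x)>0$ for $0<x<O(\mu)$, $s(T,x)<0$). That addresses neither the Jacobian that drives the lean adjoint nor uniqueness.

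The gap in your plan is the step you flag, and it cannot be closed: in the window $m_t\asymp s_t$ the sign of $\partial_x b$ is not definite along every trajectory. For the rectified flow $X_t=(1-t)X_0+tX_1$ one gets
\[
\partial_x b(x,t)=\frac{1}{s_t^2}\Bigl[-(1-t)T^2+t+\frac{(1-t)T^2\,t\,\mu^2}{s_t^2}\,\mathrm{sech}^2\Bigl(\frac{t\mu x}{s_t^2}\Bigr)\Bigr],\qquad s_t^2=(1-t)^2T^2+t^2 .
\]
The Gaussian part is \emph{contractive}, not expansive, until $1-t\approx T^{-2}$. That is long after the splitting window $1-t\asymp\mu/T$, so on the window you would need $\partial_x b<0$. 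Write $r=t\mu/s_t$ and $z=X_t/s_t$. On the window the sign of $\partial_x b$ is that of $r^2\,\mathrm{sech}^2(rz)-1$, up to $O(\mu/T)$ corrections. By quantile preservation of the one-dimensional flow, $z$ is the $F_1(X_1)$-quantile of $\tfrac12\mathcal N(r,1)+\tfrac12\mathcal N(-r,1)$.

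Take the path ending at $X_1=\mu-1$, whose quantile is about $\tfrac12+0.079$:
\begin{itemize}
\item At $r=1$ the value is $\mathrm{sech}^2(z)<1$.
\item At $r=1.2$, $z\approx0.40$, so $1.44\,\mathrm{sech}^2(0.48)\approx1.15>1$.
\item At $r=2$, $z\approx1.0$, so $4\,\mathrm{sech}^2(2.0)\approx0.28<1$.
\end{itemize}
Along this path $\partial_x b$ therefore has signs $-,+,-$ across the window, and turns $+$ again after $1-t\approx T^{-2}$. That gives three sign changes and two local maxima of $\|u_t^\star\|$. By continuity the same holds for a set of terminal points of positive probability, so the pathwise statement you set out to prove is false.

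A transparent way to see this is the continuity equation, which gives $\frac{d}{dt}\log p_t(X_t)=-\partial_x b(X_t,t)$ and hence $|a_t|=|a_1|\,p_t(X_t)/p_1(X_1)$. The control profile is just the density seen along the path:
\begin{itemize}
\item On the symmetric path $X\equiv0$ it is $\propto s_t^{-1}e^{-t^2\mu^2/(2s_t^2)}$, with a single peak at $s_t\approx\mu$.
\item On paths near a mode it peaks near $t^\star=T^2/(1+T^2)$.
\item Paths that sit in the valley while the modes split, and leave it afterwards, see both bumps.
\end{itemize}
This also shows why your primary tactic cannot work. Bounding $\mathrm{sech}^2\le1$ only gives an upper bound on $\partial_x b$, which cannot limit the number of its zeros. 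Moreover, $\partial_x b(X_t,t)$ is not monotone in $t$ on the window.

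To get a true statement you have to change it. Options are to restrict to the symmetric path, to paths ending sufficiently close to or outside a mode (with an explicit threshold), or to an averaged quantity such as $\mathbb E\|u_t^\star\|$. Each needs its own argument, which neither your plan nor the paper's endpoint comparison supplies.
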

\begin{proof}
Suppose $X_t = X_0+\int_0^t\sqrt{2t}\,dt$, $X_t\sim \mathcal N(-\mu, 1+t^2) + \mathcal N(\mu, 1+t^2)$, which gives
\[p(t,x) = \frac{1}{2\sqrt{2\pi (1+t^2)}}\Big(\exp\big(-\frac{(x-\mu)^2}{2(1+t)^2}\big)+\exp\big(-\frac{(x+\mu)^2}{2(1+t)^2}\big)\Big),\]
and
\[s(t,x): =\nabla_x\log p(t,x) = \frac{-x+\mu\tanh(\frac{\mu x}{1+t^2})}{1+t^2}.\]
When $t=0$, for $0<x<O(\mu)$, $s(0, x) = -x+\mu\tanh(\mu x) > 0$. But $s(x, T)\approx -\frac{x}{1+T^2}<0$.\\
Therefore, for multimodal data, the control norm increases at $t= T$ and decreases when $t\to 0$.

\end{proof}

\subsection{Probability Flow Example}
To motivate the empirical effectiveness of adaptive AM algorithms on flow-based models, we study the following probability flow example. In this elementary setting, we explicitly calculate the control norm intensity, which attains a unique maximum at some later denoising time-step. 

Moreover, we discover that higher-order polynomial regularization $f(r)=\lambda r^p/p$ preserves the same peak location but flattens the normalized control curves, so a broader later denoising time region remains close to the maximal control strength. This improves second-order truncated adjoint matching by keeping the neighboring control signals effective as well.

\begin{proposition}\label{a4.5-a}
Let $X_t = (1-t)X_0+tX_1$, where $X_0\sim\mathcal N(\mu, \sigma^2)$ and $X_1\sim \mathcal N(0, 1)$, then the control norm takes a unique extreme value at $t^* = \frac{\sigma^2}{1+\sigma^2}.$
\end{proposition}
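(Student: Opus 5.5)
Since $X_0\perp X_1$ are Gaussian, $(X_t,X_1-X_0)$ is jointly Gaussian and the velocity is affine in $x$. The plan is therefore to compute $v_t(x)=\mathbb E[X_1-X_0\mid X_t=x]$ explicitly, take its derivative $\partial_x v_t$, solve the scalar lean adjoint ODE \eqref{simp_adjoint_ode} in closed form, and read off $|u_t^\star|$ from Theorem~\ref{thm:invex_f}.

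\textbf{Step 1 (the velocity).} Write $X_t=(1-t)X_0+tX_1$. Then
\[
\mathbb E X_t=(1-t)\mu,\qquad \mathrm{Var}(X_t)=D_t:=(1-t)^2\sigma^2+t^2,\qquad \mathrm{Cov}(X_1-X_0,X_t)=t-(1-t)\sigma^2 .
\]
Gaussian conditioning then gives
\[
v_t(x)=-\mu+\frac{t-(1-t)\sigma^2}{D_t}\bigl(x-(1-t)\mu\bigr),\qquad \partial_x v_t=\frac{t-(1-t)\sigma^2}{D_t}=\frac{\tfrac12 D_t'}{D_t},
\]
because $D_t'=-2(1-t)\sigma^2+2t$. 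The Jacobian is constant in space, so the adjoint is deterministic along any trajectory. This is the point that makes the computation tractable.

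\textbf{Step 2 (the adjoint).} In one dimension the lean adjoint ODE reads $\dot{\tilde a}_t=-\tfrac12(\log D_t)'\,\tilde a_t$. Integrating from $t$ to $1$, and using $D_1=1$, gives
\[
\tilde a_t=a(1)\sqrt{D_1/D_t}=\frac{a(1)}{\sqrt{(1-t)^2\sigma^2+t^2}} .
\]

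\textbf{Step 3 (the control and its maximizer).} By Theorem~\ref{thm:invex_f}, $|u_t^\star|=(f')^{-1}(|\tilde a_t|)$. Since $f$ is strictly convex and increasing, $(f')^{-1}$ is strictly increasing, so $t\mapsto|u_t^\star|$ is maximized exactly where $|\tilde a_t|$ is, i.e.\ where the quadratic $D_t$ is minimized. This step shows the peak location is independent of $f$, and in particular of $p$ and $\lambda$, consistent with the remark after the proposition. Since $D_t$ is a strictly convex quadratic ($D''=2(1+\sigma^2)>0$), $D_t'=0$ gives the unique minimizer $t^\star=\sigma^2/(1+\sigma^2)\in(0,1)$, with $D_{t^\star}=\sigma^2/(1+\sigma^2)$.

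As a by-product, for polynomial $f$ we have $|u^\star_t|\propto|\tilde a_t|^{1/(p-1)}\propto D_t^{-1/(2p-2)}$. This gives Proposition~\ref{prop:relative-control} immediately.

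\textbf{Main obstacle.} The only delicate step is Step 1: getting the conditional covariance, and hence $\partial_x v_t$, right, including the sign conventions. The remaining care points are checking that the time-orientation and sign convention of the adjoint (with $a(1)\neq0$) matches \eqref{simp_adjoint_ode}, and noting that $u$ does not enter the lean adjoint, so the controlled trajectory does not matter.
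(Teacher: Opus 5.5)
Your proposal is correct and follows the paper's proof essentially line for line. Gaussian conditioning gives the affine velocity with $\partial_x v_t=\tfrac12 D_t'/D_t$, and integrating the lean adjoint gives $\tilde a_t=a(1)/\sqrt{D_t}$; your integration is actually cleaner than the paper's, whose intermediate exponents contain two sign slips that cancel. The paper's proof stops at the adjoint formula and locates the minimizer of $D_t$ only later, for polynomial $f$, in the proof of Proposition~\ref{prop:relative-control-strength}; your Step 3 supplies this step directly for any strictly convex increasing $f$, using that $(f')^{-1}$ is strictly increasing.
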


\begin{proof}
Observe that $X_t$ is the linear combination of 2 i.i.d Gaussian, so 
\[X_t\sim\mathcal N\big((1-t)\mu, (1-t)^2\sigma^2+t^2\big)\]
and we denote $(1-t)\mu:=m(t)$ and $(1-t)^2\sigma^2+t^2 := D(t)$. Since $(X_1-X_0, X_t)$ is joint Gaussian,
\begin{align*}
v(t) = \mathbb E[X_1-X_0|X_t = x] 
&= \mathbb E[X_1-X_0]+\frac{Cov(X_1-X_0, X_t)}{Var(X_t)}(x-m(t))\\
&=-\mu+\frac{t-(1-t)\sigma^2}{D(t)}(x-m(t))\\
&:= A(t)x+B(t).
\end{align*}
Since $\nabla_x v_t(x) = A(t)$ and with lean adjoint formulation,
\begin{align*}
a(t) &= a(1)\exp\Big(-\int_t^1A(s)\,ds\Big)\\
&= a(1)\exp\Big(-\frac{1}{2}\int_t^1\frac{D'(s)}{D(s)}\,ds\Big)\\
&=a(1)\exp\Big(-\frac{1}{2}\log D(t)\Big)\\
&=\frac{a(1)}{\sqrt{(1-t)^2\sigma^2+t^2}}.
\end{align*}
\end{proof}

\begin{proposition}[Normalized temporal control strength]
\label{prop:relative-control-strength}
With a same setup as Proposition \ref{a4.5-a}, for generalized polynomial regularization
$
f(r)=\frac{1}{p}r^p \, (p>1),
$
the optimal lean-adjoint control satisfies
$
\|u_t^\star\|
\propto
D(t)^{-\frac{1}{2(p-1)}}.
$
Let
\[
t^\star=\frac{\sigma^2}{1+\sigma^2}
\]
be the unique maximizer of $\|u_t^\star\|$. Then the normalized relative
control strength
\[
R_p(t)
:=
\frac{\|u_t^\star\|}{\max_{s\in[0,1]}\|u_s^\star\|}
\]
is given by
\[
R_p(t)
=
\left(
\frac{D(t^\star)}{D(t)}
\right)^{\frac{1}{2(p-1)}}
=
\left(
\frac{\frac{\sigma^2}{1+\sigma^2}}
{(1-t)^2\sigma^2+t^2}
\right)^{\frac{1}{2(p-1)}}.
\]
\end{proposition}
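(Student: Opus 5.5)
The plan is to combine the closed-form lean adjoint from Proposition~\ref{a4.5-a} with the control formula of Theorem~\ref{thm:invex_f}. Then we locate the maximizer of $D(t)^{-1}$ by elementary calculus.

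First, recall from the proof of Proposition~\ref{a4.5-a} that the velocity is affine in $x$, $v_t(x)=A(t)x+B(t)$. Its slope is $A(t)=\frac{t-(1-t)\sigma^2}{D(t)}=\frac{D'(t)}{2D(t)}$, where $D(t)=(1-t)^2\sigma^2+t^2$. Since $A$ does not depend on $x$, the lean adjoint ODE \eqref{simp_adjoint_ode} is a scalar linear ODE and integrates explicitly. Using $D(1)=1$, this gives $\tilde a_t=a(1)D(t)^{-1/2}$, so that $\|\tilde a_t\|=\|a(1)\|\,D(t)^{-1/2}$ with $a(1)\neq 0$.

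Next, we apply Theorem~\ref{thm:invex_f} with $f(r)=\frac1p r^p$, for which $f'(r)=r^{p-1}$ and $(f')^{-1}(y)=y^{1/(p-1)}$. The optimal control is $u_t^\star=-\|\tilde a_t\|^{\frac{1}{p-1}-1}\tilde a_t$, hence $\|u_t^\star\|=\|\tilde a_t\|^{1/(p-1)}$. Substituting the previous step yields
\[
\|u_t^\star\|=\|a(1)\|^{\frac{1}{p-1}}\,D(t)^{-\frac{1}{2(p-1)}}\propto D(t)^{-\frac{1}{2(p-1)}} .
\]
A scaling $\lambda$, as in the main-text version, only multiplies this by a constant.

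Since $p>1$, the exponent $-\frac{1}{2(p-1)}$ is negative, so maximizing $\|u_t^\star\|$ over $[0,1]$ is equivalent to minimizing $D$. The function $D$ is a strictly convex quadratic with leading coefficient $1+\sigma^2>0$. We have $D'(t)=2\big(t(1+\sigma^2)-\sigma^2\big)$, which vanishes only at $t^\star=\frac{\sigma^2}{1+\sigma^2}\in(0,1)$, so $t^\star$ is the unique minimizer of $D$ on $[0,1]$ and hence the unique maximizer of $\|u_t^\star\|$. Evaluating there gives
\[
D(t^\star)=\frac{\sigma^2}{(1+\sigma^2)^2}+\frac{\sigma^4}{(1+\sigma^2)^2}=\frac{\sigma^2}{1+\sigma^2}.
\]
Dividing, the constant $\|a(1)\|^{1/(p-1)}$ cancels, and we obtain $R_p(t)=\big(D(t^\star)/D(t)\big)^{1/(2(p-1))}$, which is the stated formula. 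This also gives Proposition~\ref{prop:relative-control}, since $D(t)(1+\sigma^2)$ appears in its denominator.

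The only delicate step is the first one: confirming that $A(t)=D'(t)/(2D(t))$ and keeping the sign convention of the backward integration straight. The quantities $\mathrm{Cov}(X_1-X_0,X_t)=t-(1-t)\sigma^2$ and $D'(t)/2=t-(1-t)\sigma^2$ agree, so the exponent integrates to $\log D(t)-\log D(1)$. Everything after that is elementary calculus.
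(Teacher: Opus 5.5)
Your proposal is correct and follows the paper's proof essentially step for step. It uses the closed-form lean adjoint $\|\tilde a_t\|=\|a(1)\|D(t)^{-1/2}$, the polynomial optimality condition giving $\|u_t^\star\|\propto D(t)^{-1/(2(p-1))}$, and minimization of the quadratic $D$ at $t^\star=\sigma^2/(1+\sigma^2)$ with $D(t^\star)=\sigma^2/(1+\sigma^2)$; your explicit check that $A(t)=D'(t)/(2D(t))$ and of the backward-integration sign is worthwhile, since the paper's proof of Proposition~\ref{a4.5-a} reaches the same $D(t)^{-1/2}$ only through two compensating sign slips.
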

\begin{remark}
In particular, $R_p(t^\star)=1$ and $R_p(t)\le 1$ for all $t\in[0,1]$.
Moreover, for every fixed $t\neq t^\star$, $R_p(t)$ is increasing in $p$.
Therefore larger $p$ yields a flatter normalized control profile and keeps
a broader time region close to the maximal control strength.
\end{remark}

\begin{proof}
From the Gaussian interpolation calculation,
\[
\|a(t)\|
=
\frac{\|a(1)\|}{\sqrt{D(t)}}.
\]
For $f(r)=r^p/(p\lambda)$, the optimality condition gives
\[
\|u_t^\star\|
=
\lambda^{\frac{1}{p-1}}\|a(t)\|^{\frac{1}{p-1}}
=
C_p D(t)^{-\frac{1}{2(p-1)}},
\]
where $C_p=\lambda^{1/(p-1)}\|a(1)\|^{1/(p-1)}$ is independent of $t$.
Thus maximizing $\|u_t^\star\|$ is equivalent to minimizing $D(t)$.
Since
\[
D'(t)=-2(1-t)\sigma^2+2t,
\]
the unique critical point is
\[
t^\star=\frac{\sigma^2}{1+\sigma^2},
\]
and
\[
D(t^\star)=\frac{\sigma^2}{1+\sigma^2}.
\]
Dividing $\|u_t^\star\|$ by its maximum gives
\[
R_p(t)
=
\left(
\frac{D(t^\star)}{D(t)}
\right)^{\frac{1}{2(p-1)}}.
\]
Since $D(t)\ge D(t^\star)$, we have $R_p(t)\le 1$. Finally, for
$t\neq t^\star$,
\[
0<\frac{D(t^\star)}{D(t)}<1,
\]
so decreasing the exponent $1/(2(p-1))$ increases $R_p(t)$. Hence $R_p(t)$
is increasing in $p$.
\end{proof}

\begin{figure}[ht]
  \centering
  \includegraphics[width=\linewidth]{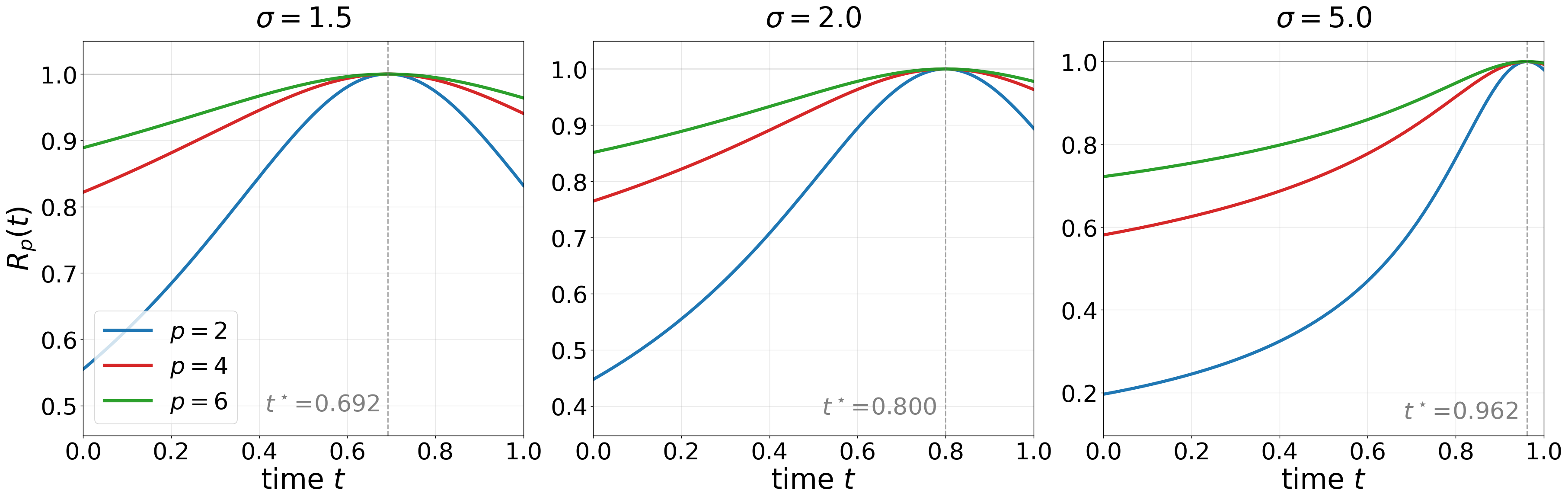}
  \caption{Normalized relative control strength
    $R_p(t)$ for the polynomial
    regularizer with order $p\in\{2,4,6\}$ and three
    noise levels $\sigma\in\{1.5,\,2.0,\,5.0\}$.}
  \label{fig:relative-control-strength}
\end{figure}

According to Figure \ref{fig:relative-control-strength}, The peak location $t^\star=\sigma^{2}/(1+\sigma^{2})$ migrates toward $t=1$ as $\sigma$ grows (vertical dashed line in each panel), reflecting that the informative control mass concentrates near the deterministic endpoint $X_{1}$ when $X_{0}$ is highly diffuse. Across all $\sigma$, raising the exponent $p$ flattens the normalized profile and widens the time region in which $\|u_t^\star\|$ stays close to its maximum; the effect is modest at $\sigma=1.5$ but dramatic at $\sigma=5.0$, where $R_{2}(0)\approx 0.20$ while $R_{6}(0)\approx 0.72$.

\clearpage
\section{Experimental Details}\label{appendix-exp}
\subsection{Analysis on SiT Control Intensity}\label{appendix-control}

At each denoising step  along a sampling trajectory of
\textbf{SiT-XL/2}, we compute the per-step control
$u_{t_k} = v_\theta(x_{t_k}, t_k) - v_{\text{base}}(x_{t_k}, t_k)$ and record its scaled $L_2$ magnitude
$\|u_{t_k}\|_2\,\Delta t$ for each sample in a training batch.

The $i$-th iteration panel reports per-step controls measured \emph{immediately after} the $i$-th gradient update, e.g. ``Iter $0$'' column shows the controls produced by the model after one optimization step, which explains the small fluctuation around zero in that row. As fine-tuning progresses, the controls grow into a coherent profile, but early steps ($t \lesssim 0.3$) consistently accumulate  less mass than later ones. This suggests that the early portion of the trajectory contributes little reward-relevant signal even after $250$ iterations and is the direct empirical motivation for late-horizon truncation.

The same observation also explains why a higher-order ODE AM helps. \textbf{6th ODE-AM-Full} sustains non-trivial
control through $t = 1$ where \textbf{2nd ODE-AM-Full} collapses,
achieving \textit{implicitly} the late-horizon emphasis that
\textbf{Truncate-12} enforces \textit{explicitly}. 

\begin{figure}[H]
    \centering
    \includegraphics[width=0.8\linewidth]{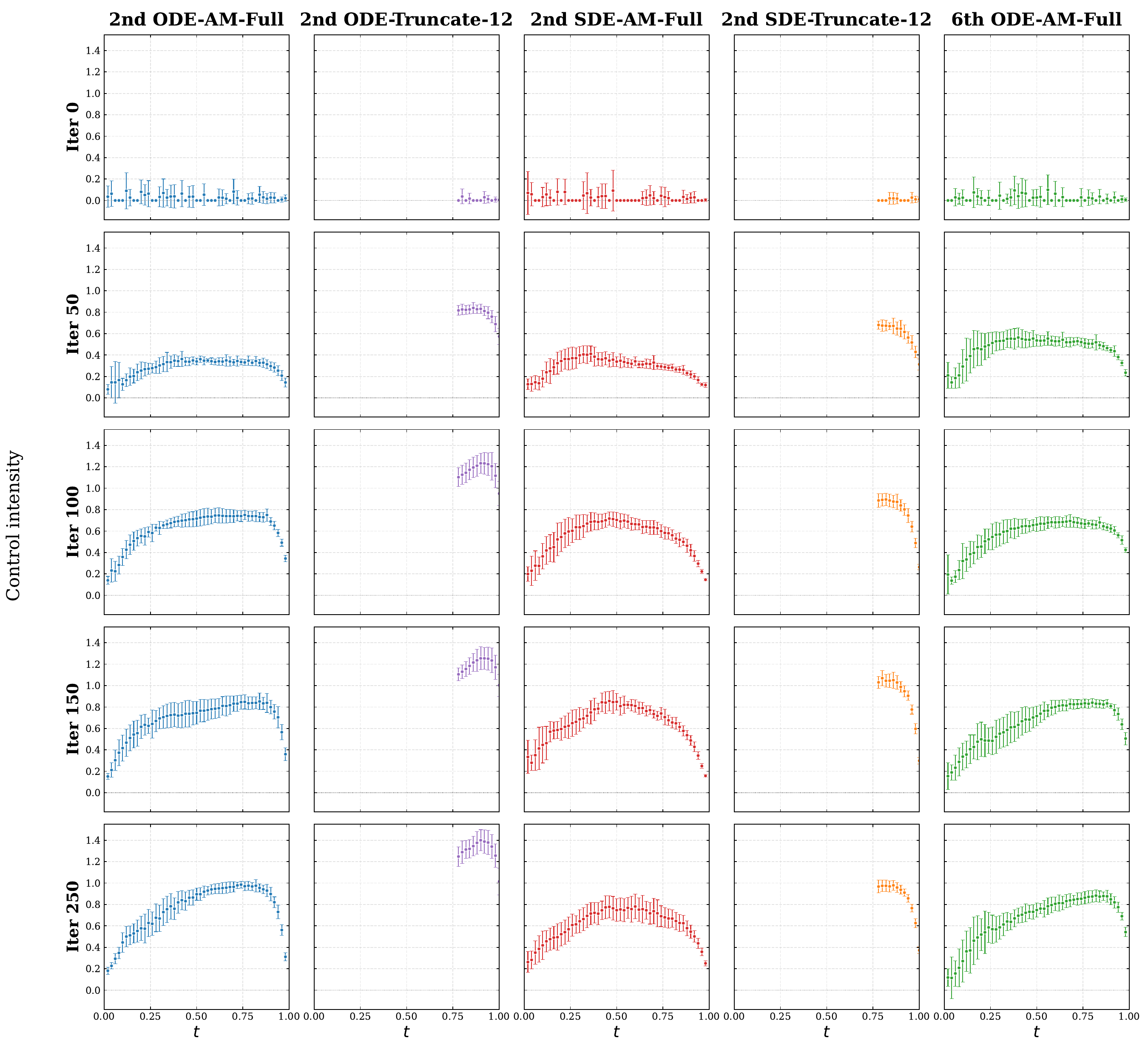}
    \caption{Control intensity along the diffusion trajectory across fine-tuning iterations (rows) for five adjoint-matching variants (columns) on \textbf{SiT-XL/2}.}
    \label{fig:control-grid}
\end{figure}

\clearpage
\subsection{Analysis on SiT Schedules}\label{appendix-exp-schedule}
In SiT, every sample is constructed as a stochastic interpolant $x_t = \alpha(t)\,x_1 + \sigma(t)\,x_0$ between data $x_1$ and Gaussian noise $x_0\sim\mathcal{N}(0,I)$, with $\alpha$ and $\sigma$ jointly fixing the marginal of $x_t$ at every denoising time $t\in[0,1]$ (here $t=0$ is noise and $t=1$ is data). Because this marginal is preserved by a whole family of reverse-time SDEs, inference is free to inject extra Gaussian noise on top of the deterministic flow,
$$dx_t \;=\; \Big[\,v_\theta(x_t,t) \;+\; \tfrac{1}{2}\,w_t\, s_\theta(x_t,t)\,\Big]\, dt \;+\; \sqrt{w_t}\;d\bar W_t,$$
where $w_t\!\ge\!0$ controls how much fresh noise is mixed in. 

\begin{figure}[H]
    \centering
    \includegraphics[width=0.9\linewidth]{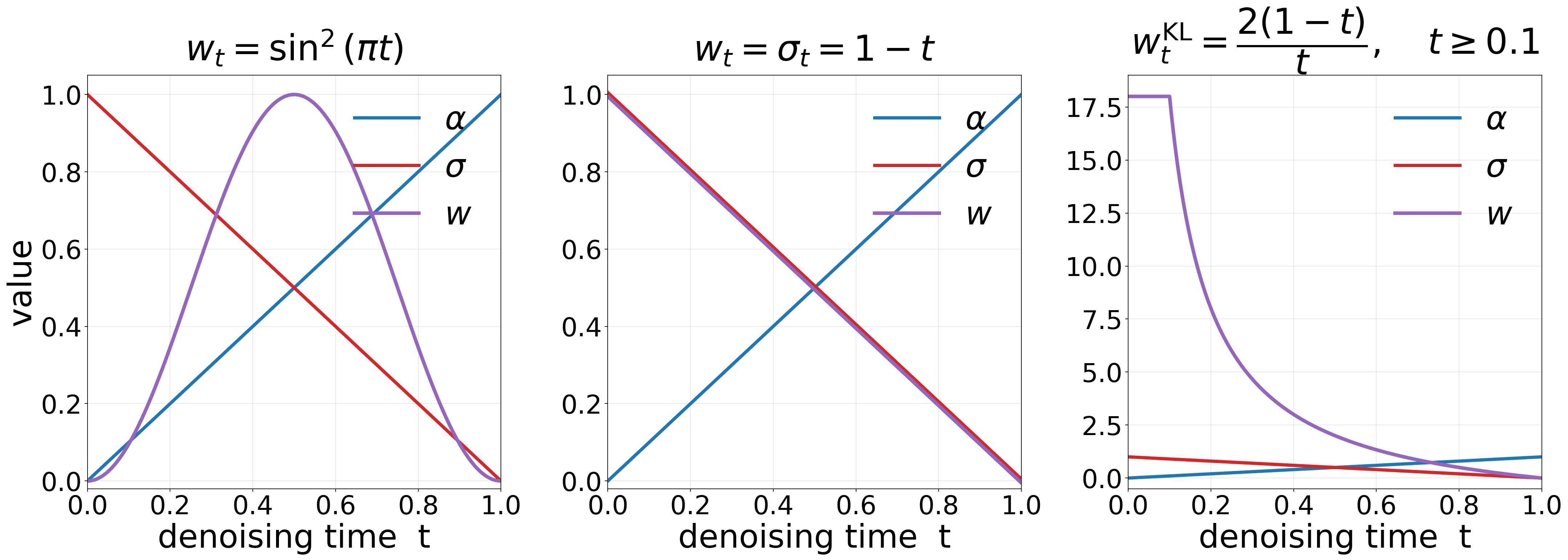}
    \caption{Interpolant coefficients $\alpha(t)$, $\sigma(t)$, and three noise schedules $w_t$ on the Linear path $\{(\alpha_t,\sigma_t)\}_{t=0}^1=\{(t,1-t)\}_{t=0}^1$}
    \label{fig:sit-schedule}
\end{figure}

Although the KL family wins for \textbf{pretraining} (Table 6, \cite{ma2024sit}), in AM finetune, $w_t^{\mathrm{KL}}$ near $t\!=\!0$ blows up gradients in the most chaotic part of the trajectory, while $w_t=1-t$ keeps injecting noise right at the early stages where exploration is not very useful (see discussion in Appendix \ref{appendix-control}. $\sin^2(\pi t)$ avoids both pathologies and gives the best empirical reward fine-tuning performance (best reward value with best sample diversity) among the three.

\begin{figure}[H]
    \centering
    \includegraphics[width=0.75\linewidth]{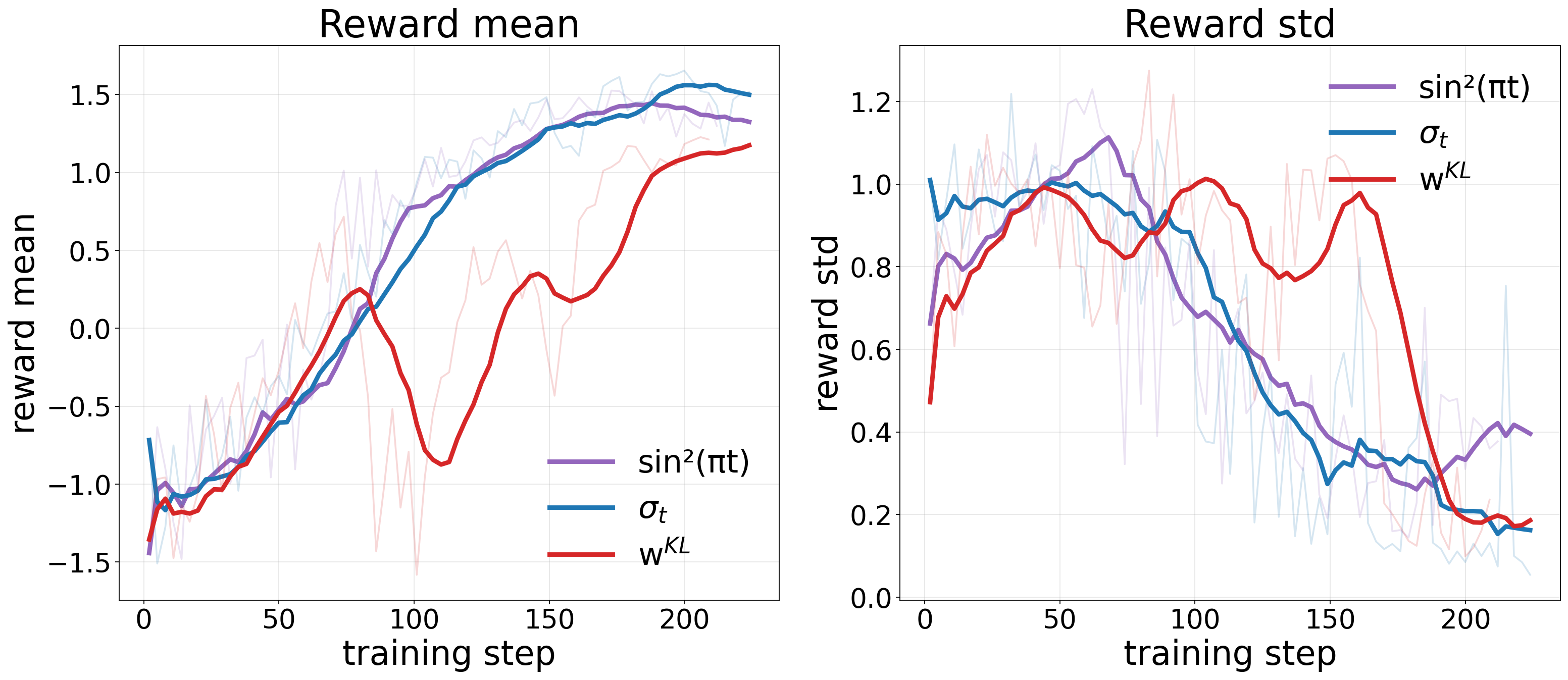}
    \caption{ImageReward mean (left) reward std (right) of finetuned \textbf{SiT-XL/2 + SDE-AM-Full} under the three SDE schedules; bold lines are 10-step rolling averages.}
    \label{fig:sit-schedule-train}
\end{figure}

\subsubsection{Discussion on Memoryless Schedule} The schedule $w_t^{\mathrm{KL}}$ coincides exactly with the memoryless noise schedule $\sigma_{\mathrm{AM}}(t)^2 = 2\eta_t$ formalized in \cite{domingoenrich2025}, where $\eta_t = \beta_t(\alpha_t' \beta_t/\alpha_t - \beta_t')$. According to \textbf{Theorem 1} in \cite{domingoenrich2025}, this schedule is \emph{necessary} for the fine-tuned model to converge to the tilted distribution $p^*(X_1) \propto p_{\mathrm{base}}(X_1)\exp(r(X_1))$ in continuous time, since it is the unique choice that $X_0\perp X_1$ and thereby removes the value-function bias at $t = 0$.

By this criterion $\sigma_t$ and $\sin^2(\pi t)$ are biased. However, empirically we find that the correct injection of noise outweighs this bias (Fig.~\ref{fig:sit-schedule-train}). Both schedules deliver the right amount of stochasticity in the middle and later stages of the trajectory to encourage exploration, while keeping noise bounded at $t\leq 0.3$ so the optimization stays stable and avoids reward collapse. The truncated $w_t^{\mathrm{KL}}$, by contrast, concentrates its stochasticity precisely where the adjoint signal is most fragile and is not ideal in practice.

\begin{figure}[H]
\centering
\setlength{\tabcolsep}{2pt}

\begin{tabular}{@{}cccc@{}}

\includegraphics[width=0.22\textwidth]{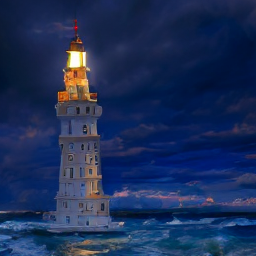} &
\includegraphics[width=0.22\textwidth]{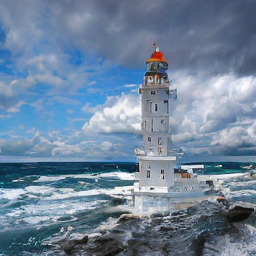} &
\includegraphics[width=0.22\textwidth]{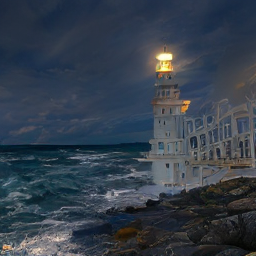} &
\includegraphics[width=0.22\textwidth]{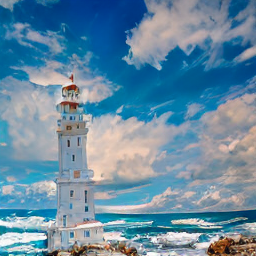} \\

\includegraphics[width=0.22\textwidth]{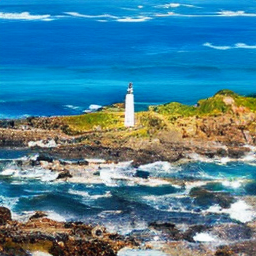} &
\includegraphics[width=0.22\textwidth]{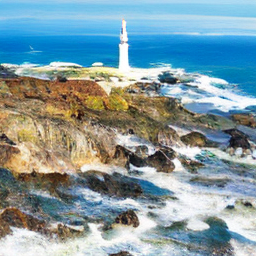} &
\includegraphics[width=0.22\textwidth]{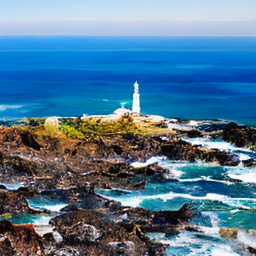} &
\includegraphics[width=0.22\textwidth]{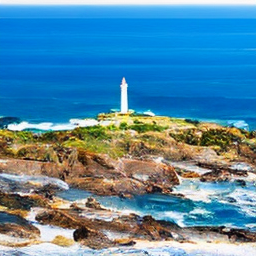} \\

\includegraphics[width=0.22\textwidth]{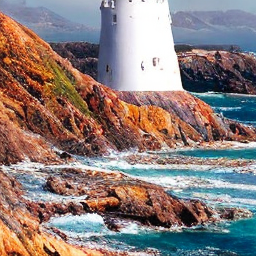} &
\includegraphics[width=0.22\textwidth]{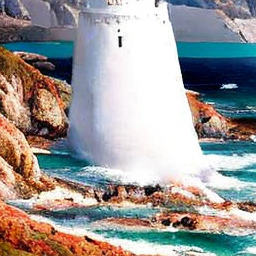} &
\includegraphics[width=0.22\textwidth]{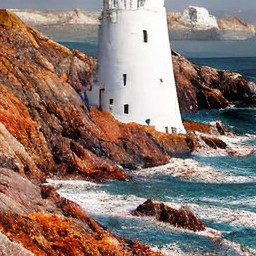} &
\includegraphics[width=0.22\textwidth]{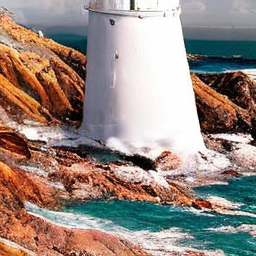}

\end{tabular}

\captionsetup{width=1\textwidth}
\caption{
From top to bottom: (1) $\sin^2(\pi t)$ schedule, (2) $\sigma$ schedule, (3) $w^{\text{KL}}$ schedule. ImageNet class is \textit{lighthouse}.
}
\label{fig:lighthouse}
\end{figure}

\clearpage
\subsection{Performance Evaluation on SiT}\label{appendix-exp-SiT}

\label{appdx:sit_ablation}
We use $1000$ class-level prompts and generate two images for each class, yielding $2000$ images in total. 
We conduct two ablation studies on SiT to examine the effect of the truncation horizon $n_{\text{truncate}}$ and the reward scale. 
The base SiT model obtains HPSv2 $0.183$, ImageReward $-0.658$, CLIPScore $0.239$, and PickScore $0.190$. 
Full-horizon AM already improves these metrics, with \textbf{2nd SDE-AM-Full} reaching HPSv2 $0.279$ and \textbf{2nd ODE-AM-Full} reaching HPSv2 $0.290$. 
However, both full AM variants require $83.8$s per iteration, and their performance is weaker than the truncated variants.

Table~\ref{tab:lr_sweep} studies higher-order ODE-AM under different reward scales with $n_{\text{truncate}}=10$. 
For \textbf{4th ODE-AM-10}, the best overall reward scale is $10^{13}$, reaching ImageReward $0.574$ and HPSv2 $0.328$. 
For \textbf{6th ODE-AM-10}, the best configuration is reward scale $10^{15}$, which achieves HPSv2 $0.329$, ImageReward $0.535$, CLIPScore $0.269$, and PickScore $0.207$. 
Compared with \textbf{DRaFT-1} and \textbf{ReFL-10}, which are faster per iteration but obtain HPSv2 around $0.296\sim0.297$, the higher-order AM variants provide stronger final reward alignment.

\begin{table}[H]
\centering
\scriptsize
\setlength{\tabcolsep}{1.5pt}
\resizebox{\linewidth}{!}{
\begin{tabular}{lcccccc}
\toprule
Method & Reward Scale
& HPSv2 $\uparrow$
& ImageReward $\uparrow$
& CLIPScore $\uparrow$
& PickScore $\uparrow$
& Iter.Time (s) $\downarrow$ \\
\midrule
Base Model & --
& $0.183 {\scriptstyle \pm 0.038}$
& $-0.658 {\scriptstyle \pm 0.915}$
& $0.239 {\scriptstyle \pm 0.049}$
& $0.190 {\scriptstyle \pm 0.011}$
& $-$ \\
DRaFT-1 & --
& $0.296 {\scriptstyle \pm 0.041}$
& $0.266 {\scriptstyle \pm 0.909}$
& $0.260 {\scriptstyle \pm 0.041}$
& $0.200 {\scriptstyle \pm 0.011}$
& $\underline{26.7}$ \\
ReFL-10 & --
& $0.297 {\scriptstyle \pm 0.034}$
& $0.282 {\scriptstyle \pm 0.885}$
& $0.265 {\scriptstyle \pm 0.038}$
& $0.200 {\scriptstyle \pm 0.011}$
& $\mathbf{26.4}$ \\
\midrule
2nd SDE-AM-Full & $10^5$
& $0.279 {\scriptstyle \pm 0.038}$
& $-0.084 {\scriptstyle \pm 0.989}$
& $0.244 {\scriptstyle \pm 0.052}$
& $0.197 {\scriptstyle \pm 0.012}$
& $83.8$ \\
2nd ODE-AM-Full & $10^5$
& $0.290 {\scriptstyle \pm 0.037}$
& $0.185 {\scriptstyle \pm 0.967}$
& $0.243 {\scriptstyle \pm 0.049}$
& $0.199 {\scriptstyle \pm 0.012}$
& $83.8$ \\

\midrule
4th ODE-AM-10 & $10^{11}$
& $0.309 {\scriptstyle \pm 0.038}$
& $0.361 {\scriptstyle \pm 0.902}$
& $0.266 {\scriptstyle \pm 0.041}$
& $0.203 {\scriptstyle \pm 0.011}$
& $47.3$ \\
4th ODE-AM-10 & $10^{13}$
& $\underline{0.328} {\scriptstyle \pm 0.038}$
& $\mathbf{0.574} {\scriptstyle \pm 0.834}$
& $\underline{0.269} {\scriptstyle \pm 0.040}$
& $\underline{0.206} {\scriptstyle \pm 0.011}$
& $47.3$ \\
4th ODE-AM-10 & $10^{15}$
& $0.326 {\scriptstyle \pm 0.039}$
& $0.491 {\scriptstyle \pm 0.878}$
& $0.265 {\scriptstyle \pm 0.041}$
& $\underline{0.206} {\scriptstyle \pm 0.011}$
& $47.3$ \\
\midrule
6th ODE-AM-10 & $10^{11}$
& $0.327 {\scriptstyle \pm 0.039}$
& $0.531 {\scriptstyle \pm 0.872}$
& $0.265 {\scriptstyle \pm 0.040}$
& $\underline{0.206} {\scriptstyle \pm 0.011}$
& $47.7$ \\
6th ODE-AM-10 & $10^{15}$
& $\mathbf{0.329} {\scriptstyle \pm 0.039}$
& $\underline{0.535} {\scriptstyle \pm 0.862}$
& $\mathbf{0.269} {\scriptstyle \pm 0.039}$
& $\mathbf{0.207} {\scriptstyle \pm 0.011}$
& $47.7$ \\
6th ODE-AM-10 & $10^{19}$
& $\underline{0.328} {\scriptstyle \pm 0.039}$
& $0.497 {\scriptstyle \pm 0.871}$
& $0.261 {\scriptstyle \pm 0.041}$
& $0.205 {\scriptstyle \pm 0.011}$
& $47.7$ \\
\bottomrule
\end{tabular}
}
\vspace{3pt}
\caption{Higher order ODE ablations: baselines (top) vs. \textbf{4th ODE-AM-10} and \textbf{6th ODE-AM-10} on \textbf{SiT-XL/2}.}
\label{tab:lr_sweep}
\end{table}

Table~\ref{tab:nc_sweep} shows the effect of varying $n_{\text{truncate}}$. 
For both SDE and ODE samplers, increasing the truncation horizon from very small values improves the reward metrics, but the gain saturates after roughly $10$--$15$ terminal steps. 
For \textbf{2nd AM-SDE}, the best performance appears around $n_{\text{truncate}}=12$, achieving ImageReward $0.522$, CLIPScore $0.273$, and PickScore $0.206$. For \textbf{2nd AM-ODE}, the best performance appears around $n_{\text{truncate}}=15$, achieving HPSv2 $0.324$, ImageReward $0.522$, and PickScore $0.206$. At the same time, truncated AM reduces the per-iteration time from $83.8$s to roughly $39\sim57$s, depending on the number of active terminal steps. Recall that we use 50 denoising steps for all SiT experiments. This confirms that most useful reward-alignment signal is concentrated near the terminal denoising stage.

Overall, the ablation studies further confirm two observations. 
First, full-horizon AM is not the best use of computation; truncating to the final part of the trajectory is both faster and more effective. 
Second, higher-order ODE-AM can further improve the reward metrics when combined with a properly chosen reward scale. 
These results justify our use of truncated and (or) higher-order adjoint matching in the main SiT experiments.

\label{speedup_analysis}
\begin{table}[ht]
\centering
\scriptsize
\setlength{\tabcolsep}{1.5pt}
\resizebox{\linewidth}{!}{
\begin{tabular}{lcccccc}
\toprule
 Method & Trunc.\ Steps
& HPSv2 $\uparrow$
& ImageReward $\uparrow$
& CLIPScore $\uparrow$
& PickScore $\uparrow$
& Iter.Time (s) $\downarrow$ \\
\midrule
 Base Model & --
& $0.183 {\scriptstyle \pm 0.038}$
& $-0.658 {\scriptstyle \pm 0.915}$
& $0.239 {\scriptstyle \pm 0.049}$
& $0.190 {\scriptstyle \pm 0.011}$
& $-$ \\
\midrule
 & 1
& $0.301 {\scriptstyle \pm 0.040}$
& $0.331 {\scriptstyle \pm 0.917}$
& $0.261 {\scriptstyle \pm 0.041}$
& $0.201 {\scriptstyle \pm 0.011}$
& $\mathbf{38.9}$ \\
& 2
& $0.282 {\scriptstyle \pm 0.042}$
& $0.227 {\scriptstyle \pm 0.916}$
& $0.265 {\scriptstyle \pm 0.041}$
& $0.202 {\scriptstyle \pm 0.011}$
& $40.2$ \\
& 3
& $0.294 {\scriptstyle \pm 0.041}$
& $0.305 {\scriptstyle \pm 0.911}$
& $0.267 {\scriptstyle \pm 0.040}$
& $0.204 {\scriptstyle \pm 0.011}$
& $41.3$ \\
& 5
& $0.304 {\scriptstyle \pm 0.040}$
& $0.410 {\scriptstyle \pm 0.891}$
& $0.268 {\scriptstyle \pm 0.039}$
& $\underline{0.205} {\scriptstyle \pm 0.011}$
& $43.2$ \\
2nd SDE-AM & 10
& $\underline{0.319} {\scriptstyle \pm 0.038}$
& $0.491 {\scriptstyle \pm 0.876}$
& $\underline{0.271} {\scriptstyle \pm 0.038}$
& $\mathbf{0.206} {\scriptstyle \pm 0.011}$
& $47.6$ \\
& 12
& $0.318 {\scriptstyle \pm 0.038}$
& $\mathbf{0.522} {\scriptstyle \pm 0.867}$
& $\mathbf{0.273} {\scriptstyle \pm 0.038}$
& $\mathbf{0.206} {\scriptstyle \pm 0.011}$
& $49.6$ \\
& 15
& $\mathbf{0.319} {\scriptstyle \pm 0.038}$
& $0.489 {\scriptstyle \pm 0.864}$
& $0.269 {\scriptstyle \pm 0.039}$
& $\underline{0.205} {\scriptstyle \pm 0.011}$
& $52.3$ \\
& 20
& $0.317 {\scriptstyle \pm 0.038}$
& $\underline{0.496} {\scriptstyle \pm 0.874}$
& $0.270 {\scriptstyle \pm 0.041}$
& $\underline{0.205} {\scriptstyle \pm 0.011}$
& $57.1$ \\
\midrule
2nd SDE-AM-Full & 50
& $0.279 {\scriptstyle \pm 0.038}$
& $-0.084 {\scriptstyle \pm 0.989}$
& $0.244 {\scriptstyle \pm 0.052}$
& $0.197 {\scriptstyle \pm 0.012}$
& $83.8$ \\
\midrule
& 1
& $0.266 {\scriptstyle \pm 0.043}$
& $0.116 {\scriptstyle \pm 0.913}$
& $0.265 {\scriptstyle \pm 0.041}$
& $0.200 {\scriptstyle \pm 0.011}$
& $\underline{39.2}$ \\
& 2
& $0.305 {\scriptstyle \pm 0.041}$
& $0.344 {\scriptstyle \pm 0.895}$
& $0.264 {\scriptstyle \pm 0.039}$
& $0.202 {\scriptstyle \pm 0.011}$
& $40.1$ \\
& 3
& $0.307 {\scriptstyle \pm 0.040}$
& $0.367 {\scriptstyle \pm 0.884}$
& $0.265 {\scriptstyle \pm 0.040}$
& $0.203 {\scriptstyle \pm 0.011}$
& $40.8$ \\
& 5
& $0.312 {\scriptstyle \pm 0.040}$
& $0.404 {\scriptstyle \pm 0.897}$
& $0.263 {\scriptstyle \pm 0.040}$
& $0.203 {\scriptstyle \pm 0.011}$
& $42.7$ \\
2nd ODE-AM & 10
& $0.316 {\scriptstyle \pm 0.040}$
& $0.450 {\scriptstyle \pm 0.903}$
& $0.266 {\scriptstyle \pm 0.040}$
& $0.204 {\scriptstyle \pm 0.011}$
& $47.3$ \\
& 12
& $0.321 {\scriptstyle \pm 0.039}$
& $0.477 {\scriptstyle \pm 0.860}$
& $\underline{0.266} {\scriptstyle \pm 0.039}$
& $\underline{0.205} {\scriptstyle \pm 0.011}$
& $49.3$ \\
& 15
& $\mathbf{0.324} {\scriptstyle \pm 0.039}$
& $\mathbf{0.522} {\scriptstyle \pm 0.877}$
& $\mathbf{0.267} {\scriptstyle \pm 0.040}$
& $\mathbf{0.206} {\scriptstyle \pm 0.011}$
& $52.0$ \\
& 20
& $\underline{0.324} {\scriptstyle \pm 0.039}$
& $\underline{0.478} {\scriptstyle \pm 0.870}$
& $0.265 {\scriptstyle \pm 0.041}$
& $\underline{0.205} {\scriptstyle \pm 0.011}$
& $56.7$ \\
\midrule
2nd ODE-AM-Full & 50
& $0.290 {\scriptstyle \pm 0.037}$
& $0.185 {\scriptstyle \pm 0.967}$
& $0.243 {\scriptstyle \pm 0.049}$
& $0.199 {\scriptstyle \pm 0.012}$
& $83.8$ \\
\bottomrule
\end{tabular}
}
\vspace{3pt}
\caption{Effect of truncation steps in \textbf{2nd SDE-AM}, \textbf{2nd ODE-AM} on \textbf{SiT-XL/2}.}
\label{tab:nc_sweep}
\end{table}

\subsection{Detailed Performance Evaluation on FLUX}\label{flux-appendix}
Taking advantage of this distilled flow matching backbone, we can use fewer sampling steps; in our case, we use 20 steps for both forward sampling and adjoint ODE integration. Moreover, the conditional guidance is also distilled into this model. In contrast to the usual classifier-free guidance \cite{ho2022cfg}, where both conditional and unconditional velocity predictions must be evaluated, \textbf{FLUX.2-Klein-4B} only requires a single model evaluation at each step, which effectively helps alleviate OOM issues.

For post-training, we use our truncated ODE adjoint matching algorithms, \textbf{2nd ODE-AM-1} and \textbf{2nd ODE-AM-3}, where only the last \textbf{one} and \textbf{three} adjoint ODE steps are integrated and included in the loss, respectively. We find that keeping more terminal adjoint matching steps does not improve training, and even made optimization slower and harder to converge, especially for full-step adjoint matching. As baselines, we consider two reward-backpropagation methods that can be applied to flow models: \textbf{DRaFT-1}\cite{clark2024draft} and \textbf{ReFL-5}\cite{xu2023imagereward}, where the reward signal is back-propagated through the last 1 step and through one randomly selected step among the last 5 steps, respectively.

For fairness, the main methods shown in Table~\ref{tab:flux2_reward_metrics} are all trained for 500 iterations. In addition to the HPSv2 reward curve during training, we also evaluate other types of feedback, not only image-quality-related metrics such as Aesthetic Score \cite{schuhmann2022laion5b}, ImageReward \cite{xu2023imagereward}, and PickScore \cite{kirstain2023pickapic}, but also within-prompt diversity metrics such as LPIPS \cite{zhang2018perceptual} and MS-SSIM \cite{wang2003multiscale}, as well as prompt-wise mode collapse metrics, namely Recall and Recovery. Note that we generate 6 images for each prompt in the test set. For Recall and Recovery, we first extract image features using \textbf{CLIP ViT-L/14}\cite{radford2021clip}, and set $k=5$ in the kNN-based evaluation; that is, we use the distance between each image and its 5th nearest neighbor as the radius.

We use only HPSv2 as the reward model during post-training, but other metrics, including Aesthetic Score, PickScore, and ImageReward, also improve substantially over the training; see Figure~\ref{fig:flux_diversity_mode_metrics} for the evolution of image fidelity metrics through training using HPSv2. Some methods, \textbf{6th ODE-AM-1}, \textbf{DRaFT-1} and \textbf{ReFL-5}, appear to have slight advantages on training curve. 

However, when it comes to image diversity, methods like \textbf{2nd ODE-AM-3}, \textbf{2nd ODE-AM-1} and \textbf{6th ODE-AM-3}, show strong potential for keep diversity; see Figure~\ref{fig:flux_diversity_mode_metrics}(A)(B) for pairwise within-prompt diversity. For simplicity, we report $1-\mathrm{MS\text{-}SSIM}$ instead of MS-SSIM itself. Overall, our methods are better at preserving diversity, which is consistent with the observation in prior work\cite{liu2026vgg} that methods based on direct differentiation tend not to align with the target distribution, but instead favor only some modes of it. Moreover, to examine whether fine-tuning leads to mode collapse, we also compute Recall and Coverage to measure how much the fine-tuned distribution deviates from the original base distribution. As shown in Figure~\ref{fig:flux_diversity_mode_metrics}(C)(D), our methods like \textbf{2nd ODE-AM-1} and \textbf{6th ODE-AM-3} preserve the distribution significantly better than others. One possible reason is that those high-reward training methods like \textbf{DRaFT} and \textbf{ReFL} tend to produce overly bright or visually flashy images to obtain higher reward scores, often at the cost of semantic alignment. See Figure~\ref{fig:flux2_flashy} for illustration. A detailed quantitative comparison between our method and the baselines is provided in Table~\ref{tab:flux2_reward_metrics} and Table~\ref{tab:flux2_diversity_metrics}.

\begin{figure}[t]
    \centering
    
    \begin{subfigure}[t]{0.45\textwidth}
        \centering
        \includegraphics[width=\linewidth]{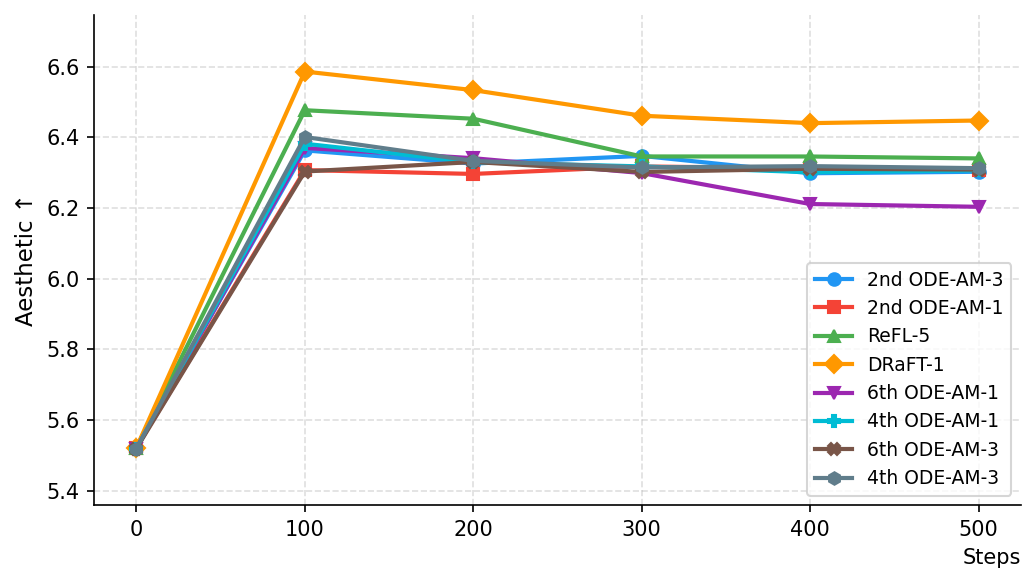}
        \caption{Aesthetic Score}
    \end{subfigure}
    \hfill
    \begin{subfigure}[t]{0.45\textwidth}
        \centering
        \includegraphics[width=\linewidth]{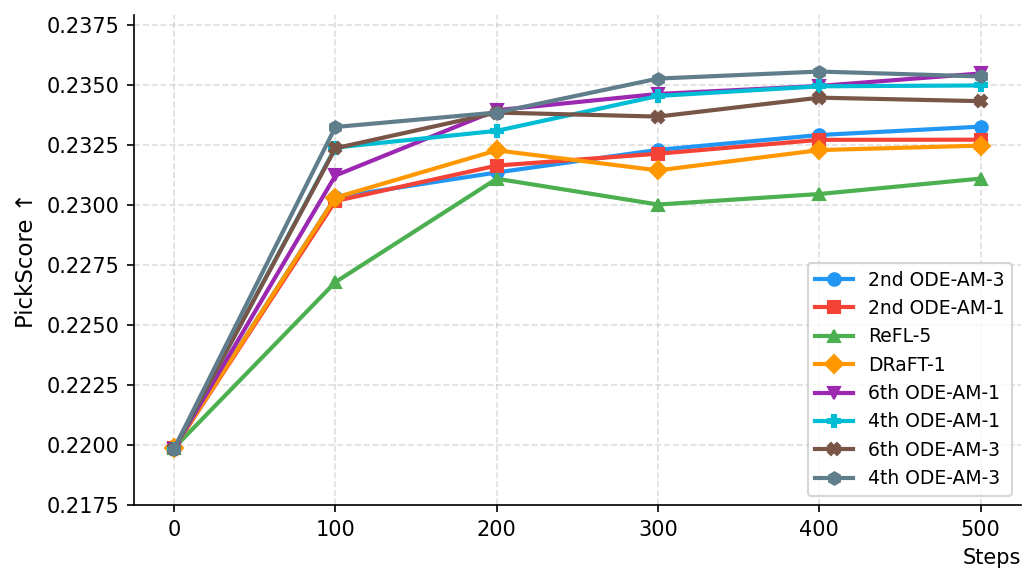}
        \caption{PickScore}
    \end{subfigure}

    \vspace{6pt}

    \begin{subfigure}[t]{0.45\textwidth}
        \centering
        \includegraphics[width=\linewidth]{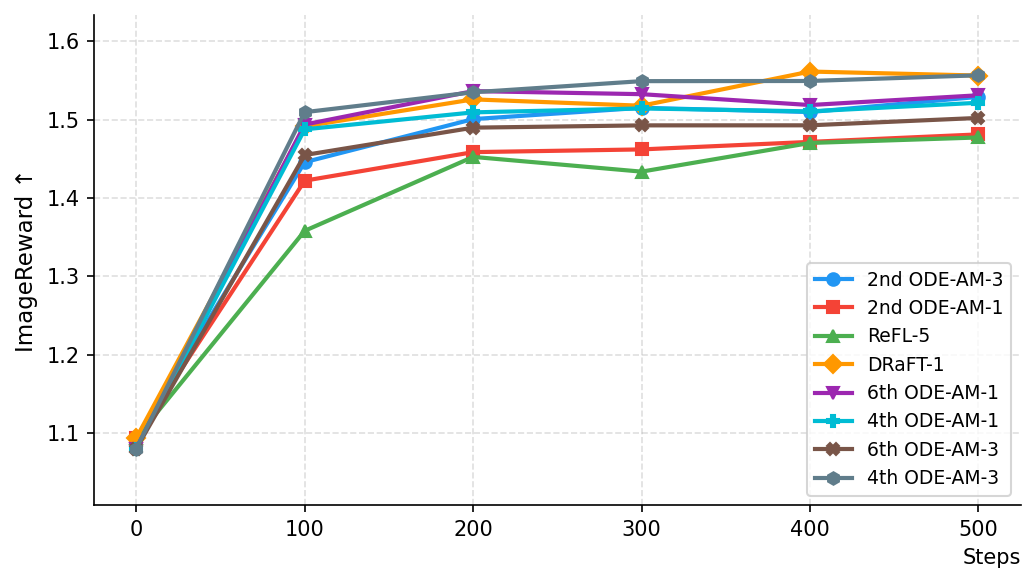}
        \caption{ImageReward}
    \end{subfigure}
    \hfill
    \begin{subfigure}[t]{0.45\textwidth}
        \centering
        \includegraphics[width=\linewidth]{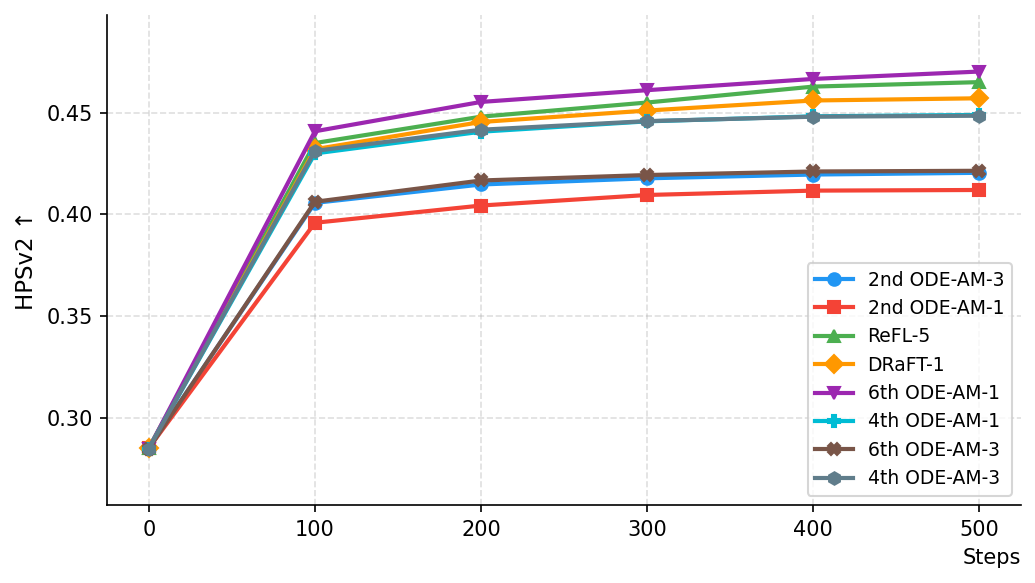}
        \caption{HPSv2}
    \end{subfigure}

    \caption{Image fidelity metrics evaluations on \textbf{FLUX.2-Klein-4B}.}
    \label{fig:flux_fidelity_metrics}
\end{figure}

\begin{figure}[t]
    \centering
    
    \begin{subfigure}[t]{0.45\textwidth}
        \centering
        \includegraphics[width=\linewidth]{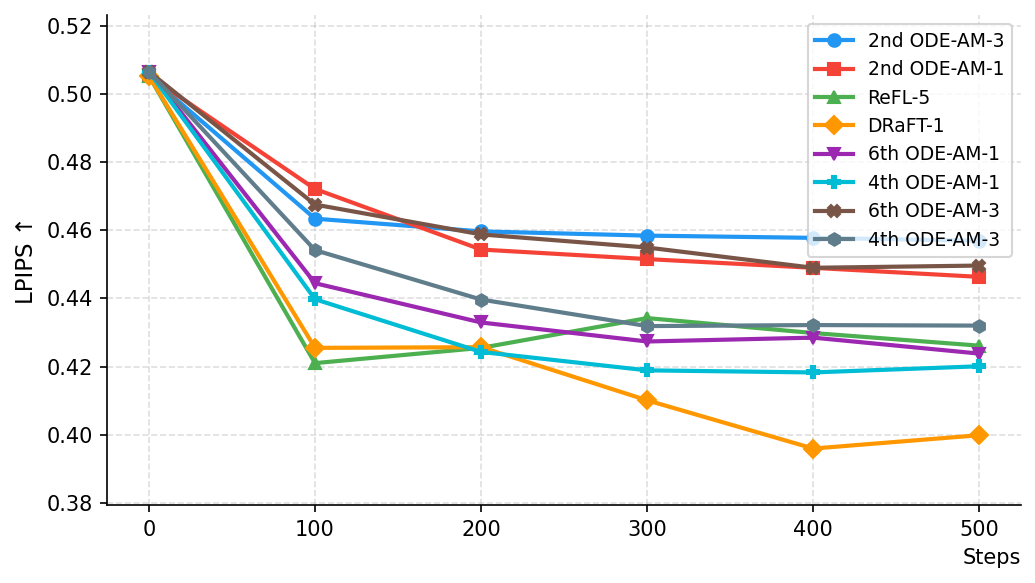}
        \caption{LPIPS}
    \end{subfigure}
    \hfill
    \begin{subfigure}[t]{0.45\textwidth}
        \centering
        \includegraphics[width=\linewidth]{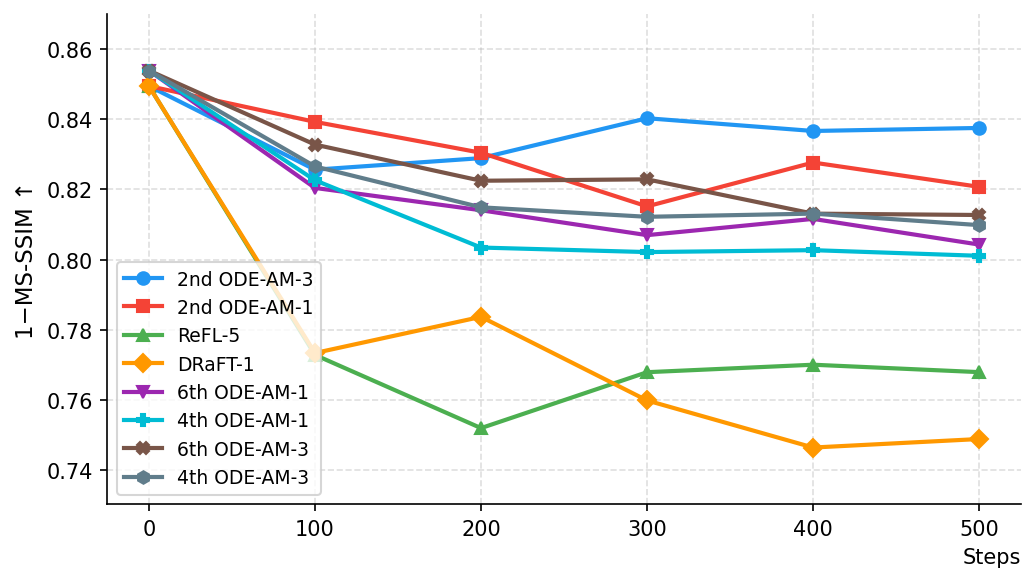}
        \caption{MS-SSIM}
    \end{subfigure}

    \vspace{6pt}

    \begin{subfigure}[t]{0.45\textwidth}
        \centering
        \includegraphics[width=\linewidth]{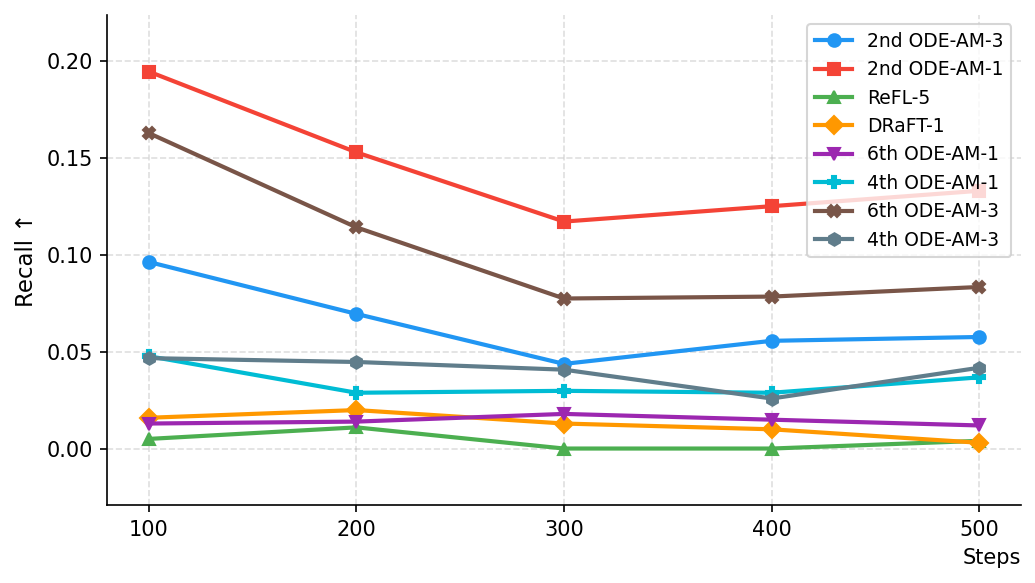}
        \caption{Recall}
    \end{subfigure}
    \hfill
    \begin{subfigure}[t]{0.45\textwidth}
        \centering
        \includegraphics[width=\linewidth]{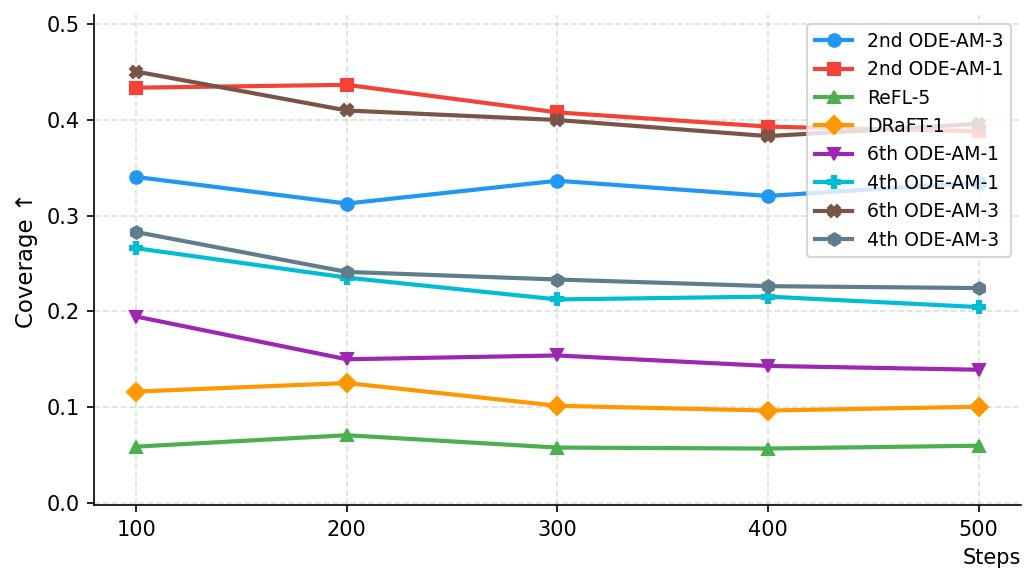}
        \caption{Coverage}
    \end{subfigure}
    
    \caption{Within-prompt diversity metrics and prompt-wise mode preservation metrics on \textbf{FLUX.2-Klein-4B}.}
    \label{fig:flux_diversity_mode_metrics}
\end{figure}

\subsection{How our algorithm accelerates post-training pipeline}\label{appendix-exp-accelerate}
In standard AM for diffusion models finetuning\cite{domingoenrich2025}, each gradient update requires two sequential phases that both scale linearly with the number of ODE steps $N$. In the adjoint ODE integration phase, the adjoint signal is propagated backward through all $N$ steps sequentially; each step requires one forward pass through the frozen base model to compute the vector-Jacobian product (VJP) with respect to the intermediate latent state. In the control loss phase, the trainable model is evaluated at each of the $N$ trajectory states to compute the matching loss, with each evaluation requiring one forward pass through both the base and trainable models, followed by a backward pass to accumulate parameter gradients. Due to the sequential dependency between steps and memory limitations, neither phase can be trivially parallelized. For a large-scale model such as \textbf{FLUX.2-Klein-4B}, this results in prohibitively slow per-update wall time.

Our truncated AM algorithm restricts both phases to only the final $n_{\text{truncate}} \ll N$ steps of the trajectory: the adjoint ODE is integrated backward for $n_{\text{truncate}}$ steps from the terminal state, and the control loss is evaluated only at the corresponding $n_{\text{truncate}}$ trajectory states. Since both phases dominate total training time and each scales linearly with the number of active steps, the per-update cost reduces proportionally. In practice, a full $20$-step update requires approximately 345 seconds, whereas the 1-step truncated variant reduces this to 32 seconds.

\clearpage
\subsection{Hyper-parameters and details in training}\label{appendix-hyperparam}
For the experimental setup, training is performed on 7 Nvidia L40S GPUs. Note that although \textbf{SiT-XL/2} and \textbf{FLUX.2-Klein-4B} are training based on flow matching objective, their convention is a bit different. While \textbf{SiT-XL/2} generates images from $t=0$ to $t=1$, \textbf{FLUX.2-Klein-4B} does that from $t=1$ to $t=0$. In this paper, all analysis are based on $t=0$ to $t=1$ fashion.

Parameter setting for \textbf{FLUX.2-Klein-4B} is shown in Table~\ref{tab:flux2_hyperparams}. We post-trained it with $7 \times$ NVIDIA L40S GPUs with 48 GB VRAM each.

\begin{table}[!h]
\centering
\small
\setlength{\tabcolsep}{6pt}
\begin{tabular}{lll}
\toprule
\textbf{Category} & \textbf{FLUX.2-Klein-4B} & \textbf{SiT-XL/2-256} \\
\midrule

\multicolumn{3}{l}{\textit{Optimization}} \\
\midrule
Optimizer & Adam & Adam \\
\quad $\beta_1$ & 0.9 & 0.9 \\
\quad $\beta_2$ & 0.99 & 0.99 \\
Learning Rate & $1\times 10^{-5}$ & $6\times 10^{-5}$ \\
LR Schedule & Cosine decay & Constant \\
Weight Decay & 0.01 & 0.0 \\
Gradient Clipping & 1.0 & 10.0 \\

\midrule
\multicolumn{3}{l}{\textit{Training}} \\
\midrule
Image Size & 256 & 256 \\
Number of Steps & 20 & 50 \\
Batch Size & 16 & 8 \\
Warmup Steps & 10 & 25 \\
Reward Scale & $2\times 10^{5}$ & $1\times 10^{5}$ \\

\midrule
\multicolumn{3}{l}{\textit{Precision}} \\
\midrule
Model Parameters & FP32 & FP32 \\
Adjoint Trajectory & FP32 & FP32 \\
Forward Pass & BF16 & FP32 \\

\bottomrule
\end{tabular}
\vspace{3pt}
\caption{Hyper-parameters for finetuning \textbf{FLUX.2-Klein-4B} and \textbf{SiT-XL/2-256} using Truncated Adjoint Matching.}
\label{tab:flux2_hyperparams}
\end{table}

\clearpage

\subsection{Training curve}\label{appdx:training_curve}
The \textbf{SiT-XL/2} training curves show that truncated and higher-order AM variants consistently outperform baselines; in particular \textbf{ODE-AM-6} gives the strongest trajectory. 
For \textbf{FLUX.2-Klein-4B}, all \textbf{Truncated AM} methods remain competitive with the two baselines \textbf{DRaFT-1} and \textbf{ReFL-5}.
\begin{figure}[H]
    \centering
    \includegraphics[width=0.8\linewidth]{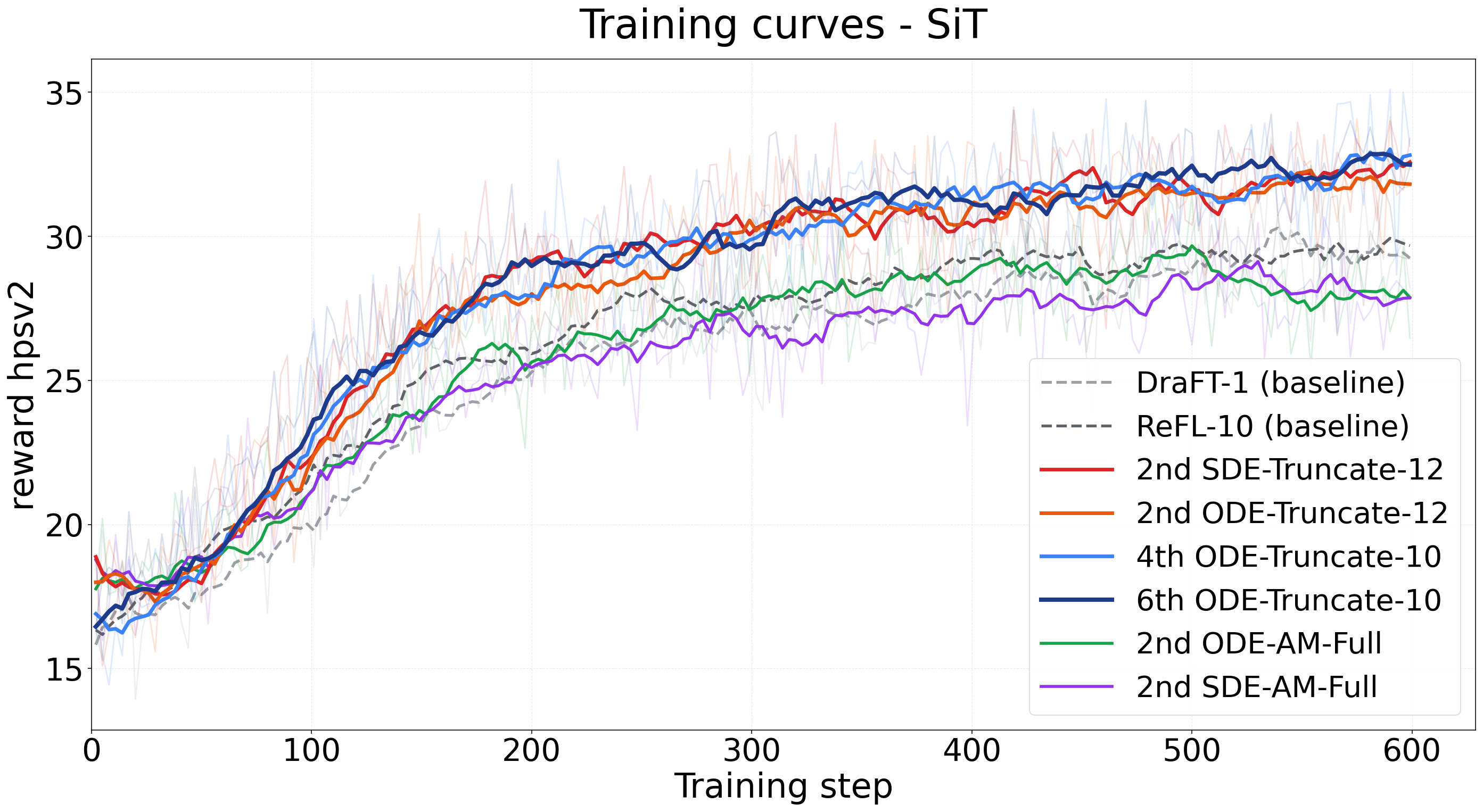}
    \caption{HPSv2 reward mean during training with different adaptive modes on \textbf{SiT-XL/2}.}
    \label{fig:flux_training-SiT}
\end{figure}

\begin{figure}[H]
    \centering
    \includegraphics[width=0.85\linewidth]{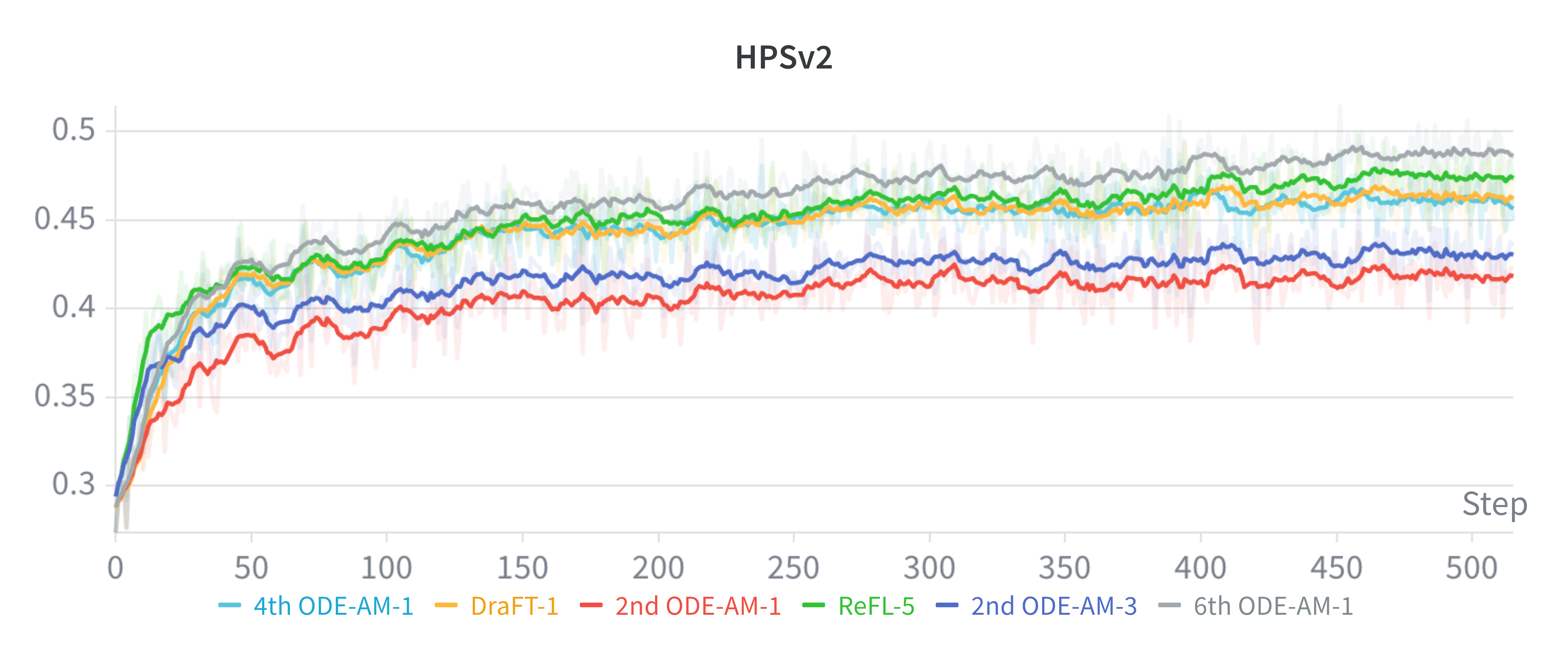}
    \caption{HPSv2 reward mean during training on \textbf{FLUX.2-Klein-4B}.}
    \label{fig:flux_training}
\end{figure}

\clearpage

\section{Additional Images}\label{appendix-images}
\begin{figure}[ht]
    \centering
    \includegraphics[width=0.75\linewidth]{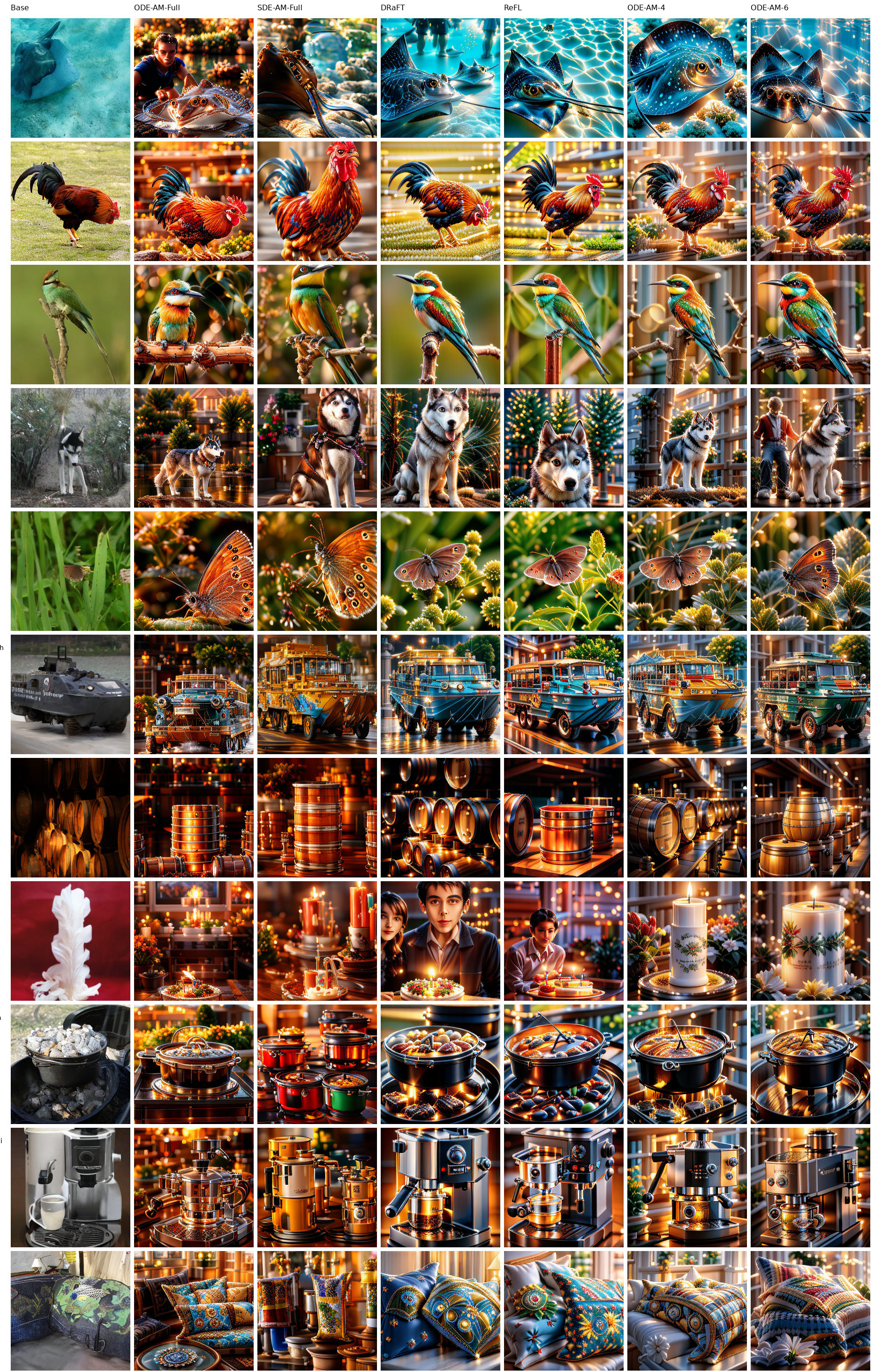}
    \caption{Methods from left to right: (1) Base, (2) 2nd ODE-AM-Full, (3) 2nd SDE-AM-Full, (4) DRaFT-1, (5) ReFL-10, (6) 2nd SDE-AM-12, (7) 4th ODE-AM-10, (8) 6th ODE-AM-10 on \textbf{SiT-XL/2}.}
    \label{fig:high-order-improvement}
\end{figure}
\clearpage

\begin{figure}[ht]
\centering
\setlength{\tabcolsep}{2pt}
\begin{tabular}{c}
\includegraphics[width=0.92\linewidth]{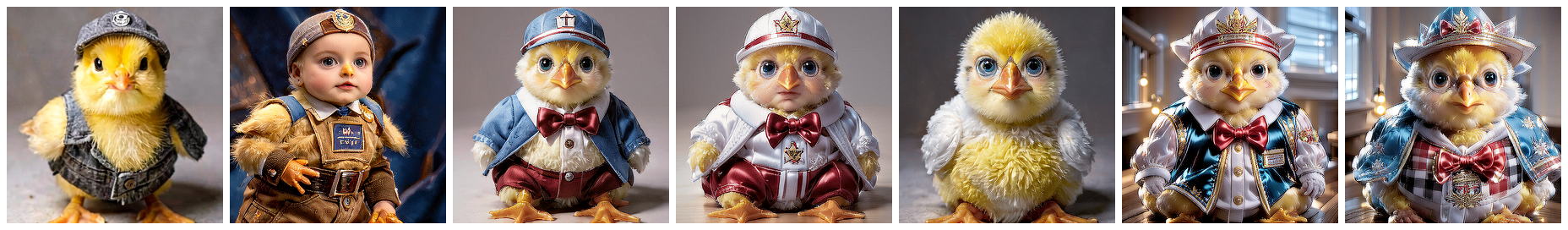} \\
\parbox{0.92\linewidth}{\centering \scriptsize  Prompt $=$ \textit{"Photo of a baby chick dressed as an inmate."}} \\
\includegraphics[width=0.92\linewidth]{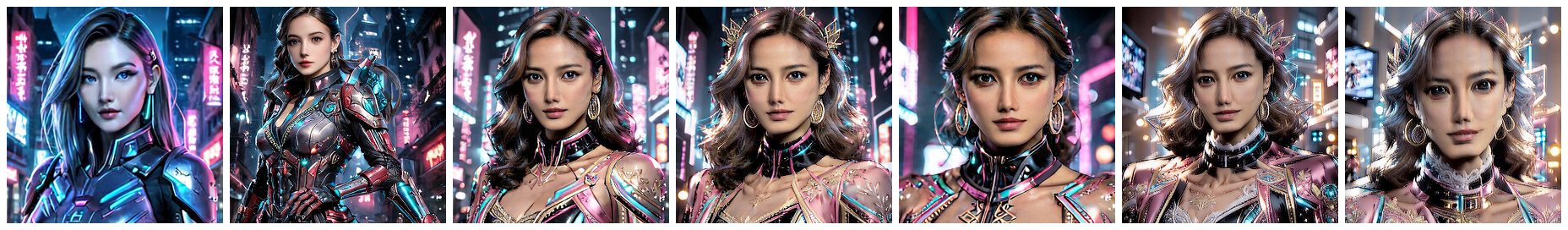} \\
\parbox{0.92\linewidth}{\centering \scriptsize  Prompt $=$ \textit{"Portrait of a beautiful cyberpunk Asian female."}} \\
\includegraphics[width=0.92\linewidth]{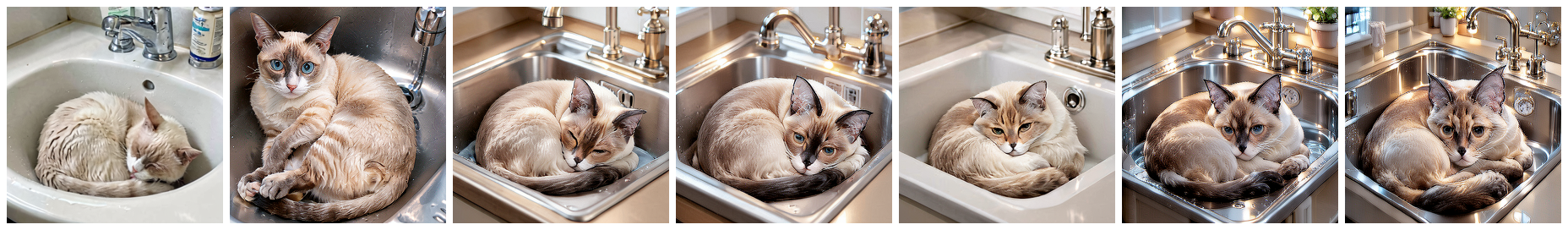} \\
\parbox{0.92\linewidth}{\centering \scriptsize  Prompt $=$ \textit{"A siamese cat curled up in a sink"}} \\
\includegraphics[width=0.92\linewidth]{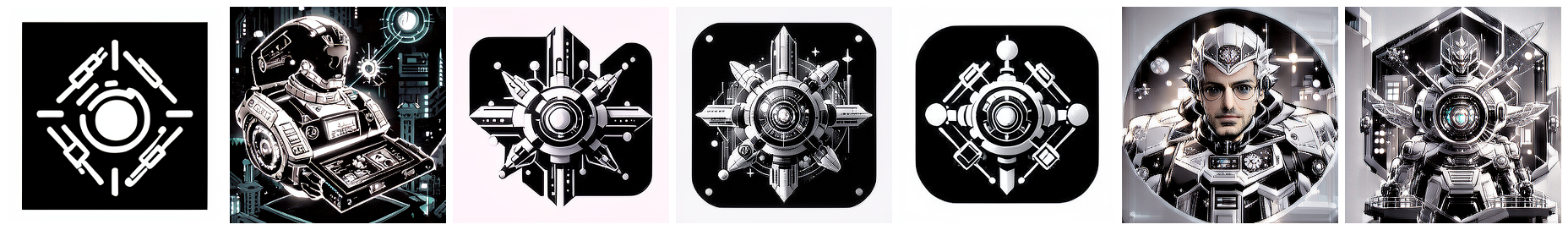} \\
\parbox{0.92\linewidth}{\centering \scriptsize  Prompt $=$ \textit{"Sci-fi skill game icon in flat vector style, displayed in black and white."}} \\
\includegraphics[width=0.92\linewidth]{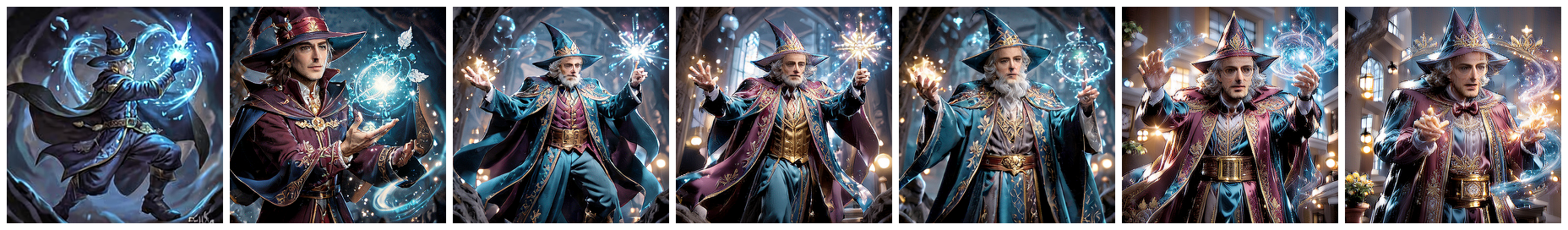} \\
\parbox{0.92\linewidth}{\centering \scriptsize  Prompt $=$ \textit{"A wizard casting a spell."}} \\
\includegraphics[width=0.92\linewidth]{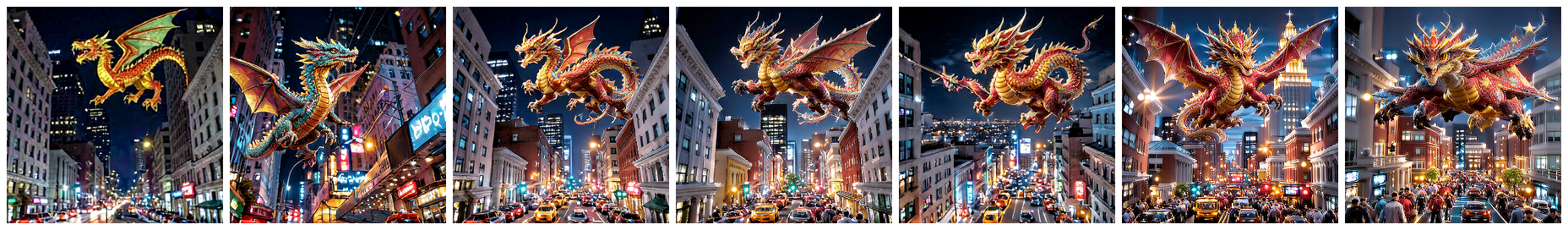} \\
\parbox{0.92\linewidth}{\centering \scriptsize  Prompt $=$ \textit{"A Chinese dragon flying above a busy San Francisco street at night."}} \\
\includegraphics[width=0.92\linewidth]{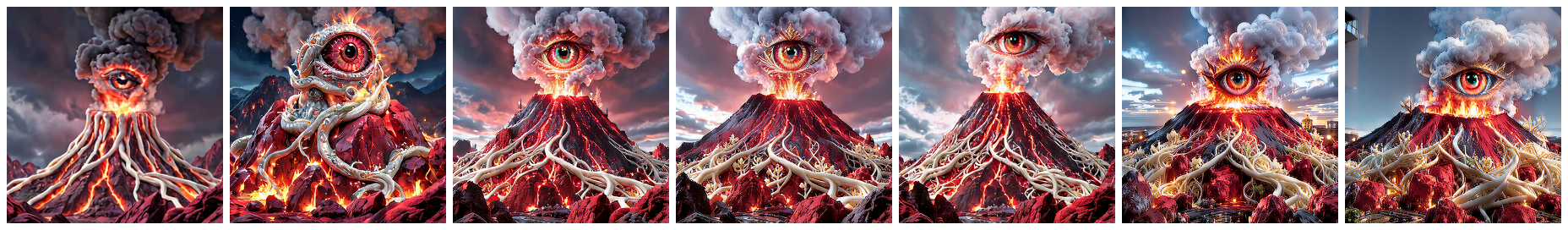} \\
\parbox{1\linewidth}{\centering \scriptsize  Prompt $=$ \textit{"A volcano made of ivory vines and crimson rocks erupts with a smoke resembling a demonic eye."}} \\
\includegraphics[width=0.92\linewidth]{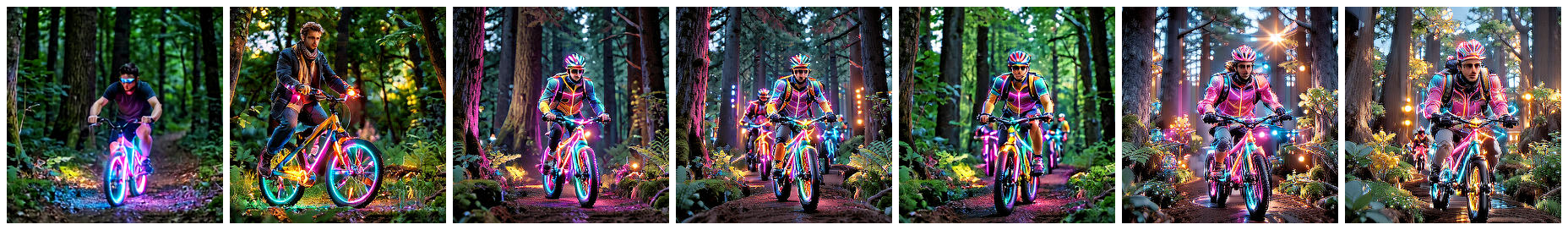} \\
\parbox{1\linewidth}{\centering \scriptsize  Prompt $=$ \textit{"Riding neon bicycles in the woods."}} \\

\end{tabular}
\caption{Methods from left to right: (1) Base, (2) 2nd ODE-AM-Full, (3) 6th ODE-AM-3, (4) 4th ODE-AM-3, (5) 2nd ODE-AM-3, (6) DRaFT-1, (7) ReFL-5 on \textbf{FLUX.2-Klein-4B}.}
\label{fig:flux_grids_1}
\end{figure}

\begin{figure}[ht]
\centering
\setlength{\tabcolsep}{2pt}
\begin{tabular}{c}
\includegraphics[width=0.92\linewidth]{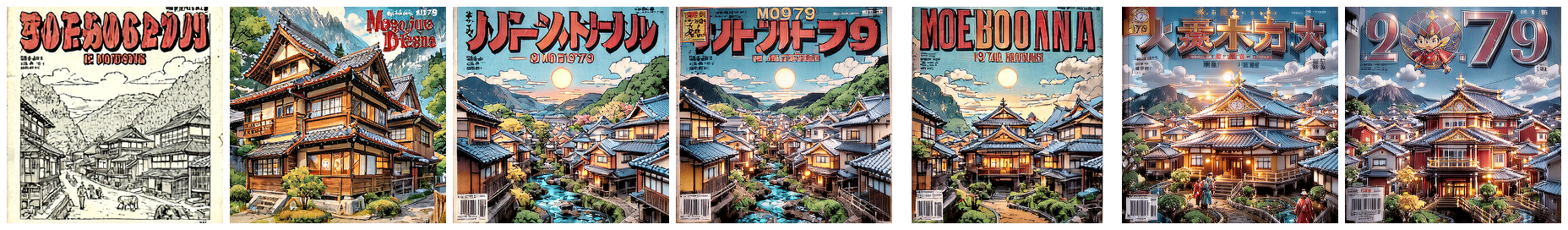} \\
\parbox{1\linewidth}{\centering \scriptsize  Prompt $=$ \textit{"1979 magazine cover featuring a traditional Japanese village in the style of Moebius."}} \\
\includegraphics[width=0.92\linewidth]{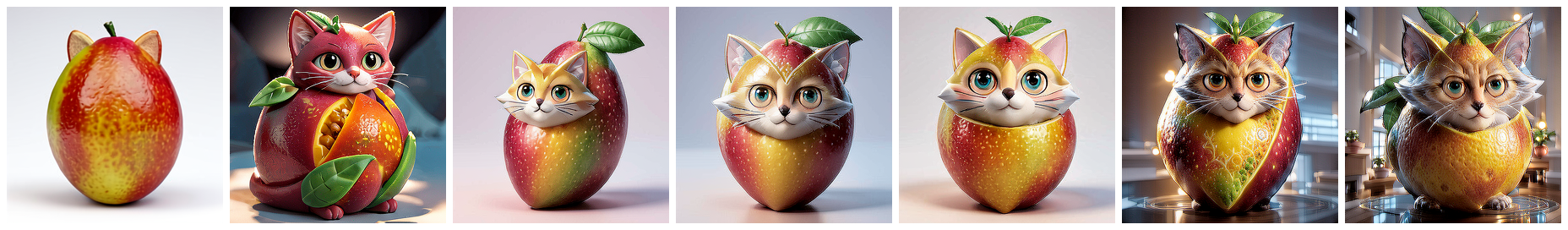} \\
\parbox{1\linewidth}{\centering \scriptsize  Prompt $=$ \textit{"3D cat in shape of mango fruit."}} \\
\includegraphics[width=0.92\linewidth]{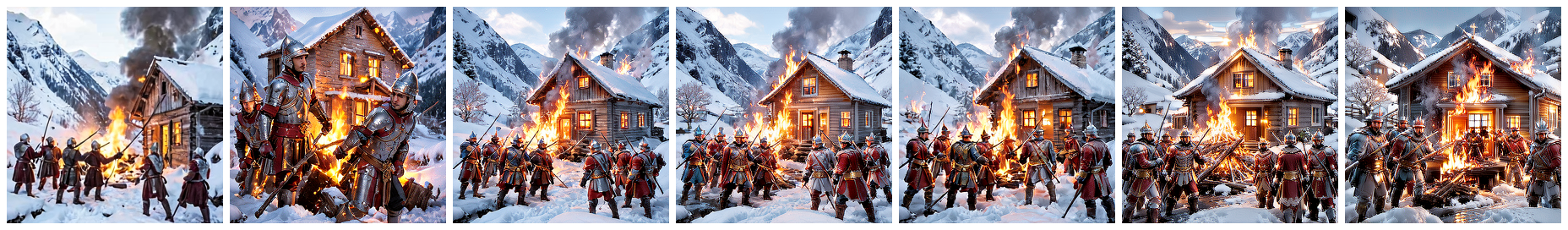} \\
\parbox{1\linewidth}{\centering \scriptsize  Prompt $=$ \textit{"5 roman soldiers burning down a small house in a snow filled valley."}} \\
\includegraphics[width=0.92\linewidth]{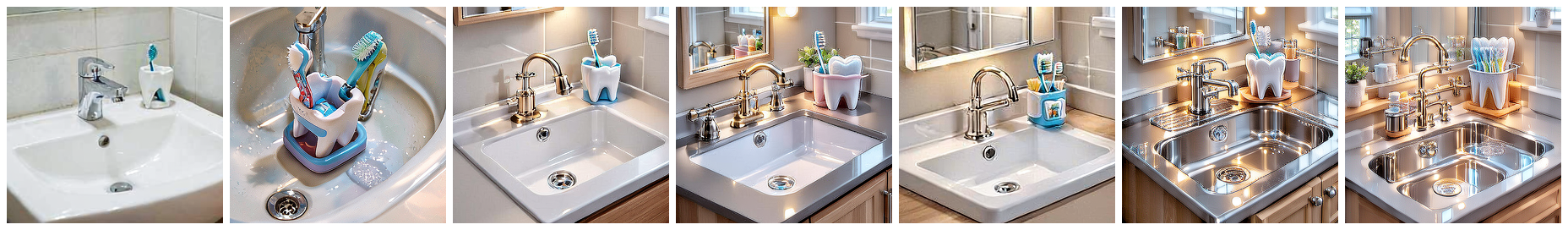} \\
\parbox{1\linewidth}{\centering \scriptsize  Prompt $=$ \textit{"A bathroom sink with a tooth shaped toothbrush holder."}} \\
\includegraphics[width=0.92\linewidth]{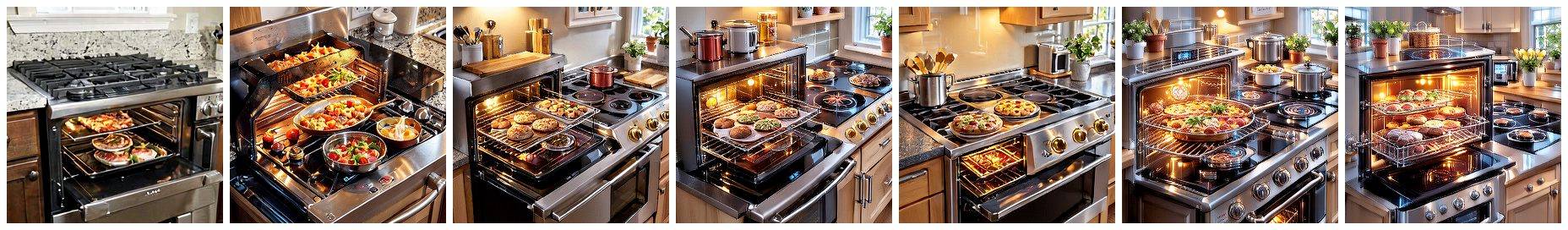} \\
\parbox{1\linewidth}{\centering \scriptsize  Prompt $=$ \textit{"A kitchen range with food cooking in the oven and on the stove top."}} \\
\includegraphics[width=0.92\linewidth]{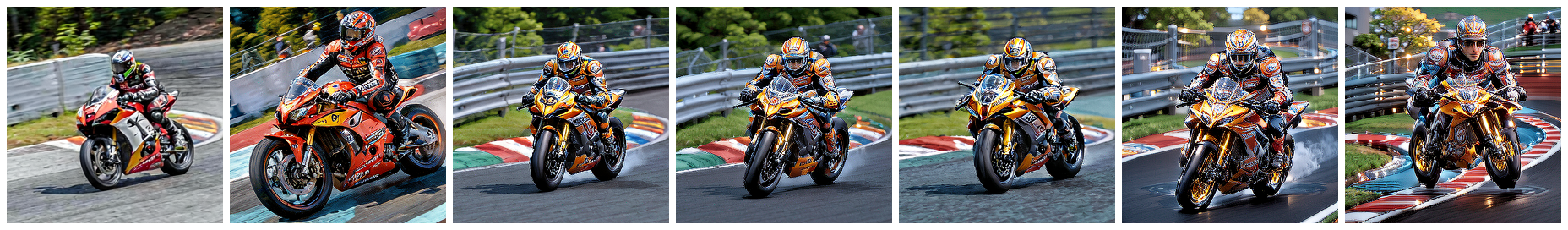} \\
\parbox{1\linewidth}{\centering \scriptsize  Prompt $=$ \textit{"A motorcycle racer is going into a turn."}} \\
\includegraphics[width=0.92\linewidth]{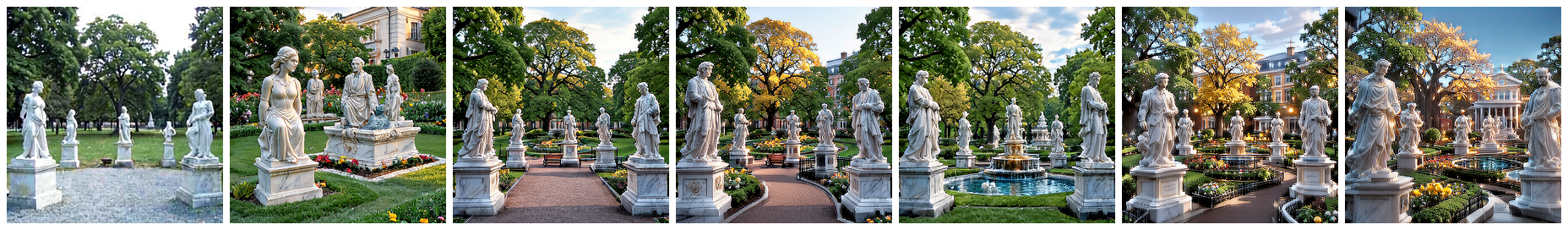} \\
\parbox{1\linewidth}{\centering \scriptsize  Prompt $=$ \textit{"A park with marble statues."}} \\
\includegraphics[width=0.92\linewidth]{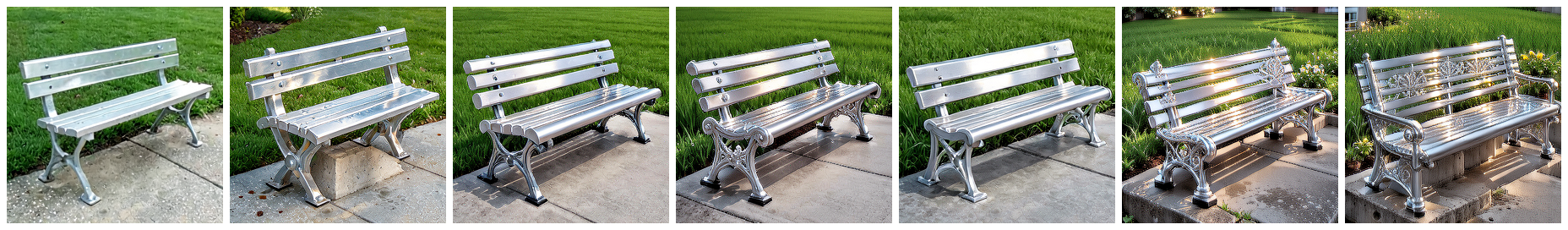} \\
\parbox{1\linewidth}{\centering \scriptsize  Prompt $=$ \textit{"A silver park bench sits in concrete with grass behind."}} \\

\end{tabular}
\caption{Methods from left to right: (1) Base, (2) 2nd ODE-AM-Full, (3) 6th ODE-AM-3, (4) 4th ODE-AM-3, (5) 2nd ODE-AM-3, (6) DRaFT-1, (7) ReFL-5 on \textbf{FLUX.2-Klein-4B}.}
\label{fig:flux_grids_2}
\end{figure}

\clearpage

\clearpage

\end{document}